\newcommand{\la}{\langle}
\newcommand{\ra}{\rangle}
\newcommand{\qvalue}{Q}
\newcommand{\vvalue}{V}
\newcommand{\reward}{r}
\def \CCC {}
\def \algname {\text{FLUTE}}
\def \alg {\mathtt{Alg}}
\title{Uniform-PAC Bounds for Reinforcement Learning with Linear Function Approximation}
\author{%
  Jiafan He \\
  Department of Computer Science\\
 University of California, Los Angeles\\
  CA 90095, USA \\
  \texttt{jiafanhe19@ucla.edu} \\
   \And
    Dongruo Zhou \\
  Department of Computer Science\\
 University of California, Los Angeles\\
  CA 90095, USA \\
  \texttt{drzhou@cs.ucla.edu} \\
     \And
    	Quanquan Gu \\
  Department of Computer Science\\
 University of California, Los Angeles\\
  CA 90095, USA \\
  \texttt{ qgu@cs.ucla.edu} \\
}
\date{}
\begin{document}
\maketitle

\begin{abstract}
We study reinforcement learning (RL) with linear function approximation. Existing algorithms for this problem only have high-probability regret and/or Probably Approximately Correct (PAC) sample complexity guarantees, which cannot guarantee the convergence to the optimal policy. In this paper, in order to overcome the limitation of existing algorithms, we propose a new algorithm called FLUTE, which enjoys uniform-PAC convergence to the optimal policy with high probability. The uniform-PAC guarantee is the strongest possible guarantee for reinforcement learning in the literature, which can directly imply both PAC and high probability regret bounds, making our algorithm superior to all existing algorithms with linear function approximation.
At the core of our algorithm is a novel minimax value function estimator and a multi-level partition scheme to select the training samples from historical observations. Both of these techniques are new and of independent interest. 
\end{abstract}







\section{Introduction}



Designing efficient reinforcement learning (RL) algorithms for environments with large state and action spaces is one of the main tasks in the RL community. To achieve this goal, function approximation, which uses a class of predefined functions to approximate either the value function or transition dynamic, has been widely studied in recent years. Specifically, a series of recent works \citep{jiang2017contextual,jin2019provably,modi2020sample,zanette2020learning,ayoub2020model,zhou2020nearly} have studied RL with linear function approximation with provable guarantees. They show that with linear function approximation, one can either obtain a sublinear regret bound against the optimal value function \citep{jin2019provably,zanette2020learning,ayoub2020model,zhou2020nearly} or a polynomial sample complexity bound \citep{kakade2003sample} (Probably Approximately Correct (PAC) bound for short) in finding a near-optimal policy {\citep{jiang2017contextual,modi2020sample}}.  

However, neither the regret bound or PAC bound is a perfect performance measure. As discussed in detail by \cite{dann2017unifying}, these two measures fail to guarantee the convergence to the optimal policy. Therefore, an algorithm with high probability regret and/or PAC bound guarantees do not necessarily learn the optimal policy, and can perform badly in practice. 
In detail, one can face the following two situations:
\begin{itemize}[leftmargin = *]
    \item An algorithm with a sublinear regret suggests that the summation of the suboptimality gaps $\Delta_t$ (the gap between the values of the current adapted policy and optimal policy, see Definition \ref{def:Regret} for details.) in the first $T$ rounds is bounded by $o(T)$. However, this algorithm may be arbitrarily suboptimal infinitely times \footnote{Suppose the suboptimality gaps satisfy $\Delta_t = \ind\{t = i^2, i = 1,\dots\}$, then the regret in the first $T$ rounds is upper bounded by $O(\sqrt{T})$, and the constant $1$-gap will appear infinitely often }, thus it fails to converge to the optimal policy.  
    \item An algorithm is $(\epsilon,\delta)$-PAC suggests that with probability at least $1-\delta$, the number of suboptimality gaps $\Delta_t$ that are greater than $\epsilon$ will be at most polynomial in $\epsilon$ and $\log(1/\delta)$. The formal definition of $(\epsilon,\delta)$-PAC can be found in Definition \ref{def:pac}. However, this algorithm may still have gaps satisfying $\epsilon/2 < \Delta_t < \epsilon$ infinitely often, thus fails to converge to the optimal policy. 
\end{itemize}

To overcome the limitations of regret and PAC guarantees,  \citet{dann2017unifying} proposed a new performance measure called uniform-PAC, which is a strengthened notion of the PAC framework. Specifically, an algorithm is uniform-PAC if there exists a function of the target accuracy $\epsilon$ and the confidence parameter $\delta$ that upper bounds the number of suboptimality gaps satisfying $\Delta_t>\epsilon$ \emph{simultaneously} for all $\epsilon>0$ with probability at least $1-\delta$. 
The formal definition of uniform-PAC can be found in Definition \ref{def:uni-pac}. Algorithms that are uniform-PAC converge to an optimal policy with high probability, and yield both PAC and high probability regret bounds. In addition, they proposed a UBEV algorithm for learning tabular MDPs, which is uniform-PAC. Nevertheless, UBEV is designed for tabular MDPs, and it is not clear how to incorporate function approximation into UBEV to scale it up for large (or even infinite) state and action space. Therefore, a natural question arises:

\begin{center}
   \emph{Can we design a provable efficient uniform-PAC RL algorithm with linear function approximation?} 
\end{center}
In this work, we answer this question affirmatively. In detail, we propose new algorithms for both contextual linear bandits and linear Markov decision processes (MDPs) \citep{yang2019sample,jin2019provably}. Both of them are uniform-PAC, and their sample complexity is comparable to that of the state-of-the-art algorithms which are not uniform-PAC. Our key contributions are highlighted as follows.
\begin{itemize}[leftmargin = *]
    \item We begin with contextual linear bandits problem as a ``warm-up'' example of the RL with linear function approximation (with horizon length equals $1$). We propose a new algorithm called uniform-PAC OFUL (UPAC-OFUL), and show that our algorithm is uniform-PAC with $\tilde O(d^2/\epsilon^2)$ sample complexity, where $d$ is the dimension of contexts and $\epsilon$ is the accuracy parameter. In addition, this result also implies an $\tilde O(d\sqrt{T})$ regret in the first $T$ round and matches the result of OFUL algorithm \citep{abbasi2011improved} up to a logarithmic factor. 
    The key idea of our algorithm is a novel minimax linear predictor and a multi-level partition scheme to select the training samples from past observations. To the best of our knowledge, this is the first algorithm with a uniform PAC-bound for contextual bandits problems. 
    \item We also consider RL with linear function approximation in episodic linear MDPs, where the transition kernel admits a low-rank factorization. We propose an algorithm dubbed uniForm-PAC Least-sqUare value iTEration ($\algname$), which adapts the novel techniques we developed in the contextual linear bandits setting, and show that our algorithm is uniform-PAC with $\tilde O(d^3H^5/\epsilon^2)$ sample complexity, where $d$ is the dimension of the feature mapping, $H$ is the length of episodes and $\epsilon$ is the accuracy parameter. This result further implies an $\tilde O(\sqrt{d^3H^4T})$ regret in the first $T$ steps and matches the result of LSVI-UCB algorithm \citep{jin2019provably} up to a $\sqrt{H}$-factor, while LSVI-UCB is not uniform-PAC.
    Again, $\algname$ is the first uniform-PAC RL algorithm with linear function approximation. 
\end{itemize}

\noindent\textbf{Notation} 
We use lower case letters to denote scalars, and use lower and upper case bold face letters to denote vectors and matrices respectively. For any positive integer $n$, we denote by $[n]$ the set $\{1,\dots,n\}$. For a vector $\xb\in \RR^d$ , we denote by $\|\xb\|_1$ the Manhattan norm and denote by $\|\xb\|_2$ the Euclidean norm. For a vector $\xb\in \RR^d$ and matrix $\bSigma\in \RR^{d\times d}$, we define $\|\xb\|_{\bSigma}=\sqrt{\xb^\top\bSigma\xb}$. For two sequences $\{a_n\}$ and $\{b_n\}$, we write $a_n=O(b_n)$ if there exists an absolute constant $C$ such that $a_n\leq Cb_n$. We use $\tilde O(\cdot)$ to further hide the logarithmic factors. For logarithmic regret, we use $\tilde O(\cdot)$ to hide all logarithmic terms except $\log T$.

\section{Related Work}

\subsection{Linear bandits}
There is a series of works focusing on the stochastic linear bandits problem. These works can be categorized into two groups: the works aim at providing sublinear regret guarantee
and the works providing PAC bound for linear best-arm identification problem.
In detail, for finite action set with $K$ arms, \citet{auer2002using} proposed a SupLinRel algorithm which achieves an $O(\sqrt{dT \log^3(TK)})$ regret, where $K$ is the number of arms. \citet{chu2011contextual} proposed a SupLinUCB algorithm which has the same regret bound as SupLinRel, but is easier to implement. 
\citet{li2019nearly} proposed a Variable-Confidence-Level (VCL) SupLinUCB algorithm and improved the regret bound to $O(\sqrt{dT \log T \log K})$. For infinite action set, \citet{dani2008stochastic} proposed a Confidence Ball algorithm with an $O(d\sqrt{T\log^3 T})$ regret and proved an $\Omega(d\sqrt{T})$ lower bound.  \citet{abbasi2011improved} proposed OFUL algorithm and improved the regret bound to $O(d\sqrt{T\log^2 T})$. When the reward has a bounded variance,
\citet{zhou2020nearly,zhang2021variance} proposed algorithms with variance-aware confident sets, and obtained tight variance-dependent regret bounds. For the best-arm identification problem, to find an arm which is $\epsilon$-suboptimal, \citet{soare2014best} proposed a G-allocation strategy with an $\tilde O(d/\epsilon^2)$ sample complexity. \citet{karnin2016verification} proposed an Explore-Verify framework which improves the sample complexity by some logarithmic factors. \citet{xu2018fully} proposed a LinGapE algorithm whose sample complexity matches the lower bound up to some $K$ factors. \citet{tao2018best} proposed an ALBA algorithm which improves the sample complexity to have a linear dimension dependence.  \citet{fiez2019sequential} studied transductive linear bandits and proposed an algorithm with sample complexity similar to linear bandits. Compared with best-arm identification, the contextual bandits setting we focus on is more challenging since the action set will change at each round. 

\subsection{RL with linear function approximation}
Recently, a line of work focuses on analyzing RL with linear function approximation. To mention a few, 
\citet{jiang2017contextual} studied MDPs with low Bellman rank and proposed an OLIVE algorithm, which has the PAC guarantee.
\citet{yang2019sample} studied the linear transition model and proposed a sample-optimal Q-learning method with a generative model. \citet{jin2019provably} studied the linear MDP model and proposed an LSVI-UCB algorithm under the online RL setting (without a generative model) with $\tilde O(\sqrt{d^3H^3T})$ regret. Later, \citet{zanette2019tighter} studied the low inherent Bellman error model and proposed an ELEANOR algorithm with a better regret $\tilde O(dH\sqrt{T})$ using a global planning oracle. 
\citet{modi2020sample} studied the linearly combined model ensemble and proposed a provable sample-efficient algorithm. 
\citet{jia2020model, ayoub2020model} studied the linear mixture MDPs and proposed a UCRL-VTR algorithm with an $\tilde O(d\sqrt{H^3T})$ regret. Recently \citet{zhou2020nearly} improved the regret bound to $\tilde O(dH\sqrt{T})$ with a new algorithm design and a new Bernstein inequality. However, all of these works aim at deriving PAC sample complexity guarantee or regret bound, and none of them has the uniform PAC guarantee for learning MDPs with linear function approximation. Our work will fill this gap in the linear MDP setting \citep{yang2019sample,jin2019provably}.

\section{Preliminaries}
We consider episodic Markov Decision Processes (MDPs) in this work. Each episodic MDP is denoted by a tuple $M(\cS, \cA, H, \{\reward_h\}_{h=1}^H, \{\PP_h\}_{h=1}^H)$. Here, $\cS$ is the state space, $\cA$ is the finite action space,  $H$ is the length of each episode, $\reward_h: \cS \times \cA \rightarrow [0,1]$ is the reward function at stage $h$ and $\PP_h(s'|s,a) $ is the transition probability function at stage $h$ which denotes the probability for state $s$ to transfer to state $s'$ with action $a$ at stage $h$. A policy $\pi: \cS \times [H] \rightarrow \cA$ is a function which maps a state $s$ and the stage number $h$ to an action $a$. 
For any policy $\pi$ and stage $h\in [H]$, we define the action-value function $\qvalue_h^{\pi}(s,a)$ and value function $\vvalue_h^{\pi}(s)$ as follows
\begin{align}
\qvalue^{\pi}_h(s,a) &=\reward_h(s,a) + \EE\bigg[\sum_{h'=h+1}^H \reward_{h'}\big(s_{h'}, \pi(s_{h'},h')\big)\big| s_h=s,a_h=a\bigg],\ \vvalue_h^{\pi}(s) = \qvalue_h^{\pi}(s, \pi(s,h)),\notag
\end{align}
where $s_{h'+1}\sim \PP_h(\cdot|s_{h'},a_{h'})$. 
We define the optimal value function $V_h^*$ and the optimal action-value function $\qvalue_h^*$ as $V_h^*(s) = \max_{\pi}\vvalue_h^{\pi}(s)$ and $\qvalue_h^*(s,a) = \max_{\pi}\qvalue_h^{\pi}(s,a)$. By definition, the value function $\vvalue_h^{\pi}(s)$ and action-value function $\qvalue_h^{\pi}(s,a)$ are bounded in $[0,H]$. 
For any function $\vvalue: \cS \rightarrow \RR$, we denote $[\PP_h \vvalue](s,a)=\EE_{s' \sim \PP_h(\cdot|s,a)}\vvalue(s')$. Therefore, for each stage $h\in[H]$ and policy $\pi$, we have the following Bellman equation, as well as the Bellman optimality equation:
\begin{align}
    \qvalue_h^{\pi}(s,a) &= \reward_h(s,a) + [\PP_h\vvalue_{h+1}^{\pi}](s,a),\ \qvalue_h^{*}(s,a) = \reward_h(s,a) + [\PP_h\vvalue_{h+1}^{*}](s,a),\label{eq:bellman}
\end{align}
where $\vvalue^{\pi}_{H+1}=\vvalue^{*}_{H+1}=0$. At the beginning of the episode $k$, the agent determines a policy $\pi_k$ to be followed in this episode. At each stage $h\in[H]$, the agent observes the state $s_h^k$, chooses an action following the policy $\pi_k$ and observes the next state with $s_{h+1}^k \sim \PP_h(\cdot|s_h^k,a_h^k)$.

We consider linear function approximation in this work. Therefore, we make the following linear MDP assumption, which is firstly proposed in \cite{yang2019sample,jin2019provably}. 
\begin{assumption}\label{assumption-linear}
MDP $\cM(\cS, \cA, H, \{\reward_h\}_{h=1}^H, \{\PP_h\}_{h=1}^H)$ is a linear MDP such that for any stage $h\in[H]$, there exists an unknown vector $\bmu_h$, an unknown measure $\btheta_h(\cdot): \cS\rightarrow \RR^d$ and a known feature mapping $\bphi: \cS \times \cA \rightarrow \RR^d$, such that for each $(s,a) \in \cS \times \cA$ and $s' \in \cS$, 
\begin{align}
    \PP_h(s'|s,a) = \big\la \bphi(s,a), \btheta_h(s')\big\ra,\reward_h(s,a)=\big\la \bphi(s,a),\bmu_h\big\ra.\notag
\end{align}
\end{assumption}
For simplicity, we assume that $\bmu_h$, $\btheta_n(\cdot)$ and $\bphi(\cdot,\cdot)$ satisfy $\|\bphi(s,a)\|_2 \leq 1$ for all $s,a$, $\|\bmu_h\|_2 \leq \sqrt{d}$ and $\big\|\btheta_h(\cS)\big\|_2\leq \sqrt{d}$. The linear MDP assumption automatically suggests that for any policy $\pi$, the action-value function $Q^\pi_h$ is always a linear function of the given feature mapping $\bphi$, which is summarized in the following proposition. 
\begin{proposition}[Proposition 2.3, \cite{jin2019provably}]\label{prop:linearq}
For any policy $\pi$, there exist weights $\{\wb_h^\pi\}_{h=1}^H$ such that for any $s,a,h \in \cS \times \cA \times [H]$, $Q_h^\pi(s,a) = \la \bphi(s,a), \wb_h^\pi\ra$. 
\end{proposition}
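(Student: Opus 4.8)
The plan is to prove the claim directly from the Bellman equation \eqref{eq:bellman} and the linear factorization in Assumption~\ref{assumption-linear}; no induction on $h$ is really needed, since for any policy $\pi$ the value function $V_{h+1}^\pi$ is, by its very definition, a fixed real-valued function on $\cS$ bounded in $[0,H]$ (with $V_{H+1}^\pi \equiv 0$), so all the integrals below are automatically well defined.

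First I would fix an arbitrary policy $\pi$ and an arbitrary stage $h \in [H]$ and start from the Bellman equation $Q_h^\pi(s,a) = \reward_h(s,a) + [\PP_h V_{h+1}^\pi](s,a)$. The reward term is already linear in $\bphi$: Assumption~\ref{assumption-linear} gives $\reward_h(s,a) = \la \bphi(s,a), \bmu_h\ra$. For the transition term I would write $[\PP_h V_{h+1}^\pi](s,a) = \int_{\cS} V_{h+1}^\pi(s')\,\PP_h(\mathrm{d}s'\mid s,a) = \int_{\cS} V_{h+1}^\pi(s')\,\la \bphi(s,a), \btheta_h(\mathrm{d}s')\ra$, and then pull the fixed vector $\bphi(s,a)$ out of the integral to obtain $\big\la \bphi(s,a), \int_{\cS} V_{h+1}^\pi(s')\,\btheta_h(\mathrm{d}s')\big\ra$. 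Adding the two pieces and setting $\wb_h^\pi := \bmu_h + \int_{\cS} V_{h+1}^\pi(s')\,\btheta_h(\mathrm{d}s')$ — a vector in $\RR^d$ that does not depend on $(s,a)$ — yields $Q_h^\pi(s,a) = \la \bphi(s,a), \wb_h^\pi\ra$ for all $(s,a)\in\cS\times\cA$, which is exactly the assertion; repeating this for every $h\in[H]$ produces the whole family $\{\wb_h^\pi\}_{h=1}^H$. If desired, one can additionally record the a priori bound $\|\wb_h^\pi\|_2 \le \|\bmu_h\|_2 + \|V_{h+1}^\pi\|_\infty \big\|\btheta_h(\cS)\big\|_2 = O(H\sqrt{d})$, using $\|\bmu_h\|_2\le\sqrt d$, $\|\btheta_h(\cS)\|_2\le\sqrt d$ and $V_{h+1}^\pi\in[0,H]$.

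The only step that deserves a word of justification — and the closest thing to an obstacle here — is the exchange of the inner product with the integral over $\cS$ in the transition term: when $\cS$ is finite or countable this is merely linearity of a finite (or absolutely convergent) sum $\sum_{s'}V_{h+1}^\pi(s')\btheta_h(s')$, and in the general case it holds because $\bphi(s,a)$ is a fixed $d$-dimensional vector while $V_{h+1}^\pi$ is bounded and measurable, so each coordinate integral $\int_{\cS} V_{h+1}^\pi(s')\,[\btheta_h(\mathrm{d}s')]_i$ is finite and the swap is valid coordinatewise. Everything else is a one-line computation, so I do not anticipate any genuine difficulty.
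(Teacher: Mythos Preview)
Your argument is correct and is exactly the standard proof of this fact: in the present paper the proposition is simply quoted from \cite{jin2019provably} without proof (and reappears as Lemma~\ref{lemma:qvalue-linear} with the norm bound), and the original proof there proceeds precisely as you do, defining $\wb_h^\pi = \bmu_h + \int_{\cS} V_{h+1}^\pi(s')\,\btheta_h(\mathrm{d}s')$ from the Bellman equation and the linear factorization. There is nothing to add.
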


Next we define the regret and $(\epsilon, \delta)$-PAC formally. 
\begin{definition}\label{def:Regret} 
For an RL algorithm $\alg$, we define its regret on learning an MDP $M(\cS, \cA, H, \reward, \PP)$ in the first $K$ episodes as the sum of the suboptimality for episode $k = 1,\ldots, K$, 
\begin{align}
    \text{Regret}(K) = \sum_{k=1}^K \vvalue_1^*(s_1^k) - \vvalue_1^{\pi_k}(s_1^k),\notag
\end{align}
where $\pi_k$ is the policy in the $k$-th episode .
\end{definition}
\begin{definition}\label{def:pac}
For an RL algorithm $\alg$  and a fixed $\epsilon$, let $\pi_1,\pi_2,\dots$ be the policies generated by $\alg$. Let $N_{\epsilon} = \sum_{k=1}^\infty \ind\{\vvalue_1^*(s_1^k) - \vvalue_1^{\pi_k}(s_1^k) > \epsilon\}$ be the number of episodes whose suboptimality gap is greater than $\epsilon$. Then we say $\alg$ is $(\epsilon, \delta)$-PAC with sample complexity $f(\epsilon, \delta)$ if 
\begin{align}
    \PP(N_{\epsilon} > f(\epsilon, \delta)) \leq \delta.\notag
\end{align}
\end{definition}

\begin{remark}
\citet{dann2017unifying} suggested that an algorithm with a sublinear regret is not necessarily to be an $(\epsilon,\delta)$-PAC algorithm. However, with some modification, \citet{jin2018q} showed that any algorithm with a sublinear regret can be converted to a \emph{new} algorithm which is $(\epsilon,\delta)$-PAC, which does not contradict with the claim by \cite{dann2017unifying}. For example, \citet{azar2011speedy} and \citet{zhang2020almost} proposed algorithms with $\tilde O(\sqrt{SAHT})$ regret, and both algorithms can be converted into new algorithms which are $(\epsilon,\delta)$-PAC with sample complexity $\tilde O(SAH^2/\epsilon^2)$.



\end{remark}

Both regret and PAC guarantees are not perfect. As \citet{dann2017unifying} showed, an algorithm with sub-linear regret or $(\delta,\epsilon)$-PAC bound may fail to converge to the optimal policy. For an $(\delta,\epsilon)$-PAC algorithm with $\Delta_t=\epsilon/2 (t\in \NN)$, it still has linear regret $O(\epsilon T)$ and will never converge to the optimal policy. For an algorithm with a sub-linear regret bound, a constant sub-optimality gap may still occur infinite times. Therefore, \citet{dann2017unifying} proposed uniform-PAC algorithms, which are defined formally as follows.
\begin{definition}\label{def:uni-pac}
For an RL algorithm $\alg$, let $\pi_1,\pi_2,\dots$ be the policies generated by $\alg$. Let $N_{\epsilon} = \sum_{k=1}^\infty \ind\{\vvalue_1^*(s_1^k) - \vvalue_1^{\pi_k}(s_1^k) > \epsilon\}$ be the number of episodes whose suboptimality gap is greater than $\epsilon$. We say $\alg$ is uniform-PAC for some $\delta \in (0,1)$  with sample complexity $f(\epsilon, \delta)$ if 
\begin{align}
    \PP(\exists\epsilon>0,\ N_{\epsilon} > f(\epsilon, \delta)) \leq \delta.\notag
\end{align}
\end{definition}


The following theorem suggests that a uniform-PAC algorithm is automatically a PAC algorithm and an algorithm with sublinear regret.

\begin{theorem}[Theorem 3, \citep{dann2017unifying}]\label{theorem: pac-transfer}
If an algorithm $\alg$ is uniform-PAC for some $\delta \ge 0$, with sample complexity $\tilde O(C_1/\epsilon+ C_2/\epsilon^2)$, where $C_1,C_2$ are constant and depend only on $S,A,H,\log(1/\delta)$. Then, we have the following results:
\begin{itemize}[leftmargin = *]
    \item 1: $\alg$ will converge to optimal policies with high probability at least $1-\delta$: $\PP\big(\lim_{k \rightarrow +\infty} \vvalue_1^*(s_1^k)-\vvalue_1^{\pi_k}(s_1^k)=0\big)\ge 1-\delta$ 
    \item 2: With probability at least $1-\delta$, for each $K\in \NN$, the regret for $\alg$ in the first $K$ episodes is upper bounded by $\tilde O(\sqrt{C_2K}+C_1+C_2)$.
    \item 3: For each $\epsilon\ge 0$, $\alg$ is also $(\epsilon,\delta)$-PAC with the same sample complexity $\tilde O(C_1/\epsilon+ C_2/\epsilon^2)$.
\end{itemize}
\end{theorem}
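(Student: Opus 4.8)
The plan is to fix, once and for all, the single high-probability event on which the uniform-PAC guarantee holds and then derive all three conclusions by purely deterministic reasoning. Concretely, let $E$ be the event $\{\forall \epsilon>0,\ N_\epsilon \le f(\epsilon,\delta)\}$, where $f(\epsilon,\delta)=\tilde O(C_1/\epsilon+C_2/\epsilon^2)$; by Definition~\ref{def:uni-pac} we have $\PP(E)\ge 1-\delta$. Conclusion~3 is then immediate: for any fixed $\epsilon\ge 0$ the event $\{N_\epsilon>f(\epsilon,\delta)\}$ is contained in $E^c$, so $\PP(N_\epsilon>f(\epsilon,\delta))\le\PP(E^c)\le\delta$, which is exactly the $(\epsilon,\delta)$-PAC property of Definition~\ref{def:pac}.

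For Conclusion~1 I would argue by contradiction on $E$. Write $\Delta_k=\vvalue_1^*(s_1^k)-\vvalue_1^{\pi_k}(s_1^k)\in[0,H]$. If $\Delta_k\not\to 0$ for some outcome in $E$, then $\limsup_{k}\Delta_k>0$, so there is a fixed $\epsilon_0>0$ with $\Delta_k>\epsilon_0$ for infinitely many $k$, hence $N_{\epsilon_0}=\infty$. This contradicts $N_{\epsilon_0}\le f(\epsilon_0,\delta)<\infty$, which holds on $E$. Therefore $\lim_k\Delta_k=0$ everywhere on $E$, and since $\PP(E)\ge 1-\delta$ this proves Conclusion~1.

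The substance of the proof is Conclusion~2, the anytime (all-$K$ simultaneously) regret bound, which I would establish on $E$ by a layer-cake / dyadic-peeling argument. Using $\Delta_k=\int_0^{\Delta_k}\mathrm dx$ and Fubini,
\begin{align}
\text{Regret}(K)=\sum_{k=1}^K\Delta_k=\int_0^H\big|\{k\le K:\Delta_k>x\}\big|\,\mathrm dx\le\int_0^H\min\!\big(K,\ f(x,\delta)\big)\,\mathrm dx,
\end{align}
since on $E$ the count $\big|\{k\le K:\Delta_k>x\}\big|$ is at most $\min(K,N_x)\le\min(K,f(x,\delta))$; equivalently one may bucket the episodes with $\Delta_k\in(2^{-\ell}H,2^{-\ell+1}H]$, bound each bucket's cardinality by $\min(K,f(2^{-\ell}H,\delta))$, and sum $2^{-\ell+1}H$ times this over $\ell\ge1$. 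I would then split the integral at the threshold $x^\star$ where $f(x^\star,\delta)$ crosses $K$: for $x<x^\star$ the integrand is bounded by $K$, contributing at most $Kx^\star$, and for $x\ge x^\star$ one integrates $f(x,\delta)=\tilde O(C_1/x+C_2/x^2)$, giving $\tilde O(C_1+C_2/x^\star)$. Since $f(x^\star,\delta)=K$ forces, up to logarithmic factors, $x^\star\le\max(C_1/K,\sqrt{C_2/K})$ and $x^\star\ge\sqrt{C_2/K}$, one gets $Kx^\star=\tilde O(C_1+\sqrt{C_2K})$ and $C_2/x^\star\le\sqrt{C_2K}$, while the small-$K$ regime $K=O(C_2/H^2)$ (where $x^\star$ would exceed $H$) is covered by the trivial bound $\text{Regret}(K)\le HK=O(C_2)$. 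Combining the pieces yields $\text{Regret}(K)=\tilde O(\sqrt{C_2K}+C_1+C_2)$ simultaneously for all $K$ on $E$, which is Conclusion~2.

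I expect Conclusion~2 to be the main obstacle: reducing the various cases to the single clean expression $\tilde O(\sqrt{C_2K}+C_1+C_2)$ uniformly in $K$ requires choosing $x^\star$ correctly, keeping track of whether the $C_1/x$ or the $C_2/x^2$ term dominates in each regime, and verifying that the polylogarithmic factors hidden in the $\tilde O(\cdot)$ of $f$ contribute only an extra logarithm after integration (they do, as $\int \mathrm{polylog}(1/x)/x\,\mathrm dx$ is again polylogarithmic). Conclusions~1 and~3, by contrast, are short bookkeeping once the event $E$ is fixed.
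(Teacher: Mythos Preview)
The paper does not supply its own proof of this statement; Theorem~\ref{theorem: pac-transfer} is quoted from \citet{dann2017unifying} and used only as a black box, so there is nothing in the paper to compare against. Your proposal is correct and is the standard argument: Conclusions~1 and~3 are immediate bookkeeping on the uniform-PAC event $E$, and Conclusion~2 follows from the layer-cake identity $\text{Regret}(K)=\int_0^H|\{k\le K:\Delta_k>x\}|\,\mathrm dx\le\int_0^H\min(K,f(x,\delta))\,\mathrm dx$, split at the scale $x^\star$ where $f(x^\star,\delta)=K$, with the case analysis and small-$K$ regime handled exactly as you describe.
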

Theorem~\ref{theorem: pac-transfer} suggests that uniform-PAC is stronger than both the PAC and regret guarantees. In the remainder of this paper, we aim at developing uniform-PAC RL algorithms with linear function approximation.

\section{Warm up: Uniform-PAC Bounds for Linear Bandits}

To better illustrate the idea of our algorithm, in this section, we consider a contextual linear bandits problem, which can be regarded as a special linear MDP with $H = 1$. Let $\{\cD_k\}_{k=1}^\infty$ be a fixed sequence of decision/action sets. At round $k$, the agent selects an action $\xb_k \in \cD_k$ by the algorithm $\mathcal{H}$ and then observes the reward $r_k = \la \bmu^*, \xb_k\ra + \epsilon_k$, where $\bmu^* \in \RR^d$ is a vector unknown to the agent and $\epsilon_k$ is a sub-Gaussian random noise. $\xb_k,\epsilon_k,\bmu^*$ satisfy the following properties:
\begin{align}
    \forall k\in \NN, \lambda \in \RR,\  \EE\big[e^{\lambda\epsilon_k}|\xb_{1:k}, \epsilon_{1:k-1}\big] \leq \exp(\lambda^2/2), \|\xb_k\|_2 \leq 1, \|\bmu^*\|_2\leq 1. \label{def:eee}
\end{align}

Our goal is to design an $(\epsilon, \delta)$-uniform-PAC algorithm with sample complexity $f(\epsilon,\delta)$ such that 
\begin{align}
    \PP\bigg(\exists \epsilon >0,\ \sum_{k=1}^\infty\ind\bigg\{\Delta_k: = \max_{\xb \in \cD_k}\la \bmu^*, \xb\ra - \la \bmu^*, \xb_k\ra>\epsilon\bigg\} > f(\epsilon, \delta)\bigg) < \delta,\notag
\end{align}
where $\Delta_k: = \max_{\xb \in \cD_k}\la \bmu^*, \xb\ra - \la \bmu^*, \xb_k\ra$ denotes the suboptimality at round $k$.

\CCC{Here we assume the weight vector $\bmu^*$ satisfies $\|\bmu^*\|_2\leq 1$, to be consistent with the assumption made in \citet{abbasi2011improved}. 
Our assumption can be easily relaxed to the general $\|\bmu^*\|_2\leq B$ case  with an additional $\log B$ factor in the sample complexity, as can be seen in the following analysis.
} 
\begin{algorithm*}[t]
	\caption{Uniform-PAC OFUL (UPAC-OFUL)}\label{algorithm-bandit}
	\begin{algorithmic}[1]
    \REQUIRE Regularization parameter $\lambda$, confidence radius $\beta_l(l\in \NN)$
    \STATE Set $\cC^l \leftarrow \emptyset, l \in \NN$ and the total level $S_1=1$
	\FOR {round $k=1,2,..$}
	   
	        \FOR {all level $l\in[S_k]$}
	            \STATE Set $\bSigma_{k}^l=\lambda \Ib+\sum_{i\in \cC^l} \xb_i\xb_i^\top$\label{algorithm1-line4}
	            \STATE Set $\bbb_{k}^l=\sum_{i\in \cC^l} \xb_i\reward_i$ and $\wb_{k}^{l}= (\bSigma_{k}^l)^{-1}\bbb_{k}^l$\label{algorithm1-line5}
	        \ENDFOR

	    \STATE Receive the action set $\cD_k$
	    \STATE Choose action $\xb_k\leftarrow \argmax_{\xb \in \cD_k} \min_{1\leq l\leq S_k} (\wb_{k}^{l})^{\top}\xb+\beta_{l}\sqrt{\xb^{\top}(\bSigma_{k}^l)^{-1} \xb}$\label{algorithm1-line8}
	        \STATE Set level $l_k=1$ \label{algorithm1-line9}
	        \WHILE{$\sqrt{\xb_k^{\top}(\bSigma_{k}^{l_k})^{-1}\xb_k}\leq 2^{-{l_k}}$ and $l_k\leq S_k$} \label{algorithm1-line10}
	            \STATE $l_k\leftarrow l_k +1$ \label{algorithm1-line11}
	        \ENDWHILE
	        \STATE Add the new element $k$ to the set $\cC^{l_k}$ and receive the reward $\reward_k$ \label{algorithm1-line13}
	         \STATE Set the total level $S_{k+1}$ as $S_{k+1}=\max_{l:|\cC^l|>0}l$
	\ENDFOR

	\end{algorithmic}
\end{algorithm*}

\noindent\textbf{Why existing algorithms fail to be uniform-PAC?}
Before proposing our algorithm, it is natural to ask whether existing methods have already been uniform-PAC. We take OFUL \citep{abbasi2011improved} for example, which is the state-of-the-art linear bandit algorithm in our setting. At round $k$, OFUL constructs an optimistic estimation of the true linear function $\la \bmu^*, \xb\ra$, by doing linear regression over all past $k$ selected actions $\xb_i, 1 \leq i \leq k$ and their corresponding rewards. The optimistic estimation has a closed-form as the summation of the linear regression predictor and a quadratic confidence bound $\wb_k^{\top}\xb+\alpha\sqrt{\xb^{\top} \bSigma_k^{-1}\xb},$ where $\bSigma_k= \lambda \Ib+\sum_{i=1}^{k-1}\xb_i \xb_i^{\top}$  \citep{li2010contextual}. 
Following the standard analysis of OFUL in \cite{abbasi2011improved}, we obtain the following upper confidence bound of the suboptimality gap $\Delta_k$:
\begin{align}
    \text{With probability at least }1-\delta,\ \forall k>0,\ \Delta_k  = O\big(\sqrt{d \log (k/\delta)}\|\xb_k\|_{\bSigma_{k}^{-1}}\big),\label{xx}
\end{align}
where the $\log k$ is due to the fact that OFUL makes use of \emph{all} past $k$ observed actions. Since the agent can only say whether an arm is good or not based on the confidence bound of $\Delta_k$, due to the existence of the $\log k$ term in \eqref{xx}, the bounds on the suboptimality gap for the ``good'' arms may be large (since $\log k$ grows as $k$ increases). That makes the agent fail to recognize those ``good'' arms and instead pull the ``bad'' arms infinite times, which suggests that OFUL is not a uniform-PAC algorithm. For other algorithms, they either need to know the total round $T$ before running the algorithm \citep{chu2011contextual}, or need to assume that the decision sets $\cD_k$ are identical (e.g., algorithms for best-arm identification \cite{soare2014best}), thus none of them fits into our setting. 

\noindent\textbf{Key techniques of our algorithm.} In order to address the aforementioned issue, we proposed UPAC-OFUL in Algorithm \ref{algorithm-bandit}. The key idea of Algorithm \ref{algorithm-bandit} is to divide all the historical observed data into non-overlapping sets $\cC^l$, while each $\cC^l$ only includes \emph{finite} past historical observed actions. This helps successfully avoid the $\log k$ term appearing in the confidence bound in \cite{abbasi2011improved}. Then at round $k$, Algorithm \ref{algorithm-bandit} only constructs optimistic estimation of $\la \bmu^*, \xb_k\ra$ over the first $S_k$ sets $\cC^l$ individually, where $S_k$ is the number of non-empty sets $\cC^l$. In detail, the optimistic estimation over $\cC^l$ has the form
\begin{align}
    &(\wb_{k}^{l})^{\top}\xb+\beta_{l}\sqrt{\xb^{\top}(\bSigma_{k}^l)^{-1} \xb},\label{opt:bandit}
\end{align}
where $\bSigma_{k}^l$ is the covariance matrix for actions in set $\cC^l$ (Line \ref{algorithm1-line4}), and $\wb_{k}^{l}$ is the estimation of $\bmu^*$ obtained by ridge regression defined in Line \ref{algorithm1-line5}. Meanwhile, for a newly selected action $\xb_k$, Algorithm~\ref{algorithm-bandit} needs to decide which $\cC^l$ it should be added to. Inspired by \cite{chu2011contextual}, Algorithm \ref{algorithm-bandit} tests the ``uncertainty'' of $\xb_k$ against $\cC^l$, by calculating its confidence bound $\|\xb_k\|_{(\bSigma_{k}^{l})^{-1}}$. Then Algorithm \ref{algorithm-bandit} adds $\xb_k$ to the \emph{lowest possible} level where the ``uncertainty'' is larger than a certain threshold (i.e., $\|\xb_k\|_{(\bSigma_{k}^{l})^{-1}}>2^{-l}$). Such a selection rule guarantees two things simultaneously. First, it ensures that the cardinality of each $\cC^l$ is finite, due to the fact that the summation of $\|\xb_k\|_{(\bSigma_{k}^{l})^{-1}}$ can be properly bounded. Second, it also guarantees that the "uncertainty" of the reward corresponding to $\xb_k$ is still small, since by the level selection rule we have $\|\xb_k\|_{(\bSigma_{k}^{l-1})^{-1}}\leq 2^{-(l-1)}$. 
Lastly, to make use of all $S_k$ optimistic estimations, Algorithm \ref{algorithm-bandit} constructs the final predictor as the minimal value of $S_k$ individual predictor \eqref{opt:bandit} over $\cC^l$ (Line \ref{algorithm1-line8}). 
Since each individual predictor is a valid upper bound of the true function, the minimum of them is still valid and tighter than each of them (except the smallest one), which makes it possible to provide a stronger uniform-PAC guarantee.

The following theorem shows that Algorithm \ref{algorithm-bandit} is indeed uniform-PAC.

\begin{theorem}\label{thm-bandit}
For any $\delta \in (0,1)$, if we set $\lambda=1$ and $\beta_l=6\sqrt{dl\log (dl/\delta)}$ for every level $l\in \NN $, then there exists a constant $C$ such that with probability at least $1-\delta$, for all $\epsilon> 0$,  the number of rounds in Algorithm \ref{algorithm-bandit} which have sub-optimality no less than $\epsilon$ is bounded by
\begin{align}
     \sum_{k=1}^\infty\ind\bigg\{\max_{\xb \in \cD_k}\la \bmu^*, \xb\ra - \la \bmu^*, \xb_k\ra>\epsilon\bigg\} \leq  \frac{Cd^2\log^3\big(d/(\delta\epsilon)\big)}{\epsilon^2}.\notag
\end{align}
\end{theorem}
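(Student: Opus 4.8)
The plan is to follow the multi-level analysis pioneered by \citet{chu2011contextual}, adapted to handle the uniform-PAC requirement. First I would establish a \emph{good event} on which all the per-level optimistic estimates are valid. Concretely, for each level $l$ and each round $k$, by the self-normalized concentration inequality of \citet{abbasi2011improved} applied to the (at most $|\cC^l|$) samples in $\cC^l$, we have $|(\wb_k^l)^\top\xb - \la\bmu^*,\xb\ra| \le \beta_l\|\xb\|_{(\bSigma_k^l)^{-1}}$ for all $\xb$ simultaneously, with failure probability controlled by $\beta_l = 6\sqrt{dl\log(dl/\delta)}$; a union bound over $l\in\NN$ (the series $\sum_l \delta/(dl)^2$-type tail is summable) makes this hold for all levels at once with probability $\ge 1-\delta$. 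The crucial point — and the reason the $\log k$ of vanilla OFUL disappears — is that $\beta_l$ depends only on the level index $l$, not on the round $k$, because each $\cC^l$ is fed only the subset of actions routed to it.

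Second, on the good event I would show optimism and a per-round suboptimality bound. Since each individual estimate $(\wb_k^l)^\top\xb + \beta_l\|\xb\|_{(\bSigma_k^l)^{-1}}$ upper-bounds $\la\bmu^*,\xb\ra$, so does their minimum over $l\in[S_k]$; hence the chosen $\xb_k$ (the argmax of this minimum over $\cD_k$) satisfies $\min_{l\le S_k}\big[(\wb_k^l)^\top\xb_k+\beta_l\|\xb_k\|_{(\bSigma_k^{l})^{-1}}\big] \ge \max_{\xb\in\cD_k}\la\bmu^*,\xb\ra$. Combining with validity at the level $l_k-1$ to which the while-loop in Lines~\ref{algorithm1-line10}--\ref{algorithm1-line11} compared $\xb_k$, and using the stopping condition $\|\xb_k\|_{(\bSigma_k^{l_k-1})^{-1}}\le 2^{-(l_k-1)}$, one gets $\Delta_k \le 2\beta_{l_k}\|\xb_k\|_{(\bSigma_k^{l_k})^{-1}} \le 4\beta_{l_k} 2^{-l_k}$ (handling separately the boundary case where $\xb_k$ reaches the top level $S_k$; there $\|\xb_k\|_{(\bSigma_k^{S_k})^{-1}}$ is still $\le 2^{-(S_k-1)}$ or a new level is opened). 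Thus a round with $\Delta_k>\epsilon$ forces $2^{-l_k}\beta_{l_k} \gtrsim \epsilon$, i.e. $l_k \le L(\epsilon) := O(\log(d/(\delta\epsilon)))$ after solving $2^{-l}\sqrt{dl\log(dl/\delta)}\gtrsim\epsilon$ for $l$.

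Third I would bound, for each fixed level $l$, the number of rounds ever routed to level $l$. An action $k$ enters $\cC^l$ only if $\|\xb_k\|_{(\bSigma_k^{l})^{-1}} > 2^{-l}$, so $\sum_{k\in\cC^l}\|\xb_k\|^2_{(\bSigma_k^l)^{-1}} > 4^{-l}|\cC^l|$; on the other hand the elliptical-potential (determinant-trace) lemma gives $\sum_{k\in\cC^l}\|\xb_k\|^2_{(\bSigma_k^l)^{-1}} = O(d\log(|\cC^l|))$. Hence $|\cC^l| = O(4^l d\log(4^l d)) = \tilde O(4^l d \cdot l)$. Summing the per-level counts over all levels $l$ that can carry a bad round, $\sum_{k}\ind\{\Delta_k>\epsilon\} \le \sum_{l=1}^{L(\epsilon)} |\cC^l| = \tilde O\big(4^{L(\epsilon)} d\, L(\epsilon)^2\big)$; since $4^{L(\epsilon)} = O\big(d\log(d/(\delta\epsilon))/\epsilon^2\big)$ from the definition of $L(\epsilon)$, and $L(\epsilon)^2 = O(\log^2(d/(\delta\epsilon)))$, we obtain the claimed $O\big(d^2\log^3(d/(\delta\epsilon))/\epsilon^2\big)$ bound.

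The main obstacle I anticipate is the bookkeeping around the \emph{dynamically growing} number of levels $S_k$ and the top-level boundary case in the while-loop: unlike the fixed-horizon SupLinUCB analysis, here $S_k$ increases over time, so one must argue that (i) the union bound over infinitely many levels is still summable (hence the extra $\log l$ packed into $\beta_l$), and (ii) when $\xb_k$ is assigned to the current top level, either the uncertainty is already $\le 2^{-(S_k-1)}$ — giving the same per-round bound — or a fresh level is created, which can happen only $O(\log(\cdot))$ times in the relevant range. Making the constants line up so that $\beta_l$'s $\sqrt{l\log l}$ growth does not blow up the final exponent (it only costs the third $\log$ factor) is the delicate quantitative step.
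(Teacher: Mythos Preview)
Your proposal matches the paper's proof essentially step for step: the per-level self-normalized good event with a summable union bound over $l$ (Lemma~\ref{lemma:uniform-ucb-bandit}), optimism of the minimum estimator, the per-round bound $\Delta_k\le 2\beta_{l_k-1}2^{-(l_k-1)}$ from the while-loop exit condition, the level-size bound $|\cC^l|\le 17dl4^{l}$ via the elliptical-potential lemma (Lemma~\ref{lemma:bandit-levelsize}), and the final sum over levels $l\le L(\epsilon)$. One cosmetic slip: your intermediate display $2\beta_{l_k}\|\xb_k\|_{(\bSigma_k^{l_k})^{-1}}$ should be indexed at $l_k-1$ throughout (at level $l_k$ the norm is \emph{large}, not small), and the top-level boundary case you flag is benign --- whenever $l_k>1$ the while-loop has already passed level $l_k-1$, so $\|\xb_k\|_{(\bSigma_k^{l_k-1})^{-1}}\le 2^{-(l_k-1)}$ holds regardless of whether $l_k$ opens a new level, while the $l_k=1$ rounds are absorbed into the $l=1$ bucket of the final sum.
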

\begin{remark}
Theorem \ref{thm-bandit} suggests that Algorithm \ref{algorithm-bandit} is uniform-PAC with sample complexity $O(d^2\log^3\big(d/(\delta\epsilon)\big)/\epsilon^2)$. According to Theorem \ref{theorem: pac-transfer}, this new algorithm will converge to the optimal policy. Theorem~\ref{thm-bandit} also implies an $\tilde O(d\sqrt{T})$ regret for infinite-arm linear bandit problem. This result matches the lower bound $\Omega(d\sqrt{T})$ \citep{dani2008stochastic} up to a logarithmic factor. Furthermore, Theorem \ref{thm-bandit} implies that Algorithm~\ref{algorithm-bandit} is an $(\epsilon,\delta)$-PAC algorithm with sample complexity $\tilde O(d^2/\epsilon^2)$. Specifically, if we set $\epsilon=\Delta_{\min}$, Theorem~\ref{thm-bandit} implies an $\tilde O(d^2/\Delta_{\min}^2)$ sample complexity to identify the best arm\footnote{ \citet{soare2014best} denoted by $\Delta_{\min}$ the gap of the rewards between the best arm and the second-best arm. In this setting, the sample complexity to find the best arm is identical to the sample complexity to find an $\epsilon<\Delta_{\min}$ sub-optimal arm.}, 
which matches the sample complexity $\tilde O(d\log K/\Delta_{\min}^2)$ in \cite{soare2014best} when $K = \Theta(2^d)$. 

\end{remark}

\section{Uniform-PAC Bounds for Linear MDPs}
In this section, we propose our new $\algname$ algorithm (Algorithm \ref{algorithm-mdp}) for learning linear MDPs, and provide its theoretical guarantee.

\noindent\textbf{Intuition behind $\algname$}
At a high level, $\algname$ inherits the structure of Least-Square Value Iteration with UCB (LSVI-UCB)  proposed in \cite{jin2019provably}. The Bellman optimality equation gives us the following equation:
\begin{align}
    r_h(s,a) +  [\PP_h V_{h+1}^*](s,a)= Q_h^*(s,a) = \la \btheta_h^*, \bphi(s,a)\ra,\label{ii1}
\end{align}
where the second equality holds due to Proposition \ref{prop:linearq}. \eqref{ii1} suggests that in order to learn $Q_h^*$, it suffices to learn $\btheta_h^*$, which can be roughly regarded as the unknown vector of a linear bandits problem with actions $\bphi(s,a)$ and rewards $r_h(s,a) +  V_{h+1}^*(s')$, where $(s,a,s')$ belongs to some set $\cC$. Since $V_{h+1}^*$ is unknown, we use its estimation $V_{h+1}$ to replace it. Therefore, we can apply Algorithm \ref{algorithm-bandit} to this equivalent linear bandits problem to obtain our uniform-PAC RL algorithm $\algname$. 

\noindent\textbf{Details of $\algname$}
We now describe the details of $\algname$. For each stage $h$, $\algname$ maintains non-overlapping index set $\{\cC_h^l\}_l$, each $\cC_h^l$ contains state-action-next-state triples $(s_h^i, a_h^i, s_{h+1}^i)$. Let $S_1=1$ and $S_k$ denote the number of non-empty sets $\{\cC_1^l\}_l$ at episode $k$ for $k\ge 2$. Instead of maintaining only one estimated optimal value function $V_{k,h}$ and action-value function $Q_{k,h}$ \citep{jin2019provably}, $\algname$ maintains \emph{a group} of estimated value functions $\{V_{k,h}^l\}_l$ and action-value functions $\{Q_{k,h}^l\}_l$. In detail, at stage $h$, given $\{V_{k, h+1}^l\}_l$, $\algname$ calculates $\wb_{k,h}^l$ as the minimizer of the ridge regression problem with training dataset $(s_h^i, a_h^i, s_{h+1}^i) \in \cC_h^l$ and targets $V_{k, h+1}^l(s_{h+1}^i)$ (Line \ref{algorithm2-line9}), and defines $Q_{k,h}^l$ as the summation of the linear predictor $(\wb_{k,h}^{l})^{\top}\bphi(s,a)$ and a quadratic confidence bonus $\beta_{l}\sqrt{\bphi(s,a)^{\top}(\bSigma_{k,h}^l)^{-1} \bphi(s,a)}$ (Line \ref{algorithm2-line10}), where $\beta_l$ is the confidence radius  for level $l$ and $\bSigma_{k,h}^l$ is the covariance matrix for contexts in set $\cC_h^l$. 
Then $\algname$ defines value function $\vvalue_{k,h}^l$ as the maximum of the minimal value over the first $l$ action-value functions (Line \ref{algorithm2-line13}). The max-min structure is similar to its counterpart for linear bandits in Algorithm \ref{algorithm-bandit}, which provides a tighter estimation of the optimal value function and is pivotal to achieve uniform-PAC guarantee.  

After constructing action-value functions $\{Q_{k,h}^l\}_l$, $\algname$ executes the greedy policy induced by the minimal of action-value function $Q_{k,h}^l$ over $1\leq l \leq l_{h-1}^k-1$, where $l_0^k=S_k+1$ and $ l_{h-1}^k$ is the level of the set $\cA_{h-1}^l$ that we add the triple $(s_{h-1}^k, a_{h-1}^k, s_{h}^k)$. After obtaining $(s_h^k, a_h^k, s_{h+1}^k)$, to decide which set $\cC_h^l$ should this triple be added, $\algname$ calculates the confidence bonus $\sqrt{\bphi(s_h^k,a_h^k)^{\top}(\bSigma_{k,h}^{l})^{-1} \bphi(s_h^k,a_h^k)}$ and puts it into the $l$-th set if the confidence bonus is large (Line~\ref{algorithm2-line20}), similar to that of Algorithm \ref{algorithm-bandit}.




\begin{algorithm*}[t]
	\caption{Uniform PAC Least-Square Value Iteration ($\algname$)}\label{algorithm-mdp}
	\begin{algorithmic}[1]
    \REQUIRE Regualarization parameter $\lambda$, confidence radius $\beta_l$
    \STATE Set $\cC_h^l\leftarrow \emptyset, l \in \NN, h \in [H]$ and set the total level $S_1=1$
	\FOR {episode $k=1,2,..$}
	    \STATE Set $\vvalue_{k,H+1}^{l}(s,a)=0$ for all state-action pair $(s,a)\in \cS\times \cA$ and all level $l\in [S_k]$ 
	    \FOR {stage $h=H,H-1,..,1$}
	        \FOR {all level $l\in[S_k]$}
	            \STATE Set $\bSigma_{k,h}^l=\lambda \Ib+\sum_{i\in \cC_h^l} \bphi(s_h^i,a_h^i)\bphi(s_h^i,a_h^i)^\top$ \label{algorithm2-line 7}
	            \STATE Set $\bbb_{k,h}^l=\sum_{i\in \cC_h^l} \bphi(s_h^i,a_h^i)\Big[\reward_h(s_h^i,a_h^i)+\vvalue_{k,h+1}^l(s_{h+1}^i)\Big]$
	            \STATE $\wb_{k,h}^{l}\leftarrow (\bSigma_{k,h}^l)^{-1}\bbb_{k,h}^l$\label{algorithm2-line9}
	            \STATE $\qvalue_{k,h}^{l}(s,a)\leftarrow \min\Big\{H,(\wb_{k,h}^{l})^{\top}\bphi(s,a)+\beta_{l}\sqrt{\bphi(s,a)^{\top}(\bSigma_{k,h}^l)^{-1} \bphi(s,a)}\Big\}$\label{algorithm2-line10}
	        \ENDFOR
	        \FOR {all level $l\in[S_k]$}
            \STATE $\vvalue_{k,h}^{l}(s)\leftarrow\max_{a} \min_{1\leq i\leq l} \qvalue_{k,h}^{i}(s,a)$\label{algorithm2-line13}
	        \ENDFOR
	    \ENDFOR
	    \STATE Receive the initial state $s_{1}^k$ and  set the current level $l_0^k=S_k+1$
	    \FOR {stage $h=1,2,..,H$}
	        \STATE Take action $a_h^k\leftarrow \argmax_{a} \min_{1\leq i\leq l_{h-1}^k-1} \qvalue_{k,h}^{i}(s_h^k,a)$
	        \STATE Set level $l_h^k=1$
	        \WHILE{$\sqrt{\bphi(s_h^k,a_h^k)^{\top}(\bSigma_{k,h}^{l_h^k})^{-1} \bphi(s_h^k,a_h^k)}\leq 2^{-l_h^k}$ and $l_h^k\leq l_{h-1}^k-1$} \label{algorithm2-line20}
	            \STATE $l_h^k \leftarrow l_h^k+1$ \label{algorithm2-line21}
	        \ENDWHILE
	        \STATE Add element $k$ to the set $\cC_h^{l_h^k}$ \label{algorithm2-line23}
	        \STATE Receive the reward $\reward_h(s_h^k,a_h^k)$ and the next state $s_{h+1}^k$
	    \ENDFOR
	    \STATE Set the total level $S_{k+1}$ as $S_{k+1}=\max_{l:|\cC_1^l|>0}l$
	\ENDFOR

	\end{algorithmic}
\end{algorithm*}

The following theorem shows that $\algname$ is uniform-PAC for learning linear MDPs.
\begin{theorem}\label{thm:2}
Under Assumption \ref{assumption-linear}, there exists a positive constant $C$ such that for any $\delta \in(0,1)$, if we set $\lambda=1$ and $\beta_l=C dHl \sqrt{ \log(dlH/\delta)}$, then with probability at least $1-\delta$, for all $\epsilon > 0$, we have
\begin{align}
    \sum_{k=1}^{\infty} \ind\{\vvalue_{1}^{*}(s_1^k)-\vvalue_1^{\pi_k}
    \ge \epsilon\}
    &=O( d^3H^5 \log^4(dH/(\delta\epsilon))/\epsilon^2).\notag
\end{align}
\end{theorem}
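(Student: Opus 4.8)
The plan is to mirror the linear-bandit analysis (Theorem~\ref{thm-bandit}) stage by stage, propagating the multi-level optimism through the $H$ steps of value iteration. First I would establish the key concentration event: for every level $l\in\NN$, every stage $h\in[H]$, every episode $k$, and every value function $V$ that arises in the algorithm, the ridge-regression estimate satisfies
\begin{align}
\bigl|(\wb_{k,h}^l)^\top\bphi(s,a) - r_h(s,a) - [\PP_h V](s,a)\bigr| \le \beta_l \sqrt{\bphi(s,a)^\top(\bSigma_{k,h}^l)^{-1}\bphi(s,a)}.\notag
\end{align}
Because each $\cC_h^l$ has \emph{bounded} cardinality (a point is added to level $l$ only if $\|\bphi\|_{(\bSigma_{k,h}^l)^{-1}}>2^{-l}$, so an elliptical-potential argument caps $|\cC_h^l|$ at roughly $4^l d$), the self-normalized concentration bound incurs only a $\log(dlH/\delta)$ factor rather than a $\log k$ factor, which is exactly why $\beta_l = \tilde O(dHl)$ suffices. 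One must also handle the fact that the target $V_{k,h+1}^l$ is data-dependent; this is done by a covering-number argument over the class of max-min value functions (as in \cite{jin2019provably}, Lemma B.3), where the key point is that the function class is parametrized by the $l$ weight vectors and covariance matrices, so the covering number is still $\exp(\tilde O(d^2 l))$-bounded, contributing only logarithmically.

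On this event I would prove, by backward induction on $h$, the \textbf{optimism} property $V_{k,h}^l(s)\ge V_h^*(s)$ for every level $l$ and every $s$, together with the analogous statement $Q_{k,h}^l(s,a)\ge Q_h^*(s,a)$. The base case $h=H+1$ is trivial. For the inductive step: by the induction hypothesis $V_{k,h+1}^i\ge V_{h+1}^*$ for all $i$, so the regression target dominates the true one, and combined with the concentration bound the bonus term covers the residual, giving $Q_{k,h}^i(s,a)\ge Q_h^*(s,a)$ for each $i\le l$; taking $\min_i$ preserves this, and then $\max_a$ preserves it as well. Note the truncation at $H$ in Line~\ref{algorithm2-line10} is harmless since $Q_h^*\le H$. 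Simultaneously I would record the complementary \textbf{pessimism-style bound}: the value $V_{k,1}^{S_k}(s_1^k) - V_1^{\pi_k}(s_1^k)$ telescopes along the trajectory actually taken into a sum of bonus terms $\sum_h 2\beta_{l_h^k}\|\bphi(s_h^k,a_h^k)\|_{(\bSigma_{k,h}^{l_h^k})^{-1}}$, using the level-selection rule which guarantees $\|\bphi(s_h^k,a_h^k)\|_{(\bSigma_{k,h}^{l_h^k-1})^{-1}}\le 2^{-(l_h^k-1)}$ so that the uncertainty at the \emph{selected} level is itself at most $2\cdot 2^{-l_h^k}$. The subtlety here is the interplay between the level $l_{h-1}^k$ used to define the executed policy and the level $l_h^k$ where the new point is stored; one must check the min over $i\le l_{h-1}^k-1$ is compatible with the optimism chain, essentially because levels only decrease the admissible index set and each individual $Q_{k,h}^i$ is optimistic.

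The final step is the counting argument. Fix $\epsilon>0$. On any episode $k$ with suboptimality gap $>\epsilon$, optimism gives $\epsilon < V_1^*(s_1^k)-V_1^{\pi_k}(s_1^k)\le \sum_{h=1}^H 2\beta_{l_h^k}\|\bphi(s_h^k,a_h^k)\|_{(\bSigma_{k,h}^{l_h^k})^{-1}} \le \sum_h 2\beta_{l_h^k}2^{-l_h^k}$. Since $\beta_l 2^{-l} = \tilde O(dHl\,2^{-l})$ decays geometrically in $l$, there must be some stage $h$ whose storage level $l_h^k$ is at most $L:=O(\log(dH^2/\epsilon))$ — otherwise the whole sum is $<\epsilon$. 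Hence every ``bad'' episode charges a point to some $\cC_h^l$ with $l\le L$, $h\le H$. For a fixed $(h,l)$, the number of episodes that can add a point at level $l$ while $\|\bphi\|_{(\bSigma_{k,h}^l)^{-1}}>2^{-l}$ but also (from the level-$(l{-}1)$ test) $\|\bphi\|_{(\bSigma_{k,h}^{l-1})^{-1}}\le 2^{-(l-1)}$ is $O(4^l d \log(1/\epsilon))$ by the elliptical potential lemma. Summing over $h\le H$ and $l\le L$ gives $N_\epsilon = O(H\cdot 4^L d) = O(d^3H^5\log^4(dH/(\delta\epsilon))/\epsilon^2)$ after plugging in $4^L = O(d^2H^4/\epsilon^2)$ and absorbing the remaining logs. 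The union bound over all levels $l$ (there are countably many, but $\beta_l$ has a $\log(dlH/\delta)$ inside that sums over $l$ with weights $2^{-l}$) is what makes the statement simultaneous in $\epsilon$, i.e.\ uniform-PAC rather than merely PAC.

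\textbf{Main obstacle.} I expect the hardest part to be the concentration/covering step across the whole family of max-min value functions $\{V_{k,h}^l\}$ simultaneously over all levels $l\in\NN$: one must show the confidence radii $\beta_l$ can be chosen growing only linearly in $l$ (up to $\sqrt{\log}$ factors) while still dominating the data-dependent regression error, and that the per-level covering numbers do not accumulate badly when union-bounded over $l$. This is the linear-MDP analogue of the delicate point in Theorem~\ref{thm-bandit} and is where the bounded cardinality $|\cC_h^l|\lesssim 4^l d$ is essential.
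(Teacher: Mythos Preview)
Your overall architecture (level-wise concentration via a covering argument over the max-min value-function class, backward-induction optimism, telescoping along the played trajectory, then a counting step) matches the paper. The counting step, however, has a genuine gap.

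The telescoping does \emph{not} produce a pure sum of bonus terms. When you unroll $Q_{k,h}^{l_h^k-1}(s_h^k,a_h^k)-Q_h^{\pi_k}(s_h^k,a_h^k)$ using the transition lemma, you get the bonus plus $[\PP_h(V_{k,h+1}^{l_h^k-1}-V_{h+1}^{\pi_k})](s_h^k,a_h^k)$, and to continue along the \emph{realized} trajectory you must replace $[\PP_h(\cdot)](s_h^k,a_h^k)$ by its sample at $s_{h+1}^k$. This introduces a martingale difference $\Delta_{k,h}$ at every stage; the per-episode inequality you write, $\epsilon<\sum_h 2\beta_{l_h^k}2^{-l_h^k}$, is therefore false in general. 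A single episode's martingale sum can be of order $H^2$, so no per-episode pigeonhole conclusion of the form ``some $l_h^k\le L$'' follows. (In the bandit case $H=1$ there is no such martingale, which is why the direct pigeonhole in Theorem~\ref{thm-bandit} works; your proposal is essentially the bandit argument transplanted without this correction.) The paper instead \emph{sums} the suboptimalities over the set $K$ of bad episodes, controls $\sum_{k\in K}\sum_h\Delta_{k,h}$ by Azuma--Hoeffding as $\tilde O(H\sqrt{H|K|})$, splits the bonus sum by level (levels above a threshold $l'$ contribute at most $|K|\epsilon/2$ in total; levels $\le l'$ are bounded via the level-size lemma), and solves $|K|\epsilon\le |K|\epsilon/2+\tilde O(d^3H^5/\epsilon)+\tilde O(H\sqrt{H|K|})$ for $|K|$. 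This aggregation over bad episodes is essential.

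Two secondary points. First, the usable bonus bound lives at level $l_h^k-1$ (where the while-loop test still held), not at level $l_h^k$; the policy's $\min$ runs over $i\le l_{h-1}^k-1$ precisely so that $i=l_h^k-1$ is always available, and the bound used is $\|\bphi(s_h^k,a_h^k)\|_{(\bSigma_{k,h}^{l_h^k-1})^{-1}}\le 2^{-(l_h^k-1)}$. Second, your level-size claim $|\cC_h^l|=O(4^l d)$ is not correct for this algorithm: a point can be placed in $\cC_h^l$ via the second stopping rule (the loop exits because $l_h^k=l_{h-1}^k$) without the bonus at level $l$ ever exceeding $2^{-l}$, so the direct elliptical-potential argument does not apply. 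The paper shows $|\cC_{k,h}^l|\le 17\,d\,l\,h\,4^l$ by introducing auxiliary sets $\cB_{k,h}^l$ of points for which the bonus condition actually fired and using $\cC_{k,h}^l\subseteq\cB_{k,h}^l\cup\cC_{k,h-1}^l$. That extra factor $h$ would push your pigeonhole count to order $d^3H^6/\epsilon^2$ even if the martingale issue were absent; the paper's regret-weighted sum recovers $H^5$ because each point at level $l$ contributes only $\beta_{l-1}2^{-(l-1)}$, so the $4^l$ cardinality is cut down to a $2^l$ in the aggregate.
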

\begin{remark}
Theorem \ref{thm:2} suggests that algorithm $\algname$ is uniform-PAC with sample complexity $O( d^3H^5 \log^4(dH/(\delta\epsilon))/\epsilon^2)$.
According to Theorem \ref{theorem: pac-transfer}, $\algname$ will converge to the optimal policy with high probability. Theorem \ref{thm:2} also implies an $\tilde O(\sqrt{d^3H^4T})$ regret for linear MDPs. This result matches the regret bound $\tilde O(\sqrt{d^3H^3T})$ of LSVI-UCB \cite{jin2019provably} up to a $\sqrt{H}$-factor. Furthermore, Theorem \ref{thm:2} also
implies $\algname$ is an $(\epsilon,\delta)$-PAC with sample complexity $\tilde O(d^3H^5/\epsilon^2)$, which matches the $O(d^3H^3/\epsilon^2)$ sample complexity of LSVI-UCB up to an $H$ factor.
\end{remark}
\paragraph{Computational complexity}
\CCC{As shown in \citet{jin2019provably}, the time complexity of LSVI-UCB is $O(d^2AHK^2)$.
Compared with the LSVI-UCB algorithm, Algorithm~\ref{algorithm-mdp} maintains non-overlapping index sets $\{\cC_h^l\}$ and computes the corresponding optimistic value function for each level $\ell$. Without further assumption on the norm of $\bphi(s,a)$, the number of different levels in the first $K$ episodes is at most $K$, which incurs an additional factor of $K$ in the computational complexity in the worst case. However, if we assume the norm of $\bphi(s,a)$ equals $1$, then the number of levels in the first $K$ episodes is $O(\log K)$. Thus, since the computational complexity of our algorithm at each level can be bounded by that of LSVI-UCB, the computational complexity of Algorithm \ref{algorithm-mdp} is $O(d^2AHK^2\log K)$ under the above assumption.
}

\section{Proof Outline}
In this section, we show the proof roadmap for Theorem \ref{thm:2}, which consists of three key steps.

\noindent\textbf{Step 1: Linear function approximates the optimal value function well}

We first show that with high probability, for each level $l$, our constructed linear function $\la \wb_{k,h}^l, \bphi(s,a)\ra$ is indeed a "good" estimation of the optimal action-value function $Q_h^*(s,a)$. 
By the uniform self-normalized concentration inequality over a specific function class, for any policy $\pi$, any level $l\in \NN$ and any state-action pair $(s,a) \in \cS \times \cA$, we have the following concentration property:
\begin{align}
    (\wb_{k,h}^l)^{\top}\bphi(s,a) - \qvalue_h^{\pi}(s,a) = \big[\PP_h(\vvalue_{k,h+1}^l - \vvalue_{h+1}^{\pi})\big](s,a) + \Delta,\notag
\end{align}
where $|\Delta| \leq \beta_l \sqrt{\bphi(s,a)^{\top}(\Sigma_{k,h}^l)^{-1}\bphi(s,a)}.$
Then, taking a backward induction for each stage $h\in[H]$, 
let $\Omega=\big\{\qvalue_{k,h}^l(s,a)\ge \qvalue_h^*(s,a), \vvalue_{k,h}^{l}(s) \ge \vvalue_h^*(s), \forall (s,a)\in \cS \times \cA, k,l \in \NN , h\in[H]\big\}$ denote the event that the estimated value function $\qvalue_{k,h}^l$ and $\vvalue_{k,h}^l$ upper bounds the optimal value function $\qvalue^*_h$ and $\vvalue^*_h$. We can show that event $\Omega$ holds with high probability. (More details can be found in Lemmas \ref{lemma:mdp-transition} and \ref{lemma:mdp-upper-confidence})

\noindent\textbf{Step 2: Approximation error decomposition}

On the event $\Omega$, the sub-optimality gap in round $k$ is upper bounded by the function value gap between our estimated function $Q_{k,1}^l$ and the value function of our policy $\pi_k$. 
\begin{align}
    \vvalue_1^*(s_1^k) - \vvalue_1^{\pi_k}(s_1^k)&\leq \max_{a} \min_{1\leq l \leq l_0^k-1} \qvalue_{k,1}^{l}(s_1^k,a)- \qvalue_1^{\pi_k}(s_1^k,a_1^k)\leq \qvalue_{k,1}^{l_1^k-1}(s_1^k,a_1^k)- \qvalue_1^{\pi_k}(s_1^k,a_1^k),\notag
\end{align}
From now we only focus on the function value gap for level $l_h^k$. Some elementary calculation gives us
\begin{align}
    &\qvalue_{k,h}^{l_h^k-1}(s_h^k,a_h^k)- \qvalue_h^{\pi_k}(s_h^k,a_h^k)\notag\\
    &\leq \underbrace{2\beta_{l_h^k-1}\sqrt{\bphi(s_h^k,a_h^k)^{\top}(\bSigma_{k,h}^{l_h^k-1})^{-1} \bphi(s_h^k,a_h^k)}}_{I_h^k} +\qvalue_{k,h+1}^{l_{h+1}^k-1}(s_{h+1}^k,a_{h+1}^k)- \qvalue_{h+1}^{\pi_k}(s_{h+1}^k,a_{h+1}^k)\notag\\
    &\qquad +\underbrace{\big[\PP_h(\vvalue_{k,h+1}^{l_h^k-1} - \vvalue_{h+1}^{\pi_k})\big](s_h^k,a_h^k)-\big(\vvalue_{k,h+1}^{l_h^k-1}(s_{h+1}^k) - \vvalue_{h+1}^{\pi_k}(s_{h+1}^k)\big)}_{\Delta_{k,h}}.\label{hh1}
\end{align}
Therefore, by telescoping \eqref{hh1} from stage $h = 1$ to $H$, we conclude that the sub-optimality gap $\vvalue_1^*(s_1^k) - \vvalue_1^{\pi_k}(s_1^k)$ is upper bounded by the summation of the bonus $I_h^k$ and $\Delta_{k,h}$. The summation of the bonus $\sum I_h^k$ is the dominating error term.  According to the rule of level $l$, if $k \in \cC_h^{l}$ at stage $h\in[H]$, then $I_h^k$ satisfies $\sqrt{\bphi(s_h^k,a_h^k)^{\top}(\bSigma_{k,h}^{l-1})^{-1} \bphi(s_h^k,a_h^k)}\leq 2^{-(l-1)}$. Furthermore, the number of elements added into set  $\cC_h^l$ can be upper bounded by $ |\cC_h^l|\leq 17dlH4^{l}$ (See Lemma \ref{lemma:mdp-levelsize}). Thus we can bound the summation of $I_h^k$. For $\sum \Delta_{k,h}$, it is worth noting that $\Delta_{k,h}$ forms a martingale difference sequence, therefore by the standard Azuma-Hoeffding inequality, $\sum \Delta_{k,h}$ can be bounded by some non-dominating terms. Both of these two bounds will be used in the next step. 

\noindent\textbf{Step 3: From upper confidence bonus to uniform-PAC sample complexity}

In Step 2 we have already bounded the sub-optimality gap by the summation of bonus terms. In this step, we show how to transform the gap into the final uniform-PAC sample complexity. Instead of studying any accuracy $\epsilon$ directly, we focus on a special case where $\epsilon=H/2^i (i\in \NN)$, which can be easily generalized to the general case. For each fixed $\epsilon=H/2^i (i\in \NN)$, let $\cK$ denote the set $\cK=\big\{k|\vvalue_{1}^{*}(s_1^k)-\vvalue_1^{\pi_k}
    \ge \epsilon\big\}$ and $m=|\cK|$. On the one hand, according to the definition of set $\cK$, the summation of regret in episode $k (k\in \cK)$ is lower bounded by $m\epsilon$. On the other hand, according to Step 2, the summation of sub-optimality gaps of episode $k (k\in \cK)$, is upper bound by
    \begin{align}
      \sum_{k\in \cK}[\vvalue_{1}^{*}(s_1^k)-\vvalue_1^{\pi_k}]\leq    \underbrace{\sum_{k\in \cK}\sum_{h=1}^H 2\beta_{l_h^k-1}2^{-(l_h^k-1)}}_{J_1}+\underbrace{\sum_{k\in \cK}\sum_{h=1}^H\Delta_{k,h}}_{J_2}\label{ss1}.
    \end{align}
To further bound $J_1$, we divide those episode-stage pairs $(k,h) \in \NN \times [H]$ into two categories: $\cS_1=\big\{(k,h)|2\beta_{l_h^k-1}2^{-(l_h^k-1)}\leq \epsilon/(2H)\big\}$ and $\cS_2=\big\{(k,h)|2\beta_{l_h^k-1}2^{-(l_h^k-1)}> \epsilon/(2H)\big\}$. For the first category $\cS_1$, the summation of terms $I_h^k$ in this category is upper bound by 
\begin{align}
    \sum_{k\in \cK}\sum_{h=1}^H  \ind \{(k,h) \in \cS_1 \}2\beta_{l_h^k-1}2^{-(l_h^k-1)}\leq \sum_{k\in \cK}\sum_{h=1}^H \frac{\epsilon}{2H}=\frac{m\epsilon}{2} .\label{eq:sk-1}
\end{align}
 For any episode-stage pair $(k,h)$ in the second category $\cS_2$, the level $l_h^k$ satisfies $2^{l_h^k}\leq \tilde O(dH^2 /\epsilon)$ due to the choice of $\beta_{l_h^k - 1}$. Suppose $l'$ is the maximum level that satisfies $2^{l}\leq \tilde O(dH^2 /\epsilon)$ and for each level $l\leq l'$, the cardinality of set $\cC_h^l$ can be upper bounded by $ |\cC_h^l|\leq 17dlH4^{l}$. Thus, the summation of terms $I_h^k$ in category $\cS_2$ is upper bound by 
 \begin{align}
     \sum_{k\in \cK}\sum_{h=1}^H  \ind \{(k,h) \in \cS_2 \}2\beta_{l_h^k-1}2^{-(l_h^k-1)}&\leq \sum_{k\in \cK}\sum_{h=1}^H  \sum_{l=1}^{l'} \ind \{l_h^k=l\}2\beta_{l-1}2^{-(l-1)}\notag\\
     &=\sum_{h=1}^H \sum_{l=1}^{l'} 2\beta_{l-1}2^{-(l-1)} \sum_{k\in \cK}\ind \{l_h^k=l\}\notag\\
     &\leq \sum_{h=1}^H \sum_{l=1}^{l'} 2\beta_{l-1}2^{-(l-1)} 17dlH4^l\notag\\
     &=\tilde O(d^3H^5/\epsilon).\label{eq:sk-2}
 \end{align}
Back to \eqref{ss1}, for the second term $J_2$, according to Azuma–Hoeffding inequality, it can be controlled by $\tilde O(H\sqrt{Hm})$. Therefore, combining \eqref{eq:sk-1}, \eqref{eq:sk-2} with the bound of $J_2$, we have
\begin{align}
    m\epsilon\leq  \sum_{k\in \cK}\vvalue_{1}^{*}(s_1^k)-\vvalue_1^{\pi_k} \leq m\epsilon/2+ \tilde O(d^3H^5/\epsilon) + \tilde O(H\sqrt{Hm}),\notag
\end{align}
and it implies that the number of episodes with a sub-optimality gap greater than $\epsilon$ is bounded by $ \tilde O(d^3H^5/\epsilon^2)$. This completes the proof. 

\section{Conclusion and Future Work}
In this work, we proposed two novel uniform-PAC algorithms for linear bandits and RL with linear function approximation, with the nearly state-of-the-art sample complexity. To the best of our knowledge, these are the very first results to show that linear bandits and RL with linear function approximation can also achieve uniform-PAC guarantees, similar to the tabular RL setting. We leave proving their corresponding lower bounds and proposing algorithms with near-optimal uniform-PAC sample complexity as future work. 

\section*{Acknowledgments and Disclosure of Funding}
We thank the anonymous reviewers for their helpful comments. 
Part of this work was done when JH, DZ and QG participated the Theory of Reinforcement Learning program at the Simons Institute for the Theory of Computing in Fall 2020. JH, DZ and QG are partially supported by the National Science Foundation CAREER Award 1906169, IIS-1904183, BIGDATA IIS-1855099 and AWS Machine Learning Research Award. The views and conclusions contained in this paper are those of the authors and should not be interpreted as representing any funding agencies.

\bibliographystyle{ims}
\bibliography{reference}

\begin{thebibliography}{27}
\expandafter\ifx\csname natexlab\endcsname\relax\def\natexlab#1{#1}\fi
\expandafter\ifx\csname url\endcsname\relax
  \def\url#1{\texttt{#1}}\fi
\expandafter\ifx\csname urlprefix\endcsname\relax\def\urlprefix{URL }\fi

\bibitem[{Abbasi-Yadkori et~al.(2011)Abbasi-Yadkori, P{\'a}l and
  Szepesv{\'a}ri}]{abbasi2011improved}
\textsc{Abbasi-Yadkori, Y.}, \textsc{P{\'a}l, D.} and \textsc{Szepesv{\'a}ri,
  C.} (2011).
\newblock Improved algorithms for linear stochastic bandits.
\newblock In \textit{Advances in Neural Information Processing Systems}.

\bibitem[{Auer(2002)}]{auer2002using}
\textsc{Auer, P.} (2002).
\newblock Using confidence bounds for exploitation-exploration trade-offs.
\newblock \textit{Journal of Machine Learning Research} \textbf{3} 397--422.

\bibitem[{Ayoub et~al.(2020)Ayoub, Jia, Szepesvari, Wang and
  Yang}]{ayoub2020model}
\textsc{Ayoub, A.}, \textsc{Jia, Z.}, \textsc{Szepesvari, C.}, \textsc{Wang,
  M.} and \textsc{Yang, L.~F.} (2020).
\newblock Model-based reinforcement learning with value-targeted regression.
\newblock \textit{arXiv preprint arXiv:2006.01107} .

\bibitem[{Cesa-Bianchi and Lugosi(2006)}]{cesa2006prediction}
\textsc{Cesa-Bianchi, N.} and \textsc{Lugosi, G.} (2006).
\newblock \textit{Prediction, learning, and games}.
\newblock Cambridge university press.

\bibitem[{Chu et~al.(2011)Chu, Li, Reyzin and Schapire}]{chu2011contextual}
\textsc{Chu, W.}, \textsc{Li, L.}, \textsc{Reyzin, L.} and \textsc{Schapire,
  R.} (2011).
\newblock Contextual bandits with linear payoff functions.
\newblock In \textit{Proceedings of the Fourteenth International Conference on
  Artificial Intelligence and Statistics}. JMLR Workshop and Conference
  Proceedings.

\bibitem[{Dani et~al.(2008)Dani, Hayes and Kakade}]{dani2008stochastic}
\textsc{Dani, V.}, \textsc{Hayes, T.~P.} and \textsc{Kakade, S.~M.} (2008).
\newblock Stochastic linear optimization under bandit feedback .

\bibitem[{Dann et~al.(2017)Dann, Lattimore and Brunskill}]{dann2017unifying}
\textsc{Dann, C.}, \textsc{Lattimore, T.} and \textsc{Brunskill, E.} (2017).
\newblock Unifying pac and regret: Uniform pac bounds for episodic
  reinforcement learning.
\newblock In \textit{Advances in Neural Information Processing Systems}.

\bibitem[{Fiez et~al.(2019)Fiez, Jain, Jamieson and
  Ratliff}]{fiez2019sequential}
\textsc{Fiez, T.}, \textsc{Jain, L.}, \textsc{Jamieson, K.} and
  \textsc{Ratliff, L.} (2019).
\newblock Sequential experimental design for transductive linear bandits.
\newblock \textit{arXiv preprint arXiv:1906.08399} .

\bibitem[{Ghavamzadeh et~al.(2011)Ghavamzadeh, Kappen, Azar and
  Munos}]{azar2011speedy}
\textsc{Ghavamzadeh, M.}, \textsc{Kappen, H.}, \textsc{Azar, M.} and
  \textsc{Munos, R.} (2011).
\newblock Speedy q-learning.
\newblock \textit{Advances in neural information processing systems}
  \textbf{24} 2411--2419.

\bibitem[{Jia et~al.(2020)Jia, Yang, Szepesvari and Wang}]{jia2020model}
\textsc{Jia, Z.}, \textsc{Yang, L.}, \textsc{Szepesvari, C.} and \textsc{Wang,
  M.} (2020).
\newblock Model-based reinforcement learning with value-targeted regression.
\newblock In \textit{Learning for Dynamics and Control}. PMLR.

\bibitem[{Jiang et~al.(2017)Jiang, Krishnamurthy, Agarwal, Langford and
  Schapire}]{jiang2017contextual}
\textsc{Jiang, N.}, \textsc{Krishnamurthy, A.}, \textsc{Agarwal, A.},
  \textsc{Langford, J.} and \textsc{Schapire, R.~E.} (2017).
\newblock Contextual decision processes with low bellman rank are
  pac-learnable.
\newblock In \textit{Proceedings of the 34th International Conference on
  Machine Learning-Volume 70}. JMLR. org.

\bibitem[{Jin et~al.(2018)Jin, Allen-Zhu, Bubeck and Jordan}]{jin2018q}
\textsc{Jin, C.}, \textsc{Allen-Zhu, Z.}, \textsc{Bubeck, S.} and
  \textsc{Jordan, M.~I.} (2018).
\newblock Is q-learning provably efficient?
\newblock In \textit{Advances in Neural Information Processing Systems}.

\bibitem[{Jin et~al.(2019)Jin, Yang, Wang and Jordan}]{jin2019provably}
\textsc{Jin, C.}, \textsc{Yang, Z.}, \textsc{Wang, Z.} and \textsc{Jordan,
  M.~I.} (2019).
\newblock Provably efficient reinforcement learning with linear function
  approximation.
\newblock \textit{arXiv preprint arXiv:1907.05388} .

\bibitem[{Kakade et~al.(2003)}]{kakade2003sample}
\textsc{Kakade, S.~M.} \textsc{et~al.} (2003).
\newblock \textit{On the sample complexity of reinforcement learning}.
\newblock Ph.D. thesis, University of London London, England.

\bibitem[{Karnin(2016)}]{karnin2016verification}
\textsc{Karnin, Z.} (2016).
\newblock Verification based solution for structured mab problems.
\newblock In \textit{Proceedings of the 30th International Conference on Neural
  Information Processing Systems}.

\bibitem[{Li et~al.(2010)Li, Chu, Langford and Schapire}]{li2010contextual}
\textsc{Li, L.}, \textsc{Chu, W.}, \textsc{Langford, J.} and \textsc{Schapire,
  R.~E.} (2010).
\newblock A contextual-bandit approach to personalized news article
  recommendation.
\newblock In \textit{Proceedings of the 19th international conference on World
  wide web}.

\bibitem[{Li et~al.(2019)Li, Wang and Zhou}]{li2019nearly}
\textsc{Li, Y.}, \textsc{Wang, Y.} and \textsc{Zhou, Y.} (2019).
\newblock Nearly minimax-optimal regret for linearly parameterized bandits.
\newblock In \textit{Conference on Learning Theory}. PMLR.

\bibitem[{Modi et~al.(2020)Modi, Jiang, Tewari and Singh}]{modi2020sample}
\textsc{Modi, A.}, \textsc{Jiang, N.}, \textsc{Tewari, A.} and \textsc{Singh,
  S.} (2020).
\newblock Sample complexity of reinforcement learning using linearly combined
  model ensembles.
\newblock In \textit{International Conference on Artificial Intelligence and
  Statistics}. PMLR.

\bibitem[{Soare et~al.(2014)Soare, Lazaric and Munos}]{soare2014best}
\textsc{Soare, M.}, \textsc{Lazaric, A.} and \textsc{Munos, R.} (2014).
\newblock Best-arm identification in linear bandits.
\newblock \textit{arXiv preprint arXiv:1409.6110} .

\bibitem[{Tao et~al.(2018)Tao, Blanco and Zhou}]{tao2018best}
\textsc{Tao, C.}, \textsc{Blanco, S.} and \textsc{Zhou, Y.} (2018).
\newblock Best arm identification in linear bandits with linear dimension
  dependency.
\newblock In \textit{International Conference on Machine Learning}. PMLR.

\bibitem[{Xu et~al.(2018)Xu, Honda and Sugiyama}]{xu2018fully}
\textsc{Xu, L.}, \textsc{Honda, J.} and \textsc{Sugiyama, M.} (2018).
\newblock A fully adaptive algorithm for pure exploration in linear bandits.
\newblock In \textit{International Conference on Artificial Intelligence and
  Statistics}. PMLR.

\bibitem[{Yang and Wang(2019)}]{yang2019sample}
\textsc{Yang, L.} and \textsc{Wang, M.} (2019).
\newblock Sample-optimal parametric q-learning using linearly additive
  features.
\newblock In \textit{International Conference on Machine Learning}.

\bibitem[{Zanette and Brunskill(2019)}]{zanette2019tighter}
\textsc{Zanette, A.} and \textsc{Brunskill, E.} (2019).
\newblock Tighter problem-dependent regret bounds in reinforcement learning
  without domain knowledge using value function bounds.
\newblock \textit{arXiv preprint arXiv:1901.00210} .

\bibitem[{Zanette et~al.(2020)Zanette, Lazaric, Kochenderfer and
  Brunskill}]{zanette2020learning}
\textsc{Zanette, A.}, \textsc{Lazaric, A.}, \textsc{Kochenderfer, M.} and
  \textsc{Brunskill, E.} (2020).
\newblock Learning near optimal policies with low inherent bellman error.
\newblock \textit{arXiv preprint arXiv:2003.00153} .

\bibitem[{Zhang et~al.(2021)Zhang, Yang, Ji and Du}]{zhang2021variance}
\textsc{Zhang, Z.}, \textsc{Yang, J.}, \textsc{Ji, X.} and \textsc{Du, S.~S.}
  (2021).
\newblock Variance-aware confidence set: Variance-dependent bound for linear
  bandits and horizon-free bound for linear mixture mdp.
\newblock \textit{arXiv preprint arXiv:2101.12745} .

\bibitem[{Zhang et~al.(2020)Zhang, Zhou and Ji}]{zhang2020almost}
\textsc{Zhang, Z.}, \textsc{Zhou, Y.} and \textsc{Ji, X.} (2020).
\newblock Almost optimal model-free reinforcement learning via
  reference-advantage decomposition.
\newblock \textit{arXiv preprint arXiv:2004.10019} .

\bibitem[{Zhou et~al.(2020)Zhou, Gu and Szepesvari}]{zhou2020nearly}
\textsc{Zhou, D.}, \textsc{Gu, Q.} and \textsc{Szepesvari, C.} (2020).
\newblock Nearly minimax optimal reinforcement learning for linear mixture
  markov decision processes.
\newblock \textit{arXiv preprint arXiv:2012.08507} .

\end{thebibliography}

\section*{Checklist}


\begin{enumerate}

\item For all authors...
\begin{enumerate}
  \item Do the main claims made in the abstract and introduction accurately reflect the paper's contributions and scope?
    \answerYes
  \item Did you describe the limitations of your work?
    \answerYes
  \item Did you discuss any potential negative societal impacts of your work?
    \answerNA{} Our work studies the uniform-PAC bounds for RL with function approximation, which is a pure theoretical problem and does not have any negative social impact.
  \item Have you read the ethics review guidelines and ensured that your paper conforms to them?
    \answerYes
\end{enumerate}

\item If you are including theoretical results...
\begin{enumerate}
  \item Did you state the full set of assumptions of all theoretical results?
    \answerYes
	\item Did you include complete proofs of all theoretical results?
    \answerYes
\end{enumerate}

\item If you ran experiments...
\begin{enumerate}
  \item Did you include the code, data, and instructions needed to reproduce the main experimental results (either in the supplemental material or as a URL)?
    \answerNA{}
  \item Did you specify all the training details (e.g., data splits, hyperparameters, how they were chosen)?
    \answerNA{}
	\item Did you report error bars (e.g., with respect to the random seed after running experiments multiple times)?
   \answerNA{}
	\item Did you include the total amount of compute and the type of resources used (e.g., type of GPUs, internal cluster, or cloud provider)?
    \answerNA{}
\end{enumerate}

\item If you are using existing assets (e.g., code, data, models) or curating/releasing new assets...
\begin{enumerate}
  \item If your work uses existing assets, did you cite the creators?
    \answerNA{}
  \item Did you mention the license of the assets?
    \answerNA{}
  \item Did you include any new assets either in the supplemental material or as a URL?
    \answerNA{}
  \item Did you discuss whether and how consent was obtained from people whose data you're using/curating?
    \answerNA{}
  \item Did you discuss whether the data you are using/curating contains personally identifiable information or offensive content?
    \answerNA{}
\end{enumerate}

\item If you used crowdsourcing or conducted research with human subjects...
\begin{enumerate}
  \item Did you include the full text of instructions given to participants and screenshots, if applicable?
    \answerNA{}
  \item Did you describe any potential participant risks, with links to Institutional Review Board (IRB) approvals, if applicable?
    \answerNA{}
  \item Did you include the estimated hourly wage paid to participants and the total amount spent on participant compensation?
    \answerNA{}
\end{enumerate}

\end{enumerate}

\newpage

\appendix
\section{OFUL Algorithm is not Uniform-PAC}

In this section, we consider a variant of the OFUL algorithm \citep{abbasi2011improved}.
Then we will present a hard-to-learn linear bandit instance and show that the variant of OFUL algorithm cannot have the uniform-PAC guarantee for this instance.

In the original OFUL algorithm \citep{abbasi2011improved}, following their notation, the agent selects the action by $\xb_k = \argmax_{(\xb,\btheta) \in \mathcal{D}_{k}\times \Theta_{k-1}} \langle \xb, \btheta \rangle$. Here we consider a variant of OFUL, where the agent selects the action by $\xb_k = \argmax_{(\xb,\btheta) \in \mathcal{D}_{k}\times \Theta_{k-1}\cap B(1)} \langle \xb, \btheta \rangle$, where $B(1)$ is a unit ball centered at zero.

 We consider a special contextual linear bandit instance with dimension $d=2$, $\btheta^*=(0,1)$, and zero noise. The action set in the first $K$ ($K$ is an arbitrary parameter that can be chosen later) rounds is $\{(1,0),(-1,0)\}$ and the action set in the following $\log K$ rounds is $\{(0,1),(0,-1)\}$. So the reward in each step can only be $1$ or $-1$. The agent will randomly choose one action if both actions attain $\argmax_{(\xb,\btheta) \in \mathcal{D}_{k}\times  \Theta_{k-1}\cap B(1)} \langle \xb, \btheta \rangle$. We can show that, in the first $K$ round, the confidence radius increases since the determinant of the covariance matrix increases, and it will not provide any information about the second dimension of the vector $\btheta^*$ since the two actions are orthogonal to $\btheta^*=(0,1)$. After the first $K$ rounds, the confidence radius will be in the order of $\log K$, 
and the covariance matrix $\bSigma_K$ is a diagonal matrix and in the order of $\text{diag} (K,\log K)$. We can show that both $\btheta = (0,1)$ and $\btheta = (0,-1)$ belong to $\Theta_{k-1}\cap B(1)$, and thus attain the maximum of $\argmax_{(\xb,\btheta) \in \mathcal{D}_{k}\times \Theta_{k-1}\cap B(1)} \langle \xb, \btheta \rangle$. Therefore, the agent will almost ‘randomly’ pick one of the two actions in the later $\log K$ rounds. The random selection leads to a 1-suboptimality gap for about half of the $\log K$ rounds, which indicates that OFUL cannot be uniform-PAC for any finite $f(\epsilon, \delta)$ on this bandit problem, by selecting $\log K > f(\epsilon, \delta)$.

The above reasoning can be extended to the original OFUL algorithm with a more involved argument.

\section{Proof for Theorem \ref{thm-bandit}}\label{sec:bandit-lemma}
In this section, we provide the proof of Theorems \ref{thm-bandit} and for simplicity, let $\cC_{k}^l$ denote the index set $\cC^l$ at the beginning of round $k$. We first propose the following lemmas. 

\begin{lemma}\label{lemma:bandit-levelsize}
Suppose $\lambda\ge 1$, then for each level $l \in \NN$ and round $k \in \NN$, the number of elements in the index set $\cC_k^l$ is upper bounded by
\begin{align}
    |\cC_k^l|\leq 17dl4^{l}.\notag
\end{align}
\end{lemma}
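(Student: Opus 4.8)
The plan is to bound $|\cC_k^l|$ by a standard elliptical–potential (determinant) argument, but with a structural observation about the level–selection loop inserted up front: every element that joins a \emph{nonempty} copy of $\cC^l$ is forced there by a large level-$l$ confidence radius, and this is exactly what makes the potential argument available. Throughout I would use $\lambda\ge1$ only to keep each relevant quadratic form in $[0,1]$; the cases $|\cC_k^l|\le 1$ are trivial since $17dl4^l\ge 68$, so I would assume $m:=|\cC_k^l|\ge 2$.

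First I would analyze the while-loop of Algorithm~\ref{algorithm-bandit}. Because $S_k=\max_{l':|\cC^{l'}|>0}l'$ and $S_k$ is nondecreasing in $k$, the loop can deposit round $k$ into an \emph{empty} set $\cC^l$ only when it exits through its second clause (i.e. with $l_k=S_k+1$); in every other case it exits because $\sqrt{\xb_k^\top(\bSigma_k^{l_k})^{-1}\xb_k}>2^{-l_k}$. Hence, writing $\cC_k^l=\{i_1<i_2<\dots<i_m\}$, for every $j\ge2$ the set $\cC^l$ already contained $i_1,\dots,i_{j-1}$ at the start of round $i_j$, so that round exits through the first clause and
\begin{align}
\xb_{i_j}^\top(\bSigma_{i_j}^l)^{-1}\xb_{i_j}>4^{-l},\qquad\text{where }\bSigma_{i_j}^l=\lambda\Ib+\textstyle\sum_{t<j}\xb_{i_t}\xb_{i_t}^\top .\notag
\end{align}

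Next I would sum this over $j=2,\dots,m$ and invoke the potential bound. Using the matrix–determinant identity $\det\bSigma_{i_{j+1}}^l=\det\bSigma_{i_j}^l\,(1+\xb_{i_j}^\top(\bSigma_{i_j}^l)^{-1}\xb_{i_j})$, the inequality $x\le 2\log(1+x)$ on $[0,1]$, and AM–GM on the eigenvalues of $\bSigma_k^l$ (whose trace is at most $\lambda d+m$ since $\|\xb\|_2\le1$), one gets
\begin{align}
(m-1)4^{-l}<\sum_{j=2}^m\xb_{i_j}^\top(\bSigma_{i_j}^l)^{-1}\xb_{i_j}\le 2\log\frac{\det\bSigma_k^l}{\det(\lambda\Ib)}\le 2d\log\Big(1+\frac{m}{\lambda d}\Big)\le 2d\log\Big(1+\frac{m}{d}\Big),\notag
\end{align}
hence $m\le 1+2d\,4^l\log(1+m/d)$. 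It remains to invert this: putting $u=m/d$, the inequality becomes $u\le 1+2\cdot4^l\log(1+u)$, and since $u\mapsto 1+2\cdot4^l\log(1+u)-u$ is positive at $u=0$, increasing then decreasing, it has a single positive root, so it suffices to check that $u=17l4^l$ makes it nonpositive. Using $\log(1+17l4^l)\le\log 18+\log l+l\log 4\le 2.9+1.76l$ and $1\le 4^l$, the right-hand side is at most $4^l(6.8+3.52l)\le 17l4^l$ for every $l\ge1$, which gives $u\le 17l4^l$ and therefore $|\cC_k^l|=m=du\le 17dl4^l$.

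The only genuinely delicate step is the structural one: one must notice that the while-loop can deposit an element into an \emph{empty} level only when it runs out of current levels, which is precisely what guarantees the uncertainty bound $\xb_{i_j}^\top(\bSigma_{i_j}^l)^{-1}\xb_{i_j}>4^{-l}$ for every element of $\cC^l$ other than the first. After that, the determinant–trace estimate and the solution of $m\lesssim\log m$ are routine; the one place to keep an eye on constants is to retain the factor $d$ inside the logarithm of the potential bound (rather than weakening $\log(1+m/d)$ to $\log(1+m)$), which is what makes the absolute constant $17$ go through.
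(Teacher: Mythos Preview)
Your argument is correct and mirrors the paper's proof: extract $\xb_{i_j}^\top(\bSigma_{i_j}^l)^{-1}\xb_{i_j}>4^{-l}$ for every element of $\cC_k^l$ after the first, sum, and invoke the elliptical-potential bound $2d\log(1+m/d)$ (the paper cites this as Lemma~11 of \cite{abbasi2011improved} rather than deriving it inline, and simply asserts the final inversion $(m-1)4^{-l}\le2d\log(1+m/d)\Rightarrow m\le17dl4^l$ where you spell it out). One caveat on phrasing: your sentence ``the loop can deposit round $k$ into an \emph{empty} set $\cC^l$ only when it exits through its second clause'' states the wrong direction and is false as written (an empty level $l\le S_k$ can receive an element via the first clause); the implication you actually need and correctly use is the converse---if $\cC^l$ is already nonempty then $S_k\ge l$, so exit must be through the first clause---which is exactly the paper's ``for each $2\le i\le m$, we have $S_{k_i}\ge l$''.
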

\begin{proof}
See Appendix~\ref{proof:lemma:bandit-levelsize}.
\end{proof}
Lemma~\ref{lemma:bandit-levelsize} suggests that $\cC^l$ is always a finite set. 
\begin{lemma}\label{lemma:uniform-ucb-bandit}
 If we set $\lambda=1$ and $\beta_l=6\sqrt{dl\log (dl/\delta)}$ for every level $l\in \NN $, then with probability at least $1-\delta$, for all level $l\in \NN$ and all round $k\in \NN$, we have
        \begin{align}
        \|\wb_k^l-\bmu^*\|_{\bSigma_k^l}\leq \beta_{l}.\notag
    \end{align}

\end{lemma}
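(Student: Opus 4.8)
The plan is to reduce the statement to a uniform (over all levels $l$) self-normalized concentration bound of the Abbasi-Yadkori--P\'al--Szepesv\'ari type, applied separately to each index set $\cC^l$, and then union-bound over $l\in\NN$ using the fact (from Lemma~\ref{lemma:bandit-levelsize}) that each $\cC^l$ has cardinality at most $17dl4^l$, which is \emph{finite and deterministic}. Concretely, fix a level $l$. The estimator is $\wb_k^l=(\bSigma_k^l)^{-1}\bbb_k^l$ with $\bSigma_k^l=\lambda\Ib+\sum_{i\in\cC_k^l}\xb_i\xb_i^\top$ and $\bbb_k^l=\sum_{i\in\cC_k^l}\xb_i r_i$, and $r_i=\la\bmu^*,\xb_i\ra+\epsilon_i$. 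First I would write the standard ridge-regression decomposition
\begin{align}
\wb_k^l-\bmu^*=(\bSigma_k^l)^{-1}\sum_{i\in\cC_k^l}\xb_i\epsilon_i-\lambda(\bSigma_k^l)^{-1}\bmu^*,\notag
\end{align}
so that, by the triangle inequality in the $\bSigma_k^l$-norm,
\begin{align}
\|\wb_k^l-\bmu^*\|_{\bSigma_k^l}\leq\Big\|\sum_{i\in\cC_k^l}\xb_i\epsilon_i\Big\|_{(\bSigma_k^l)^{-1}}+\sqrt{\lambda}\,\|\bmu^*\|_2,\notag
\end{align}
using $\|\bmu^*\|_{(\bSigma_k^l)^{-1}}\leq\lambda^{-1/2}\|\bmu^*\|_2$ and $\|\bmu^*\|_2\leq1$, $\lambda=1$, giving a benign additive $1$.

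The main work is to control the self-normalized term $\|\sum_{i\in\cC_k^l}\xb_i\epsilon_i\|_{(\bSigma_k^l)^{-1}}$ uniformly over all $k$. Here I would invoke the self-normalized concentration inequality (Theorem~1 of \citet{abbasi2011improved}): with probability at least $1-\delta_l$, for all $k$ simultaneously,
\begin{align}
\Big\|\sum_{i\in\cC_k^l}\xb_i\epsilon_i\Big\|_{(\bSigma_k^l)^{-1}}^2\leq 2\log\!\left(\frac{\det(\bSigma_k^l)^{1/2}\det(\lambda\Ib)^{-1/2}}{\delta_l}\right).\notag
\end{align}
One subtlety to check is the filtration: the bound is applied to the process indexed by $\cC^l$ in the order elements are added, and since whether $i\in\cC^l$ is decided using only $\xb_i$ and $\bSigma_{k}^l$ (quantities measurable with respect to the history up to and including $\xb_i$ but before $\epsilon_i$), the sub-Gaussian martingale-difference structure in \eqref{def:eee} is preserved for this subsampled sequence — this is the standard argument that allows the level decomposition, and I would spell it out carefully. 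Then, since $\|\xb_i\|_2\leq1$ and $|\cC_k^l|\leq17dl4^l$ by Lemma~\ref{lemma:bandit-levelsize}, the determinant-trace inequality gives $\det(\bSigma_k^l)\leq(\lambda+|\cC_k^l|/d)^d\leq(1+17l4^l)^d$, so the log-determinant term is $O(d\log(dl4^l/\delta_l))=O(dl\log(dl/\delta_l))$.

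Finally I would choose $\delta_l=\delta/(2l^2)$ (or any summable sequence, e.g.\ $\delta\cdot 6/(\pi^2l^2)$) so that $\sum_{l\in\NN}\delta_l\leq\delta$, and take a union bound over all levels $l\in\NN$. Combining the pieces, on the good event, for all $l$ and all $k$,
\begin{align}
\|\wb_k^l-\bmu^*\|_{\bSigma_k^l}\leq 1+\sqrt{2\log(1/\delta_l)+Cdl\log(dl)}\leq 6\sqrt{dl\log(dl/\delta)}=\beta_l,\notag
\end{align}
where the last inequality absorbs the additive $1$ and the $\log(1/\delta_l)=\log(2l^2/\delta)$ term into the constant $6$ (here one uses $l\geq1$, $d\geq1$, and $\log(dl/\delta)\geq\log 2$ to verify the numerical constant). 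I expect the main obstacle to be the bookkeeping around the filtration for the subsampled sequence — justifying that the self-normalized bound applies verbatim to $\cC^l$ despite the data-dependent selection rule — rather than any of the norm or determinant estimates, which are routine. The choice of the constant $6$ in $\beta_l$ should be checked against the constants coming out of the self-normalized bound with $\delta_l=\delta/(2l^2)$, but it is loose enough that this is straightforward.
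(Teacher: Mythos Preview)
Your proposal is correct and follows essentially the same route as the paper: fix a level $l$, set up the filtration along the subsampled sequence $\cC^l$ (the paper does this by explicitly enumerating the entry times $k_i$ and taking $\mathcal{F}_i=\sigma(\xb_1,\dots,\xb_{k_{i+1}},r_1,\dots,r_{k_{i+1}-1})$, exactly to justify the point you flag as the main obstacle), apply the Abbasi-Yadkori--P\'al--Szepesv\'ari self-normalized bound, cap the size of $\cC^l$ via Lemma~\ref{lemma:bandit-levelsize}, and union-bound over $l$ with a summable $\delta_l$. The only cosmetic differences are that the paper invokes Theorem~2 of \citet{abbasi2011improved} (which already packages your ridge decomposition plus Theorem~1) and uses $\delta_l=\delta/(l(l+1))$ rather than your $\delta/(2l^2)$.
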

\begin{proof}
See Appendix~\ref{proof:lemma:uniform-ucb-bandit}.
\end{proof}
For simplicity, let $\cE$ denotes the event that the conclusion of Lemma \ref{lemma:uniform-ucb-bandit} holds. Therefore, Lemma \ref{lemma:uniform-ucb-bandit} suggests $ \Pr(\cE)\ge 1-\delta$.
\begin{proof}[Proof of Theorem \ref{thm-bandit}]
On the event $\cE$, for all level $l\in \NN$, round $k\in \NN$ and action $\xb \in \cD_k$, we have
\begin{align}
    (\wb_{k}^{l})^{\top}\xb+\beta_{l}\sqrt{\xb^{\top}(\bSigma_{k}^l)^{-1} \xb}-\la\bmu^*, \xb\ra &=(\wb_{k}^{l}-\bmu^*)^{\top}\xb+\beta_{l}\sqrt{\xb^{\top}(\bSigma_{k}^l)^{-1} \xb}\notag\\
    &\ge \beta_{l}\sqrt{\xb^{\top}(\bSigma_{k}^l)^{-1} \xb} -\|\wb_k^l-\bmu^*\|_{\bSigma_k^l}\|\xb\|_{(\bSigma_k^l)^{-1}}\notag\\
    &\ge \beta_l\sqrt{\xb^{\top}(\bSigma_{k}^l)^{-1}}-\beta_l \sqrt{\xb^{\top}(\bSigma_{k}^l)^{-1}} \notag\\
    &=0,\label{eq:8}
\end{align}
where the first inequality holds due to Cauchy-Schwarz inequality and the second inequality holds due to the definition of event $\cE$. \eqref{eq:8} implies that the estimated reward for each action $\xb\in \cD_k$ at level $l$: $(\wb_{k}^{l})^{\top}\xb+\beta_{l}\sqrt{\xb^{\top}(\bSigma_{k}^l)^{-1} \xb}$ is an upper confidence bound of the expected reward $\la\bmu^*, \xb\ra$. Thus, for each action $\xb \in \cD_k$, we have
\begin{align}
    \min_{1\leq l\leq S_k} (\wb_{k}^{l})^{\top}\xb+\beta_{l}\sqrt{\xb^{\top}(\bSigma_{k}^l)^{-1} \xb}\ge \min_{1\leq l\leq S_k}  \la\bmu^*, \xb\ra= \la\bmu^*, \xb\ra.\label{eq:9}
\end{align}
Therefore, for the sub-optimality gap at round $k$, we have
\begin{align}
     \max_{\xb \in \cD_k}\la \bmu^*, \xb\ra - \la \bmu^*,\xb_k\ra&\leq  \max_{\xb \in \cD_k} \min_{1\leq l\leq S_k} (\wb_{k}^{l})^{\top}\xb+\beta_{l}\sqrt{\xb^{\top}(\bSigma_{k}^l)^{-1} \xb}-  \la\bmu^*, \xb_k\ra\notag\\
    &= \min_{1\leq l\leq S_k} (\wb_{k}^{l})^{\top}\xb_k+\beta_{l}\sqrt{\xb_k^{\top}(\bSigma_{k}^l)^{-1} \xb_k}-  \la\bmu^*, \xb_k\ra,\label{eq:10}
\end{align}
where the first inequality holds due to \eqref{eq:9} and the second equality holds due to the policy in Algorithm \ref{algorithm-bandit} (line \ref{algorithm1-line8}).
 Thus, for each round $k\in \NN$, if the level $l_k>1$, we have 
\begin{align}
    \max_{\xb \in \cD_k}\la \bmu^*, \xb\ra - \la \bmu^*,\xb_k\ra&\leq  (\wb_{k}^{l_k-1})^{\top}\xb_k+\beta_{l_k-1}\sqrt{\xb_k^{\top}(\bSigma_{k}^{l_k-1})^{-1} \xb_k}-  \la\bmu^*, \xb_k\ra\notag\\
    &= (\wb_{k}^{l_k-1}-\bmu^*)^{\top}\xb_k+\beta_{l_k-1}\sqrt{\xb_k^{\top}(\bSigma_{k}^{l_k-1})^{-1} \xb_k}\notag\\
    &\leq \|\wb_k^{l_k-1}-\bmu^*\|_{\bSigma_k^{l_k-1}}\|\xb_k\|_{(\bSigma_k^{l_k-1})^{-1}}+\beta_{l_k-1}\sqrt{\xb_k^{\top}(\bSigma_{k}^{l_k-1})^{-1} \xb_k}\notag\\
    &\leq 2\beta_{l_k-1}  \sqrt{\xb_k^{\top}(\bSigma_{k}^{l_k-1})^{-1} \xb_k}\notag\\
    &\leq 2\beta_{l_k-1}\times 2^{-(l_k-1)},\notag
\end{align}
where the first inequality holds due to \eqref{eq:10} with the fact that $l_k-1\leq S_k$, the second inequality holds due to Cauchy-Schwarz inequality, the third inequality holds due to the definition of event $\cE$ and the last inequality holds due to the definition of level $l_k$ in Algorithm \ref{algorithm-bandit} (line \ref{algorithm1-line10} to line \ref{algorithm1-line11}). Since we set the parameter $\beta_l=6\sqrt{dl\log (dl/\delta)}$, there exists a large constant $C$ such that for any level $l$ satisfied $2^{l}\ge C\sqrt{d\log^2\big(d/(\delta\epsilon)\big)}/\epsilon$, we have $2\beta_{l-1}\times 2^{-(l-1)}\leq \epsilon$. For simplicity, we denote the minimum level $m=\bigg[\log\Big(C\sqrt{d\log^2\big(d/(\delta\epsilon)\big)}/\epsilon\Big)\bigg]$. Then for each round $k$, if level $l_k> m$, we have
\begin{align}
    \max_{\xb \in \cD_k}\la \bmu^*, \xb\ra - \la \bmu^*,\xb_k\ra&\leq 2\beta_{l_k-1}\times 2^{-(l_k-1)}\leq \epsilon.\label{eq:11}
\end{align}
Thus, for any $\epsilon>0$, we have
\begin{align}
    \sum_{k=1}^\infty\ind\bigg\{\max_{\xb \in \cD_k}\la \bmu^*, \xb\ra - \la \bmu^*, \xb_k\ra>\epsilon\bigg\} &\leq \sum_{k=1}^\infty\ind\big\{l_k\leq m\big\}\notag\\
    &= \sum_{k=1}^\infty\sum_{l=1}^m \ind\big\{l_k=l\big\}\notag\\
    &= \sum_{l=1}^m \sum_{k=1}^\infty\ind\big\{l_k=l\big\},\notag
\end{align}
where the inequality holds due to \eqref{eq:11}. According to Lemma \ref{lemma:bandit-levelsize}, the number of rounds with sub-optimality more than $\epsilon$ can be further bounded by
\begin{align}
\sum_{k=1}^\infty\ind\bigg\{\max_{\xb \in \cD_k}\la \bmu^*, \xb\ra - \la \bmu^*, \xb_k\ra>\epsilon\bigg\}&\leq
    \sum_{l=1}^m \sum_{k=1}^\infty\ind\big\{l_k=l\big\}
    \notag\\&\leq \sum_{l=1}^m 17dl4^l\notag\\&\leq C'd^2\log^3\big(d/(\delta\epsilon)\big)/\epsilon^2,\notag
\end{align}
where the second inequality holds due to Lemma \ref{lemma:bandit-levelsize} and the last inequality holds due to the definition of $m$ with the fact that $\sum_{l=1}^m l 4^l\leq m4^{m+1}$. Thus, we finish the proof of Theorem \ref{thm-bandit}.
\end{proof}

\section{Proof of Theorem \ref{thm:2}}\label{sec:mdp-lemma}
In this section, we provide the proof of Theorems \ref{thm:2} and for simplicity, let $\cC_{k,h}^l$ denote the index set $\cC_h^l$ at the beginning of episode $k$. We first propose the following lemmas.
\begin{lemma}\label{lemma:mdp-levelsize}
Suppose the parameter $\lambda$ satisfies $\lambda\ge 1$, then for each level $l\in \NN$ each stage $h\in[H]$ and each episode $k \in \NN$, the number of elements in the set $\cC_{k,h}^l$ is upper bounded by
\begin{align}
    |\cC_{k,h}^l|\leq 17dlh4^{l}.\notag
\end{align}
\end{lemma}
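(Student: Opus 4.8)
The plan is to mirror the proof of Lemma~\ref{lemma:bandit-levelsize} for linear bandits, adding bookkeeping for the level cap $l_{h-1}^k-1$ in the while-loop of Algorithm~\ref{algorithm-mdp} (Line~\ref{algorithm2-line20}). Fix a level $l$, a stage $h$ and an episode $k$, and go through the elements of $\cC_{k,h}^l$ in insertion order. An element inserted by episode $i$ was placed at level $l$ either because the uncertainty test failed there, i.e. $\sqrt{\bphi(s_h^i,a_h^i)^\top(\bSigma_{i,h}^l)^{-1}\bphi(s_h^i,a_h^i)}>2^{-l}$ (call it a \emph{threshold} element), or because the loop reached the cap, which forces $l_h^i=l_{h-1}^i=l$ (call it a \emph{capped} element); every element of $\cC_{k,h}^l$ is of one of these two types.

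First I would bound the number of threshold elements by running the elliptical potential argument \emph{only on them}. Let $\tilde\bSigma_i=\lambda\Ib+\sum_j\bphi(s_h^j,a_h^j)\bphi(s_h^j,a_h^j)^\top$, where the sum is over threshold elements $j$ inserted before $i$. Since $\bSigma_{i,h}^l\succeq\tilde\bSigma_i$ we still have $\|\bphi(s_h^i,a_h^i)\|_{\tilde\bSigma_i^{-1}}>2^{-l}$, and since $\tilde\bSigma_i\succeq\Ib$ and $\|\bphi(s_h^i,a_h^i)\|_2\le 1$ these norms are at most $1$. Summing over threshold elements and using the elliptical potential lemma (e.g., \citet{abbasi2011improved}) gives $n\cdot 4^{-l}<\sum_i\|\bphi(s_h^i,a_h^i)\|_{\tilde\bSigma_i^{-1}}^2\le 2d\log\bigl(1+n/(d\lambda)\bigr)$, where $n$ is the number of threshold elements; this self-referential inequality resolves to $n\le 17dl4^l$ exactly as in the bandit proof, and, crucially, the bound is independent of $h$.

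Next I would bound the capped elements using the structural fact that within an episode the chosen levels are non-increasing in $h$ (the loop at stage $h$ cannot push $l_h^i$ above $l_{h-1}^i$). Hence each capped element of $\cC_{k,h}^l$ with $h\ge 2$ comes from an episode $i$ with $l_{h-1}^i=l$, so the triple episode $i$ added at stage $h-1$ lies in $\cC_{k,h-1}^l$; as distinct episodes give distinct triples, the number of capped elements in $\cC_{k,h}^l$ is at most $|\cC_{k,h-1}^l|$. (At stage $h=1$ there is at most one ``level-opening'' element per level, since the running counter $S_k$ is non-decreasing, which is absorbed into the base case.) This yields the recursion $|\cC_{k,h}^l|\le 17dl4^l+|\cC_{k,h-1}^l|$, and induction on $h$ with base case $|\cC_{k,1}^l|\le 17dl4^l$ gives $|\cC_{k,h}^l|\le 17dlh4^l$.

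The main obstacle, I expect, is the case analysis in the first paragraph — showing that an element not stopped by the uncertainty test must satisfy $l_h^i=l_{h-1}^i$, so that the charging of capped elements to stage $h-1$ is legitimate — together with verifying that the constant $17$ survives the self-referential inequality $n\cdot 4^{-l}<2d\log(1+n/(d\lambda))$. Since $2\log 4\approx 2.77$ sits comfortably below $17$ there is slack, but one has to track the logarithmic terms carefully; these are the same delicate points already present in Lemma~\ref{lemma:bandit-levelsize}, here compounded by the stagewise recursion.
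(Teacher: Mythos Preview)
Your proposal is correct and matches the paper's proof: the paper encodes your ``threshold'' elements as the auxiliary set $\cB_{k,h}^l=\{i<k:\ l_h^i=l\text{ and }(h=1\text{ or }l_h^i<l_{h-1}^i)\}$, uses the containment $\cC_{k,h}^l\subseteq\cB_{k,h}^l\cup\cC_{k,h-1}^l$ (exactly your ``capped'' observation), and bounds each $|\cB_{k,j}^l|\le 17dl\,4^l$ by the same $\bSigma\succeq\tilde\bSigma$ plus elliptical-potential argument you describe, so that your recursion is the paper's unrolled sum $|\cC_{k,h}^l|\le\sum_{j\le h}|\cB_{k,j}^l|$. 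The one minor point is that the paper absorbs the single level-opening element at $h=1$ by starting the potential sum from $i=2$ (giving $(n-1)4^{-l}$ rather than $n\,4^{-l}$ on the left), which is the clean way to handle the $+1$ you flagged without touching the constant~$17$.
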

\begin{proof}
See Appendix~\ref{proof:lemma:mdp-levelsize}.
\end{proof}

\begin{lemma}\label{lemma:vector-range}
Under Assumption \ref{assumption-linear}, for each stage $h\in[H]$, each level $l\in \NN$ and each episode $k\in \NN$, the norm of weight vector $\wb_{k,h}^l$ can be upper bounded by
\begin{align}
    \|\wb_{k,h}^l\|_2\leq \frac{9d2^l \sqrt{H^3l}}{\sqrt{\lambda}}.\notag
\end{align}
\end{lemma}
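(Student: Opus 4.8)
The plan is to bound $\|\wb_{k,h}^l\|_2$ directly from its definition $\wb_{k,h}^l = (\bSigma_{k,h}^l)^{-1}\bbb_{k,h}^l$ by controlling the two factors separately. First I would write, for an arbitrary unit vector $\vb \in \RR^d$,
\begin{align}
    |\vb^\top \wb_{k,h}^l| = \Big|\vb^\top (\bSigma_{k,h}^l)^{-1} \sum_{i\in\cC_{k,h}^l} \bphi(s_h^i,a_h^i)\big[\reward_h(s_h^i,a_h^i)+\vvalue_{k,h+1}^l(s_{h+1}^i)\big]\Big|,\notag
\end{align}
and then apply Cauchy--Schwarz in the $(\bSigma_{k,h}^l)^{-1}$ inner product: the sum is bounded by $\big(\sum_{i\in\cC_{k,h}^l} |\reward_h+\vvalue_{k,h+1}^l(s_{h+1}^i)|\cdot\|\bphi(s_h^i,a_h^i)\|_{(\bSigma_{k,h}^l)^{-1}}\big)\cdot\|\vb\|_{(\bSigma_{k,h}^l)^{-1}}$. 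Since $\vvalue_{k,h+1}^l$ and $\reward_h$ are each bounded in $[0,H]$ (the former by the $\min\{H,\cdot\}$ clipping in Line~\ref{algorithm2-line10} propagated through the max-min in Line~\ref{algorithm2-line13}), the target values are at most $2H$ in absolute value, and $\|\vb\|_{(\bSigma_{k,h}^l)^{-1}} \le 1/\sqrt{\lambda}$ since $\bSigma_{k,h}^l \succeq \lambda\Ib$.

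The remaining factor is $\sum_{i\in\cC_{k,h}^l} \|\bphi(s_h^i,a_h^i)\|_{(\bSigma_{k,h}^l)^{-1}}$. Here I would use two facts: (i) by the level-selection rule (Line~\ref{algorithm2-line20}--\ref{algorithm2-line21}), every index $i$ that was placed in $\cC_h^l$ passed the test at level $l$, so $\|\bphi(s_h^i,a_h^i)\|_{(\bSigma_{i,h}^l)^{-1}} \le 2^{-l}$ where $\bSigma_{i,h}^l$ is the covariance matrix at the time $i$ was added; since $\bSigma_{k,h}^l \succeq \bSigma_{i,h}^l$, monotonicity of the inverse gives $\|\bphi(s_h^i,a_h^i)\|_{(\bSigma_{k,h}^l)^{-1}} \le \|\bphi(s_h^i,a_h^i)\|_{(\bSigma_{i,h}^l)^{-1}} \le 2^{-l}$; and (ii) by Lemma~\ref{lemma:mdp-levelsize}, $|\cC_{k,h}^l| \le 17 d l h 4^l$. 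Combining, $\sum_{i\in\cC_{k,h}^l} \|\bphi(s_h^i,a_h^i)\|_{(\bSigma_{k,h}^l)^{-1}} \le 17 d l h 4^l \cdot 2^{-l} = 17 d l h 2^l$. Putting the pieces together yields
\begin{align}
    |\vb^\top \wb_{k,h}^l| \le 2H \cdot 17 d l h 2^l \cdot \frac{1}{\sqrt\lambda} \le \frac{34 d l H^2 2^l}{\sqrt\lambda},\notag
\end{align}
using $h\le H$, and taking the supremum over unit $\vb$ bounds $\|\wb_{k,h}^l\|_2$. Some slack in the constant and the exponent of $H$ (the claimed bound has $\sqrt{H^3 l}$ rather than $H^2 l$) should be reconcilable by bookkeeping the $l$ versus $\sqrt{l}$ and $H^{3/2}$ versus $H^2$ factors more carefully — in particular one can be more precise about where the $h$ versus $H$ and the $\sqrt{l}$ savings come from, or simply note the stated bound is weaker.

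The main obstacle I anticipate is not conceptual but a matter of getting the clipped-value bound and the constants exactly right: one must verify that $\vvalue_{k,h+1}^l(s) \in [0,H]$ genuinely holds for all $l$ (this follows because each $\qvalue_{k,h+1}^i$ is clipped to $[\,\cdot\,,H]$ and, on the event $\Omega$, lower-bounded by $\qvalue^*\ge 0$, but off $\Omega$ one only has the upper clip, which is all that is needed here), and that the Cauchy--Schwarz step is applied in the correct norm. The triangle-inequality-plus-Cauchy--Schwarz bound on $\sum_i \|\bphi_i\|_{(\bSigma_{k,h}^l)^{-1}}$ via the per-element threshold $2^{-l}$ combined with the cardinality bound from Lemma~\ref{lemma:mdp-levelsize} is the crux, and everything else is routine.
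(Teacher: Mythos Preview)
Your step (i) contains a genuine error: the level-selection rule does \emph{not} give $\|\bphi(s_h^i,a_h^i)\|_{(\bSigma_{i,h}^l)^{-1}} \le 2^{-l}$ for $i \in \cC_h^l$. Read Lines~\ref{algorithm2-line20}--\ref{algorithm2-line21} again: the while loop increments $l_h^k$ as long as the bonus is \emph{small}, and exits, placing $k$ into $\cC_h^{l_h^k}$, precisely at the first level where the bonus becomes \emph{large} (or where the cap $l_h^k \le l_{h-1}^k-1$ is hit). Hence for indices that exit via the first condition you actually have the reverse inequality $\|\bphi(s_h^i,a_h^i)\|_{(\bSigma_{i,h}^l)^{-1}} > 2^{-l}$; for those that exit via the cap there is no bound at all on the level-$l$ bonus. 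What the rule does guarantee is $\|\bphi(s_h^i,a_h^i)\|_{(\bSigma_{i,h}^{l-1})^{-1}} \le 2^{-(l-1)}$ when $l\ge 2$, but that involves the covariance matrix of a \emph{different} level and says nothing about $\|\bphi_i\|_{(\bSigma_{k,h}^l)^{-1}}$. Your bound $\sum_{i\in\cC_{k,h}^l}\|\bphi_i\|_{(\bSigma_{k,h}^l)^{-1}} \le |\cC_{k,h}^l|\cdot 2^{-l}$ is therefore unsupported, and the argument collapses at this point.

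The paper avoids per-element bonus control entirely. It bounds $\|\wb_{k,h}^l\|_2^2$ via $\|\sum_i v_i\|_2^2 \le m\sum_i\|v_i\|_2^2$ and $\|(\bSigma_{k,h}^l)^{-1}\bphi_i\|_2^2 \le \lambda^{-1}\,\bphi_i^\top(\bSigma_{k,h}^l)^{-1}\bphi_i$, and then controls the sum \emph{globally} through the trace identity
\[
\sum_{i\in\cC_{k,h}^l}\bphi_i^\top(\bSigma_{k,h}^l)^{-1}\bphi_i \;=\; \text{tr}\Big((\bSigma_{k,h}^l)^{-1}\sum_i \bphi_i\bphi_i^\top\Big) \;=\; \sum_{j=1}^d \frac{\Lambda_j}{\Lambda_j+\lambda} \;\le\; d,
\]
where $\Lambda_j$ are the eigenvalues of $\sum_i\bphi_i\bphi_i^\top$. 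This yields $\|\wb_{k,h}^l\|_2^2 \le 4mH^2d/\lambda$, and the cardinality bound $m\le 17dlH4^l$ from Lemma~\ref{lemma:mdp-levelsize} then gives exactly the stated $9d2^l\sqrt{H^3l}/\sqrt{\lambda}$. (Incidentally, that bound is \emph{tighter} than the $34dlH^2 2^l/\sqrt{\lambda}$ your argument would have produced had step (i) been valid, not weaker as you suggest.) If you want to repair your own route, replace the per-element step by one more Cauchy--Schwarz, $\sum_i\|\bphi_i\|_{(\bSigma_{k,h}^l)^{-1}} \le \sqrt{m}\,\big(\sum_i\|\bphi_i\|^2_{(\bSigma_{k,h}^l)^{-1}}\big)^{1/2}$, and then invoke the same trace bound; this recovers the paper's estimate.
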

\begin{proof}
See Appendix~\ref{proof:lemma:vector-range}.
\end{proof}

\begin{lemma}\label{lemma:mdp-concentration}
Suppose the parameter
$\lambda=1$, then there exists a large constant $C$, such that with probability $1-\delta/2$, for all stage $h\in[H]$, all episode $k\in \NN$ and all level $l \in \NN$, we have
\begin{align}
    \bigg\|\sum_{i\in \cC_{k,h}^l}\bphi(s_h^i,a_h^i)\big[\vvalue_{k,h+1}^l(s_{h+1}^i)-[\PP_h\vvalue_{k,h+1}^l](s_h^i,a_h^i) \big]\bigg\|_{(\bSigma_{k,h}^l)^{-1}}\leq C dHl\sqrt{ \log(dlH\beta_l^2/\delta)}.\notag
\end{align}
\end{lemma}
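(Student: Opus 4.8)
The plan is to prove Lemma~\ref{lemma:mdp-concentration} by combining a self-normalized concentration inequality for vector-valued martingales (Abbasi-Yadkori--style) with a union bound over a suitable covering of the class of value functions that can arise as $\vvalue_{k,h+1}^l$. The main difficulty here, as usual in linear MDP analysis, is that $\vvalue_{k,h+1}^l$ depends on the past data (through $\wb_{k,h+1}^{l'}$ and $\bSigma_{k,h+1}^{l'}$ for $l'\le l$), so the summands are not a martingale difference sequence for a \emph{fixed} function, and we cannot directly apply the self-normalized bound. The standard remedy is to show the bound holds uniformly over an $\epsilon$-net of the relevant function class and then pay only a logarithmic price in the covering number.

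\textbf{Step 1: identify the function class.} First I would characterize the form of $\vvalue_{k,h+1}^l$. By Lines~\ref{algorithm2-line10} and~\ref{algorithm2-line13}, each $\vvalue_{k,h+1}^l(s) = \max_a \min_{1\le i\le l} \min\{H, (\wb_{k,h+1}^i)^\top \bphi(s,a) + \beta_i \sqrt{\bphi(s,a)^\top (\bSigma_{k,h+1}^i)^{-1}\bphi(s,a)}\}$. So $\vvalue_{k,h+1}^l$ lies in the class
\begin{align}
    \cV_l = \Big\{ s\mapsto \max_a \min_{1\le i\le l}\min\big\{H, \wb_i^\top \bphi(s,a) + \beta_i\sqrt{\bphi(s,a)^\top \bA_i \bphi(s,a)}\big\} : \|\wb_i\|_2\le L,\ \bA_i\succeq 0,\ \|\bA_i\|_2 \le \lambda^{-1} \Big\},\notag
\end{align}
where $L = 9d2^l\sqrt{H^3 l}/\sqrt\lambda$ is the norm bound from Lemma~\ref{lemma:vector-range} and $\|\bA_i\|_2 \le \lambda^{-1}$ because $\bSigma_{k,h+1}^i \succeq \lambda\Ib$.

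\textbf{Step 2: covering number.} Next I would bound the $\epsilon$-covering number of $\cV_l$ in the $\ell_\infty$ norm. The $\max_a\min_i$ operations are $1$-Lipschitz (nonexpansive) in the sup-norm of their arguments, so it suffices to cover each of the $l$ component functions $s\mapsto \wb_i^\top\bphi(s,a) + \beta_i\sqrt{\bphi^\top \bA_i\bphi}$ uniformly over $(s,a)$. Using $\|\bphi\|_2\le 1$, a standard argument (cf. Lemma D.6 of \citet{jin2019provably}) gives that this component class has $\epsilon$-covering number at most $(1 + 4L/\epsilon)^d \cdot (1 + 8\beta_i^2/(\lambda\epsilon^2))^{d^2}$, so $\log|\cN_\epsilon(\cV_l)| = \tilde O(d^2 l \log(L\beta_l/\epsilon))$ after taking the product over the $l$ levels and plugging in $\beta_i \le \beta_l$.

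\textbf{Step 3: self-normalized bound plus union bound.} For each \emph{fixed} $V\in\cN_\epsilon(\cV_l)$ and fixed $(h,l)$, the sequence $\bphi(s_h^i,a_h^i)\big[V(s_{h+1}^i) - [\PP_h V](s_h^i,a_h^i)\big]$ indexed by the triples added to $\cC_h^l$ (in order of insertion) is a martingale difference sequence with respect to the natural filtration, bounded in norm by $H$. Applying Theorem~1 of \citet{abbasi2011improved} (self-normalized concentration), with probability $1-\delta'$ the norm $\|\sum_{i\in\cC_{k,h}^l}\bphi(s_h^i,a_h^i)[V(s_{h+1}^i) - [\PP_h V](s_h^i,a_h^i)]\|_{(\bSigma_{k,h}^l)^{-1}}$ is bounded by $\tilde O\big(H\sqrt{d l \log(1/\delta')}\big)$ for \emph{all} $k$ simultaneously (the $\log\det$ term in the self-normalized bound contributes $d\log(1 + |\cC_h^l|/(\lambda d)) = \tilde O(dl)$ using Lemma~\ref{lemma:mdp-levelsize}). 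Then I would union-bound over $V\in\cN_\epsilon(\cV_l)$, over $h\in[H]$, and over $l\in\NN$ (the latter via a $\delta/(2H)\cdot 2^{-l}$ allocation so the series converges), setting $\delta' = \delta 2^{-l}/(2H|\cN_\epsilon(\cV_l)|)$. Finally, for the actual $V=\vvalue_{k,h+1}^l$, pick the nearest net point $\tilde V$: since $\|V - \tilde V\|_\infty \le \epsilon$, the approximation error is at most $\|\sum_{i\in\cC_{k,h}^l}\bphi(s_h^i,a_h^i)\cdot 2\epsilon\|_{(\bSigma_{k,h}^l)^{-1}} \le 2\epsilon\sqrt{|\cC_{k,h}^l|/\lambda} = \tilde O(\epsilon\sqrt{dl}2^l)$, so choosing $\epsilon = 1/(2^l\sqrt{dlH})$ or similar makes this negligible. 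Collecting terms: the dominant contribution is $\sqrt{dl}\cdot H\sqrt{\log|\cN_\epsilon(\cV_l)| + \log(H2^l/\delta)} = \tilde O\big(H\sqrt{dl}\cdot\sqrt{d^2 l \log(dlH\beta_l/\delta)}\big) = \tilde O\big(dHl\sqrt{\log(dlH\beta_l^2/\delta)}\big)$, matching the stated bound.

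\textbf{Main obstacle.} The crux is Step 2, controlling the covering number of the $\max_a\min_{1\le i\le l}$ class: one must verify that nesting $l$ UCB-type functions inside a min only multiplies the log-covering number by $l$ (not worse), and that the matrix arguments $\bSigma^{-1}$ can be covered at resolution $\epsilon$ with $\tilde O(d^2)$ bits despite ranging over all PSD matrices with bounded spectral norm — here the key is that $\sqrt{\bphi^\top\bA\bphi}$ is Lipschitz in $\bA$ under, say, the Frobenius norm on the relevant bounded set. Everything else is a fairly mechanical assembly of the self-normalized inequality, the determinant-trace bound, and the union bounds, using Lemmas~\ref{lemma:mdp-levelsize} and~\ref{lemma:vector-range} already established.
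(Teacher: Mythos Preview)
Your proposal is correct and follows essentially the same route as the paper: identify the function class $\cV_l$ using the norm bound from Lemma~\ref{lemma:vector-range}, bound its $\ell_\infty$ covering number by exploiting that $\max_a\min_{1\le i\le l}$ is nonexpansive (the paper packages this as a separate lemma, Lemma~\ref{lemma:covering-number}), apply the self-normalized bound for the martingale along the insertion times into $\cC_h^l$ (the paper invokes Lemma~D.4 of \citet{jin2019provably} which already combines the self-normalized inequality with the covering argument, whereas you unpack it by hand), choose $\epsilon$ of order $4^{-l}$ so the approximation error is $O(1)$ via the size bound from Lemma~\ref{lemma:mdp-levelsize}, and finally union-bound over $h\in[H]$ and $l\in\NN$ with a summable $\delta$-allocation (the paper uses $\delta/(2Hl(l+1))$, you use $\delta 2^{-l}/(2H)$, which is immaterial). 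The only point worth making explicit in your write-up is the one the paper spells out: the level $l_h^k$ is fixed \emph{before} $s_{h+1}^k$ is revealed, so the filtration generated by the data up to and including $(s_h^{k_i},a_h^{k_i})$ indeed makes the summands a martingale difference sequence.
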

\begin{proof}
See Appendix~\ref{proof:lemma:mdp-concentration}.
\end{proof}

For simplicity, let $\cE$ denote the event that the conclusion of Lemma \ref{lemma:mdp-concentration} holds. Therefore, Lemma \ref{lemma:mdp-concentration} shows that $ \Pr(\cE)\ge 1-\delta/2$.

\begin{lemma}\label{lemma:mdp-transition}
Suppose $\lambda=1$ and $\beta_l=C dHl \sqrt{ \log(dlH/\delta)}$ with a large constant $C$, then on the event $\cE$, for all state-action pair $(s,a)\in \cS\times \cA$, stage $h\in[H]$, episode $k\in \NN$, level $l\in \NN$ and any policy $\pi$, we have 
\begin{align}
    (\wb_{k,h}^l)^{\top}\bphi(s,a) - \qvalue_h^{\pi}(s,a) = \big[\PP_h(\vvalue_{k,h+1}^l - \vvalue_{h+1}^{\pi})\big](s,a) + \Delta,\notag
\end{align}
where $|\Delta| \leq \beta_l \sqrt{\bphi(s,a)^{\top}(\Sigma_{k,h}^l)^{-1}\bphi(s,a)}.$
\end{lemma}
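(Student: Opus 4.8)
The plan is to expand $(\wb_{k,h}^l)^\top \bphi(s,a)$ using the closed form $\wb_{k,h}^l = (\bSigma_{k,h}^l)^{-1}\bbb_{k,h}^l$ and then compare it with the Bellman equation $\qvalue_h^\pi(s,a) = \reward_h(s,a) + [\PP_h \vvalue_{h+1}^\pi](s,a)$. First I would write $\bbb_{k,h}^l = \sum_{i\in\cC_{k,h}^l}\bphi(s_h^i,a_h^i)[\reward_h(s_h^i,a_h^i) + \vvalue_{k,h+1}^l(s_{h+1}^i)]$, and use the linear MDP assumption to replace $\reward_h(s_h^i,a_h^i) = \la\bphi(s_h^i,a_h^i),\bmu_h\ra$ and to note that $[\PP_h \vvalue_{h+1}^\pi](s_h^i,a_h^i) = \la\bphi(s_h^i,a_h^i),\bnu_h^\pi\ra$ for an appropriate integrated parameter $\bnu_h^\pi = \int \vvalue_{h+1}^\pi(s')\,d\btheta_h(s')$. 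Hence $\qvalue_h^\pi(s,a) = \la\bphi(s,a),\bmu_h + \bnu_h^\pi\ra$. Then
\begin{align}
(\wb_{k,h}^l)^\top\bphi(s,a) - \qvalue_h^\pi(s,a) &= (\bSigma_{k,h}^l)^{-1}\bbb_{k,h}^l\,{}^\top\bphi(s,a) - \la\bphi(s,a),\bmu_h+\bnu_h^\pi\ra.\notag
\end{align}

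Next I would split $\vvalue_{k,h+1}^l(s_{h+1}^i)$ into a ``mean'' part $[\PP_h\vvalue_{k,h+1}^l](s_h^i,a_h^i)$ plus a ``noise'' part $\vvalue_{k,h+1}^l(s_{h+1}^i) - [\PP_h\vvalue_{k,h+1}^l](s_h^i,a_h^i)$, and write $[\PP_h\vvalue_{k,h+1}^l](s_h^i,a_h^i) = \la\bphi(s_h^i,a_h^i), \bnu_{k,h}^l\ra$ where $\bnu_{k,h}^l = \int\vvalue_{k,h+1}^l(s')\,d\btheta_h(s')$. Collecting terms, $\bbb_{k,h}^l = \sum_i \bphi(s_h^i,a_h^i)\bphi(s_h^i,a_h^i)^\top(\bmu_h + \bnu_{k,h}^l) + \xi_{k,h}^l$ where $\xi_{k,h}^l := \sum_{i\in\cC_{k,h}^l}\bphi(s_h^i,a_h^i)[\vvalue_{k,h+1}^l(s_{h+1}^i) - [\PP_h\vvalue_{k,h+1}^l](s_h^i,a_h^i)]$ is exactly the quantity controlled by Lemma \ref{lemma:mdp-concentration} on $\cE$. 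Using $\sum_i\bphi\bphi^\top = \bSigma_{k,h}^l - \lambda\Ib$, this gives the standard decomposition
\begin{align}
(\wb_{k,h}^l)^\top\bphi(s,a) - \la\bphi(s,a),\bmu_h+\bnu_{k,h}^l\ra &= -\lambda\la\bphi(s,a),(\bSigma_{k,h}^l)^{-1}(\bmu_h+\bnu_{k,h}^l)\ra + \la\bphi(s,a),(\bSigma_{k,h}^l)^{-1}\xi_{k,h}^l\ra.\notag
\end{align}
Since $[\PP_h(\vvalue_{k,h+1}^l - \vvalue_{h+1}^\pi)](s,a) = \la\bphi(s,a),\bnu_{k,h}^l - \bnu_h^\pi\ra$, moving this to the left-hand side isolates $\Delta = -\lambda\la\bphi(s,a),(\bSigma_{k,h}^l)^{-1}(\bmu_h+\bnu_{k,h}^l)\ra + \la\bphi(s,a),(\bSigma_{k,h}^l)^{-1}\xi_{k,h}^l\ra$.

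Finally I would bound $|\Delta|$ by Cauchy--Schwarz in the $(\bSigma_{k,h}^l)^{-1}$ norm: each of the two terms is at most $\|\bphi(s,a)\|_{(\bSigma_{k,h}^l)^{-1}}$ times, respectively, $\lambda\|\bmu_h+\bnu_{k,h}^l\|_{(\bSigma_{k,h}^l)^{-1}} \le \sqrt{\lambda}\|\bmu_h+\bnu_{k,h}^l\|_2$ and $\|\xi_{k,h}^l\|_{(\bSigma_{k,h}^l)^{-1}}$. For the first term I would use the norm bounds from Assumption \ref{assumption-linear} ($\|\bmu_h\|_2\le\sqrt d$ and $\|\bnu_{k,h}^l\|_2 \le \|\vvalue_{k,h+1}^l\|_\infty\sqrt d \le H\sqrt d$ since $\vvalue_{k,h+1}^l\in[0,H]$), giving $O(\sqrt{d}H)$ with $\lambda=1$; for the second term I would invoke Lemma \ref{lemma:mdp-concentration}, which gives $O(dHl\sqrt{\log(dlH\beta_l^2/\delta)})$. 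Since $\beta_l = CdHl\sqrt{\log(dlH/\delta)}$, the $\log(\beta_l^2)$ factor is absorbed into the log and, after choosing $C$ large enough, the sum of the two terms is at most $\beta_l$, yielding $|\Delta| \le \beta_l\|\bphi(s,a)\|_{(\bSigma_{k,h}^l)^{-1}}$. The main obstacle is the bookkeeping around the concentration term: one must make sure Lemma \ref{lemma:mdp-concentration} applies to $\vvalue_{k,h+1}^l$ (which is what its uniform-over-function-class statement is for) and that the constant $C$ in $\beta_l$ is chosen to dominate both the regularization bias $\sqrt\lambda\,\|\bmu_h+\bnu_{k,h}^l\|_2 = O(\sqrt d H)$ and the concentration bound simultaneously.
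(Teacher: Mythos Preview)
Your proposal is correct and follows essentially the same route as the paper's proof: both expand $\wb_{k,h}^l$ via the ridge-regression closed form, invoke the linear MDP structure to write $\reward_h$ and $[\PP_h V]$ as linear functions of $\bphi$, separate the martingale noise term (handled by Lemma~\ref{lemma:mdp-concentration} on the event $\cE$) from the regularization bias (bounded via Cauchy--Schwarz and $\|\bmu_h\|_2,\|\btheta_h(\cS)\|_2\le\sqrt d$), and then absorb both into $\beta_l$ by choosing $C$ large. The only cosmetic difference is that the paper centers its decomposition at $\wb_h^\pi$ and splits the resulting cross term into $J_1,J_2$, whereas you center directly at $\bmu_h+\bnu_{k,h}^l$, which merges the paper's $I_1$ and $J_2$ into a single regularization term; the resulting bounds are identical up to constants.
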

\begin{proof}
See Appendix~\ref{proof:lemma:mdp-transition}.
\end{proof}

\begin{lemma}\label{lemma:mdp-upper-confidence}
On the event $\cE$, for all state-action pair $(s,a)\in \cS\times \cA$, stage $h\in[H]$, episode $k\in \NN$ and level $l\in \NN$, we have
\begin{align}
    \qvalue_{k,h}^l(s,a)\ge \qvalue_{h}^{*}(s,a), \vvalue_{k,h}^l(s)\ge \vvalue_h^*(s).\notag
\end{align}
\end{lemma}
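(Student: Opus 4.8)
The plan is to prove Lemma~\ref{lemma:mdp-upper-confidence} by backward induction on the stage $h$, going from $h=H+1$ down to $h=1$, with the induction hypothesis being that $\qvalue_{k,h'}^l(s,a)\ge \qvalue_{h'}^*(s,a)$ and $\vvalue_{k,h'}^l(s)\ge \vvalue_{h'}^*(s)$ hold for all $(s,a)$, all $k\in\NN$ and all $l\in\NN$ at every stage $h'>h$. The base case $h=H+1$ is immediate since $\vvalue_{k,H+1}^l\equiv 0 = \vvalue_{H+1}^*$. All of the argument takes place on the event $\cE$, so Lemma~\ref{lemma:mdp-transition} is available.

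For the inductive step, fix $h$ and assume the claim at stage $h+1$. Apply Lemma~\ref{lemma:mdp-transition} with the optimal policy $\pi=\pi^*$ (so $\qvalue_h^{\pi^*}=\qvalue_h^*$ and $\vvalue_{h+1}^{\pi^*}=\vvalue_{h+1}^*$): for every level $l$ and every $(s,a)$,
\begin{align}
(\wb_{k,h}^l)^{\top}\bphi(s,a) - \qvalue_h^{*}(s,a) = \big[\PP_h(\vvalue_{k,h+1}^l - \vvalue_{h+1}^{*})\big](s,a) + \Delta,\notag
\end{align}
with $|\Delta|\le \beta_l\sqrt{\bphi(s,a)^{\top}(\bSigma_{k,h}^l)^{-1}\bphi(s,a)}$. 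By the induction hypothesis at stage $h+1$, $\vvalue_{k,h+1}^l(s')\ge \vvalue_{h+1}^*(s')$ pointwise, so $\big[\PP_h(\vvalue_{k,h+1}^l-\vvalue_{h+1}^*)\big](s,a)\ge 0$. Hence $(\wb_{k,h}^l)^{\top}\bphi(s,a)\ge \qvalue_h^*(s,a) - \beta_l\sqrt{\bphi(s,a)^{\top}(\bSigma_{k,h}^l)^{-1}\bphi(s,a)}$, and therefore
\begin{align}
(\wb_{k,h}^{l})^{\top}\bphi(s,a)+\beta_{l}\sqrt{\bphi(s,a)^{\top}(\bSigma_{k,h}^l)^{-1} \bphi(s,a)}\ge \qvalue_h^*(s,a).\notag
\end{align}
Since $\qvalue_h^*(s,a)\le H$ always, taking the minimum with $H$ in Line~\ref{algorithm2-line10} preserves the inequality, giving $\qvalue_{k,h}^l(s,a)\ge \qvalue_h^*(s,a)$ for every level $l$. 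This is the first claim at stage $h$.

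For the value functions, recall $\vvalue_{k,h}^l(s)=\max_a\min_{1\le i\le l}\qvalue_{k,h}^i(s,a)$. Since each $\qvalue_{k,h}^i(s,a)\ge \qvalue_h^*(s,a)$ by what we just proved, we get $\min_{1\le i\le l}\qvalue_{k,h}^i(s,a)\ge \qvalue_h^*(s,a)$, and then $\vvalue_{k,h}^l(s)=\max_a\min_{1\le i\le l}\qvalue_{k,h}^i(s,a)\ge \max_a \qvalue_h^*(s,a)=\vvalue_h^*(s)$. This closes the induction. The only genuinely delicate point is making sure the quantifier structure is handled correctly: the induction is on $h$ only, and at each $h$ the statement is established simultaneously for all $k$ and all $l$, which is fine because Lemma~\ref{lemma:mdp-transition} already holds uniformly in $k$ and $l$ on $\cE$, and the min-over-levels and the truncation at $H$ are both monotone operations. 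I do not anticipate a real obstacle here; the substantive work was already done in Lemmas~\ref{lemma:mdp-concentration} and~\ref{lemma:mdp-transition}.
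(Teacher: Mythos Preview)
Your proposal is correct and follows essentially the same approach as the paper's own proof: backward induction on $h$ with base case $\vvalue_{k,H+1}^l=0=\vvalue_{H+1}^*$, applying Lemma~\ref{lemma:mdp-transition} with $\pi=\pi^*$ together with the induction hypothesis to obtain $\qvalue_{k,h}^l\ge \qvalue_h^*$, and then using the max--min structure of $\vvalue_{k,h}^l$ to conclude $\vvalue_{k,h}^l\ge \vvalue_h^*$. Your remark about the quantifier structure (inducting on $h$ only, uniformly over $k$ and $l$) is exactly the right reading, and the monotonicity of the truncation at $H$ and the min over levels is handled just as in the paper.
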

\begin{proof}
See Appendix~\ref{proof:lemma:mdp-upper-confidence}.
\end{proof}

Now we begin to prove Theorem \ref{thm:2}.
\begin{proof}[Proof of Theorem \ref{thm:2}]
Firstly, we focus on the special case that $\epsilon=H 2^{-i} (i\in \NN)$. Since $\epsilon=H 2^{-i}$ and we set the parameter $\beta_l=C dHl \sqrt{ \log(dlH/\delta)}$, there exists a large constant $C'$ such that for any level $l$ satisfied $2^{l}\ge C' dH^2 \log^{1.5}(dH/(\delta\epsilon))/\epsilon$, we have
\begin{align}
    4\beta_{l-1}2^{-l}= C dHl\sqrt{ \log(dlH\beta_{l-1}^2/\delta)}2^{-l} \leq \epsilon/(2H).\notag
\end{align}
For simplicity, we denote the maximum level $l'$ as $l'=\bigg[\log\Big(C' dH^2 \log^{1.5}(dH/(\delta\epsilon))/\epsilon\Big)\bigg]$.

 Now, let $k_0=0$, and for each $i\in \NN$, we denote $k_i$ as the minimum index of the episode where the sub-optimality gap is more than $\epsilon$, such that
\begin{align}
    k_i&=\min \Big\{k: k>k_{i-1}, \vvalue_1^*(s_1^k)-\vvalue_1^{\pi_k}(s_1^k)\ge \epsilon\Big\}.\label{eq:ti}
\end{align}
Now, we denote the set $K=\{k_i : i\in \NN,k_i< +\infty\}$ and we assume $K=\{k_1,..,k_m\}$. According to the definition of $k_i$ in \eqref{eq:ti}, we have
\begin{align}
    \sum_{i=1}^m \vvalue_1^*(s_1^{k_i}) - \vvalue_1^{\pi_{k_i}}(s_1^{k_i})\ge m \epsilon. \label{eq:21} 
\end{align}
On the other hand, for each episode $k \in \NN$ with total level $S_k$ at the beginning of episode $k$, we have
\begin{align}
    \vvalue_1^*(s_1^k) - \vvalue_1^{\pi_k}(s_1^k)&= \max_{a} \qvalue_1^*(s_1^k,a) - \qvalue_1^{\pi_k}(s_1^k,a_1^k)\notag\\
    &\leq \max_{a} \min_{1\leq l \leq S_k} \qvalue_{k,1}^{l}(s_1^k,a)- \qvalue_1^{\pi_k}(s_1^k,a_1^k)\notag\\
    &= \min_{1\leq l \leq S_k}\qvalue_{k,1}^{l}(s_1^k,a_1^k)- \qvalue_1^{\pi_k}(s_1^k,a_1^k)\notag\\
    &\leq \qvalue_{k,1}^{l_1^k-1}(s_1^k,a_1^k)- \qvalue_1^{\pi_k}(s_1^k,a_1^k), \label{eq:22}
\end{align}
where the first inequality holds due to Lemma \ref{lemma:mdp-upper-confidence}, the third equation holds due to the policy in Algorithm \ref{algorithm-mdp}  and the last inequality holds due to the fact that $l_1^k-1\leq S_k$. Furthermore, for each stage $h\in[H]$ and each episode $k \in \NN$, we have
\begin{align}
    &\qvalue_{k,h}^{l_h^k-1}(s_h^k,a_h^k)- \qvalue_h^{\pi_k}(s_h^k,a_h^k)\notag\\
    &\leq (\wb_{k,h}^{l_h^k-1})^{\top}\bphi(s_h^k,a_h^k)+\beta_{l_h^k-1}\sqrt{\bphi(s_h^k,a_h^k)^{\top}(\bSigma_{k,h}^{l_h^k-1})^{-1} \bphi(s,a)} - \qvalue_h^{\pi_k}(s_h^k,a_h^k)\notag\\
    &\leq 2\beta_{l_h^k-1}\sqrt{\bphi(s_h^k,a_h^k)^{\top}(\bSigma_{k,h}^{l_h^k-1})^{-1} \bphi(s_h^k,a_h^k)} +\big[\PP_h(\vvalue_{k,h+1}^l - \vvalue_{h+1}^{\pi_k})\big](s_h^k,a_h^k)\notag\\
    &\leq 2\beta_{l_h^k-1} 2^{-(l_h^k-1)}+\big[\PP_h(\vvalue_{k,h+1}^{l_h^k-1} - \vvalue_{h+1}^{\pi_k})\big](s_h^k,a_h^k)\notag\\
    &= 4\beta_{l_h^k-1} 2^{-l_h^k} + \underbrace{\big[\PP_h(\vvalue_{k,h+1}^{l_h^k-1} - \vvalue_{h+1}^{\pi_k})\big](s_h^k,a_h^k)-\big(\vvalue_{k,h+1}^{l_h^k-1}(s_{h+1}^k) - \vvalue_{h+1}^{\pi_k}(s_{h+1}^k)\big)}_{\Delta_{k,h}}\notag\\
    &\qquad+\vvalue_{k,h+1}^{l_{h}^k-1}(s_{h+1}^k) - \vvalue_{h+1}^{\pi_k}(s_{h+1}^k), \label{eq:23}
\end{align}
where the first inequality holds due to the definition of value function $\qvalue_{k,h}^l$ in Algorithm \ref{algorithm-mdp}, the second inequality holds due to Lemma \ref{lemma:mdp-transition} and the last inequality holds due to the definition of level $l_h^k$ in Algorithm \ref{algorithm-mdp}. Furthermore, for the term $\vvalue_{k,h+1}^{l_{h}^k-1}(s_{h+1}^k)$, it can be upper bounded by
\begin{align}
    \vvalue_{k,h+1}^{l_{h}^k-1}(s_{h+1}^k)&= \max_a\min_{1\leq l \leq l_{h}^k-1} \qvalue_{k,h+1}^{l}(s_{h+1}^k,a)\notag\\
    &= \min_{1\leq l \leq l_{h}^k-1} \qvalue_{k,h+1}^{l}(s_{h+1}^k,a_{h+1}^k)\notag\\
    &\leq  \qvalue_{k,h+1}^{l_{h+1}^k-1}(s_{h+1}^k,a_{h+1}^k),\label{eq:200}
\end{align}
where the inequality holds due to the fact that $l_{h+1}^k-1\leq l_{h}^k-1$.Substituting \eqref{eq:200} in to \eqref{eq:23} and taking a summation of \eqref{eq:23} with all stage $h\in[H]$, we have
\begin{align}
    \vvalue_1^*(s_1^k) - \vvalue_1^{\pi_k}(s_1^k)&\leq \qvalue_{k,1}^{l_1^k-1}(s_1^k,a_1^k)- \qvalue_1^{\pi_k}(s_1^k,a_1^k)
    \leq \sum_{h=1}^H 4\beta_{l_h^k-1} 2^{-l_h^k}+\sum_{h=1}^H \Delta_{k,h}.\label{eq:24}
\end{align}
Taking a summation of \eqref{eq:24} over all episode $k_i \in K$, we have
\begin{align}
    \sum_{i=1}^m \vvalue_1^*(s_1^{k_i}) - \vvalue_1^{\pi_{k_i}}(s_1^{k_i}) \leq \underbrace{\sum_{i=1}^ m\sum_{h=1}^H 4\beta_{l_h^{k_i}-1} 2^{-l_h^{k_i}}}_{I_1}+\underbrace{\sum_{i=1}^ m\sum_{h=1}^H\Delta_{k_i,h}}_{I_2}.\label{eq:25}
\end{align}
Since $4\beta_{l-1}2^{-l}\leq \epsilon/(2H)$ holds for all level $l> l'$, the term $I_1$ can be upper bounded by
\begin{align}
I_1&=\sum_{i=1}^ m\sum_{h=1}^H 4\beta_{l_h^{k_i}-1} 2^{-l_h^{k_i}}\notag\\ 
&\leq \sum_{i=1}^ m\sum_{h=1}^H \Big(\ind\{l_h^{k_i}\leq l'\}4\beta_{l_h^{k_i}-1} 2^{-l_h^{k_i}} +\frac{\epsilon}{2H}\Big) \notag\\
&= \sum_{i=1}^m \sum_{h=1}^H \Big(\sum_{l=1}^{l'} \ind\{l_h^{k_i}=l\}4\beta_{l-1}2^{-l} +\frac{\epsilon}{2H}\Big)\notag\\
&= \frac{m\epsilon}2 +\sum_{h=1}^H\sum_{l=1}^{l'} 4\beta_{l-1} 2^{-l} \sum_{i=1}^m \ind\{l_h^{k_i}=l\}.\notag
\end{align}
According to Lemma \ref{lemma:mdp-levelsize}, the number of elements added into the set $\cC_h^l$ is upper bounded by $ |\cC_h^l|\leq 17dlh4^{l}$ and it implies that $\sum_{i=1}^m \ind\{l_h^{k_i}=l\}\leq \sum_{k=1}^{+\infty} \ind\{l_h^{k}=l\}\leq 17dlh4^{l}$. Thus, we have
\begin{align}
   I_1 &\leq \frac{m\epsilon}2 +\sum_{h=1}^H\sum_{l=1}^{l'} 4\beta_{l-1} 2^{-l} \sum_{i=1}^m \ind\{l_h^{k_i}=l\}\notag\\
&\leq \frac{m\epsilon}2 + \sum_{h=1}^H\sum_{l=1}^{l'} 4\beta_{l-1} 2^{-l} \times 17dlh4^l\notag\\
&\leq \frac{m\epsilon}2 + 136\beta_{l'-1}2^{l'} d{l'} H^2.\label{eq:300}
\end{align}
According to the definition of level $l'$ and parameter $\beta_{l'}$, there exist a large constant $C''$ such that 
$I_1\leq m\epsilon/2+C'' d^3H^5 \log^4(dH/(\delta\epsilon))/\epsilon$.

For the term $I_2$, according to Lemma \ref{lemma:azuma}, for any fixed number $n\in \NN$ and $\epsilon=H/2^i$, with probability at least $1-\delta/\big(2i(i+1)n(n+1)\big)$, we have
\begin{align}
    \sum_{i=1}^ n\sum_{h=1}^H\Delta_{k_i,h}\leq H\sqrt{2Hn\log \frac{2i(i+1)n(n+1)}{\delta}}.\notag
\end{align}
Therefore, taking a union bound,  
with probability at least $1-\delta/\big(2i(i+1)\big)$, for all $n\in \NN$, we have
\begin{align}
    \sum_{i=1}^ n\sum_{h=1}^H\Delta_{k_i,h}\leq H\sqrt{2Hn\log \frac{2i(i+1)n(n+1)}{\delta}}.\notag
\end{align}
Thus, for the term $I_2$ and $\epsilon=H/2^i$, with probability at least $1-\delta/\big(2i(i+1)\big)$, we have
\begin{align}
    I_2=\sum_{i=1}^ m\sum_{h=1}^H\Delta_{k_i,h}\leq H\sqrt{2Hm\log \frac{2i(i+1)m(m+1)}{\delta}}.\label{eq:26}
\end{align}
Substituting \eqref{eq:300} and \eqref{eq:26} into \eqref{eq:25}, for $\epsilon=H/2^i$, we have
\begin{align}
    m\epsilon &\ge \sum_{i=1}^m \vvalue_1^*(s_1^{k_i}) - \vvalue_1^{\pi_{k_i}}(s_1^{k_i}) \notag\\
    &\ge \frac{m\epsilon}{2}+C'' d^3H^5 \log^4(dH/(\delta\epsilon))/\epsilon\notag\\
    &\qquad +H\sqrt{2m\log \frac{2i(i+1)m(m+1)}{\delta}},\notag
\end{align}
which implies $m\leq  O\Big( d^3H^5 \log^4\big(dH/(\delta\epsilon)\big)/\epsilon^2\Big)$. Finally, taking an union bound with the event $\cE$ and \eqref{eq:26}, with probability at least $1-\delta/2-\sum_{i=1}^{\infty}\delta/\big(2i(i+1)\big)=1-\delta$, for all $\epsilon=H/2^i(i\in \NN)$, we have
\begin{align}
    \sum_{k=1}^{\infty} \ind\{\vvalue_{1}^{*}(s_1^k)-\vvalue_1^{\pi_k}(s_1^k)\ge \epsilon\}\leq  O\Big( d^3H^5 \log^4\big(dH/(\delta\epsilon)\big)/\epsilon^2\Big).\notag
\end{align}
Finally, we extend the result to general $\epsilon>0$.  
For any $ H/2^i\leq \epsilon\leq H/2^{i-1}$, we have 
\begin{align}
    \sum_{k=1}^{\infty} \ind\{\vvalue_{1}^{*}(s_1^k)-\vvalue_1^{\pi_k}(s_1^k)
    \ge \epsilon\}&\leq \sum_{k=1}^{\infty} \ind\{\vvalue_{1}^{*}(s_1^k)-\vvalue_1^{\pi_k}(s_1^k)
    \ge H/2^i\}\notag\\
    &\leq  O\Big( d^3H^5 \log^4\big(dH/(\delta\epsilon')\big)/(H/2^i)^2\Big)\notag\\
    &=O\Big( d^3H^5 \log^4\big(dH/(\delta\epsilon)\big)/\epsilon^2\Big).\notag
\end{align}
Thus, we finish the proof of Theorem \ref{thm:2}.
\end{proof}

\section{Proof of Lemma in Section \ref{sec:bandit-lemma}}
\subsection{Proof of Lemma \ref{lemma:bandit-levelsize}}\label{proof:lemma:bandit-levelsize}

\begin{lemma}[Lemma 11, \citep{abbasi2011improved}]\label{lemma:sum-bonus}
For any vector sequence $\{\xb_k\}_{k=1}^K$ in $\RR^d$, We denote $\bSigma_0 = \lambda \Ib$ and $\bSigma_k = \bSigma_0 + \sum_{i=1}^{k}\xb_i\xb_i^\top$. If  $\lambda\ge \max(1,L^2)$ and $\|\xb_k\|_2 \leq L$ holds for all $k\in[K]$, then we have
\begin{align}
    \sum_{k=1}^K  \|\xb_k\|_{\bSigma_{k-1}^{-1}}^2 \leq 2d\log\frac{d\lambda+KL^2}{d \lambda}.\notag
\end{align}
\end{lemma}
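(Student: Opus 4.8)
The plan is to convert the left-hand side, which is a sum of squared Mahalanobis norms, into a telescoping log-determinant via the rank-one determinant update, and then control the resulting determinant by a trace quantity through the AM--GM inequality. This is the standard "elliptical potential" argument.

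First I would record the rank-one update identity. Since $\bSigma_k = \bSigma_{k-1} + \xb_k\xb_k^\top$, the matrix determinant lemma gives
\[
\det(\bSigma_k) = \det(\bSigma_{k-1})\bigl(1 + \xb_k^\top \bSigma_{k-1}^{-1}\xb_k\bigr) = \det(\bSigma_{k-1})\bigl(1 + \|\xb_k\|_{\bSigma_{k-1}^{-1}}^2\bigr).
\]
Telescoping over $k = 1,\dots,K$ yields $\det(\bSigma_K)/\det(\bSigma_0) = \prod_{k=1}^K \bigl(1 + \|\xb_k\|_{\bSigma_{k-1}^{-1}}^2\bigr)$. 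Next I would pass from the squared norms to their logarithms. Because $\bSigma_{k-1} = \lambda\Ib + \sum_{i<k}\xb_i\xb_i^\top \succeq \lambda \Ib$, we have $\bSigma_{k-1}^{-1} \preceq \lambda^{-1}\Ib$, so $\|\xb_k\|_{\bSigma_{k-1}^{-1}}^2 \le \|\xb_k\|_2^2/\lambda \le L^2/\lambda \le 1$, where the last step uses the hypothesis $\lambda \ge \max(1,L^2)$. On the interval $[0,1]$ the elementary inequality $x \le 2\log(1+x)$ holds (the difference vanishes at $0$ and has nonnegative derivative there), so each term obeys $\|\xb_k\|_{\bSigma_{k-1}^{-1}}^2 \le 2\log\bigl(1 + \|\xb_k\|_{\bSigma_{k-1}^{-1}}^2\bigr)$. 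Summing and invoking the telescoped product,
\[
\sum_{k=1}^K \|\xb_k\|_{\bSigma_{k-1}^{-1}}^2 \le 2\sum_{k=1}^K \log\bigl(1 + \|\xb_k\|_{\bSigma_{k-1}^{-1}}^2\bigr) = 2\log\frac{\det(\bSigma_K)}{\det(\bSigma_0)}.
\]

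Finally I would bound the determinant ratio. Since $\det(\bSigma_0) = \det(\lambda\Ib) = \lambda^d$ and, by AM--GM applied to the eigenvalues of the positive definite matrix $\bSigma_K$, $\det(\bSigma_K) \le \bigl(\mathrm{tr}(\bSigma_K)/d\bigr)^d$, together with $\mathrm{tr}(\bSigma_K) = d\lambda + \sum_{k=1}^K \|\xb_k\|_2^2 \le d\lambda + KL^2$, I obtain $\det(\bSigma_K)/\det(\bSigma_0) \le \bigl((d\lambda + KL^2)/(d\lambda)\bigr)^d$. Substituting into the previous display and pulling the exponent $d$ out of the logarithm gives exactly $2d\log\frac{d\lambda + KL^2}{d\lambda}$, as claimed.

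The argument is a chain of standard facts, so there is no deep obstacle; the one place requiring genuine care is verifying $\|\xb_k\|_{\bSigma_{k-1}^{-1}}^2 \le 1$, which is precisely what the assumption $\lambda \ge \max(1,L^2)$ buys and which is needed so that the inequality $x \le 2\log(1+x)$ is applicable on the relevant range. Without this normalization one would only get $x \le C\log(1+x)$ with a constant $C$ depending on the range of $x$, degrading the stated bound.
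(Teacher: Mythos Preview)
Your proof is correct and is precisely the standard elliptical-potential argument from \cite{abbasi2011improved}. The paper itself does not prove this lemma---it simply cites it as Lemma~11 of \cite{abbasi2011improved}---so there is no alternative approach to compare against; your write-up is exactly what that reference does.
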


\begin{proof}[Proof of Lemma \ref{lemma:bandit-levelsize}]
 We focus on round $k$ and we suppose set $\cC_k^l=\{k_1,..,k_m\}$ at that time, where $1\leq k_1<k_2<..<k_m< k$. According to the update rule of set $\cC^l$ in Algorithm \ref{algorithm-bandit} (line \ref{algorithm1-line9} to line \ref{algorithm1-line13}), for each $2\leq i\leq m$, we have $S_{k_i}\ge l$ and it implies that 
\begin{align}
   \xb_{k_i}^{\top}(\bSigma_{k_i}^{l})^{-1}\xb_{k_i}\ge 4^{-l},\label{eq:1}
\end{align}
where $\bSigma_{k_i}^l=\lambda \Ib+\sum_{j=1}^{i-1} \xb_{k_j}\xb_{k_j}^\top$. Therefore, taking a summation for \eqref{eq:1} over all $2\leq i\leq m$, we have
\begin{align}
    \sum_{i=1}^m\xb_{k_i}^{\top}(\bSigma_{k_i}^l)^{-1}\xb_{k_i}\ge \sum_{i=2}^m\xb_{k_i}^{\top}(\bSigma_{k_i}^l)^{-1}\xb_{k_i}\ge(m-1)4^{-l},\label{eq:2}
\end{align}
where the first inequality holds due to $\xb_{k_1}^{\top}(\bSigma_{k_1}^l)^{-1}\xb_{k_1}\ge 0$ and the second inequality holds due to \eqref{eq:1}.
On the other hand, according to Lemma \ref{lemma:sum-bonus}, this summation is upper bounded by
\begin{align}
    \sum_{i=1}^m\xb_{k_i}^{\top}(\bSigma_{k_i}^l)^{-1}\xb_{k_i}\leq 2d\log\frac{d\lambda+m}{d \lambda}\leq 2d \log (1+m/d),\label{eq:3}
\end{align}
where the first inequality holds due to Lemma \ref{lemma:sum-bonus} with $\|\xb_{k_i}\|_2\leq 1$ and the second inequality holds due to $\lambda \ge 1$.
Combining \eqref{eq:2} and \eqref{eq:3}, we have
\begin{align}
   (m-1)4^{-l} \leq 2d \log (1+m/d),\notag
\end{align}
which implies that the size of set $ |\cC_{k}^l|$ is upper bounded by $|\cC_{k}^l|=m\leq 17dl4^{l}$ for each $k\in \NN$. Therefore, we finish the proof of Lemma \ref{lemma:bandit-levelsize}.
\end{proof}

\subsection{Proof of Lemma \ref{lemma:uniform-ucb-bandit}}\label{proof:lemma:uniform-ucb-bandit}
\begin{lemma}[Theorem 2, \citep{abbasi2011improved}]\label{lemma:uniconvege}

Let $\{\epsilon_t\}_{t=1}^{\infty}$ be a real-valued stochastic process with corresponding filtration $ \{\mathcal{F}_t\}_{t=0}^{\infty}$ such that $\epsilon_t$ is $\mathcal{F}_t$-measure and $\epsilon_t$ is conditionally
$R$-sub-Gaussian, $i.e.$
\begin{align}
    \forall \lambda \in \RR, \EE[e^{\lambda\epsilon_t}|\mathcal{F}_{t-1}]\leq \exp\bigg(\frac{\lambda^2R^2}{2}\bigg)\notag.
\end{align}

Let $\{\xb_t\}_{t=1}^{\infty}$ be an $\RR^d$-valued stochastic process where $\xb_t$ is $\mathcal{F}_{t-1}$-measurable and we define $y_t=\la \xb_t,\btheta^* \ra+\epsilon_t$. With this notation, for any $t\ge 0$, we further define
\begin{align}
    \bSigma_t=\lambda I+\sum_{i=1}^{t}\xb_t\xb_t^{\top}, \bbb_t=\sum_{i=1}^{t}\xb_ty_t, \wb_t=(\bSigma_t)^{-1}\bbb_t.\notag
\end{align}
If we assume $\|\btheta^*\|\leq S$ and $\|\xb_t\|\leq L$ holds for all $t\in \NN$, then with probability at least $1-\delta$, for all $t\ge 0$, we have
\begin{align}
    \|\btheta^*-\wb_t\|_{\bSigma_t}\leq R\sqrt{d\log\bigg(\frac{1+tL^2/\lambda}{\delta}\bigg)}+\sqrt{\lambda}S.\notag
\end{align}
\end{lemma}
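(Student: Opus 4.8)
The plan is to follow the classical self-normalized martingale argument (the ``method of mixtures''): prove a uniform-in-$t$ deviation bound on the noise term and then convert it into the stated bound on $\|\btheta^*-\wb_t\|_{\bSigma_t}$ via a bias/variance decomposition. First I would record the algebraic identity isolating the noise. Writing $S_t := \sum_{i=1}^t \xb_i\epsilon_i$ and using $\bbb_t = \sum_{i=1}^t \xb_i(\la\xb_i,\btheta^*\ra+\epsilon_i) = (\bSigma_t-\lambda\Ib)\btheta^* + S_t$, one obtains
\begin{align}
    \wb_t - \btheta^* = \bSigma_t^{-1}S_t - \lambda\bSigma_t^{-1}\btheta^*. \notag
\end{align}
Taking the $\bSigma_t$-norm and applying the triangle inequality splits the error into a variance term $\|\bSigma_t^{-1}S_t\|_{\bSigma_t} = \|S_t\|_{\bSigma_t^{-1}}$ and a bias term $\lambda\|\bSigma_t^{-1}\btheta^*\|_{\bSigma_t} = \lambda\|\btheta^*\|_{\bSigma_t^{-1}} \leq \sqrt{\lambda}\,\|\btheta^*\|_2 \leq \sqrt{\lambda}S$, where the penultimate step uses $\bSigma_t \succeq \lambda\Ib$. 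This already supplies the additive $\sqrt{\lambda}S$, so the whole problem reduces to controlling $\|S_t\|_{\bSigma_t^{-1}}$ uniformly over $t$.

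The heart of the argument is bounding $\|S_t\|_{\bSigma_t^{-1}}$. For each fixed vector $z\in\RR^d$ I would define the exponential process
\begin{align}
    M_t^{z} = \exp\bigg(\sum_{i=1}^t \Big[\tfrac{\la z,\xb_i\ra\epsilon_i}{R} - \tfrac{1}{2}\la z,\xb_i\ra^2\Big]\bigg), \notag
\end{align}
and verify, using the conditional $R$-sub-Gaussianity of $\epsilon_t$ together with the $\mathcal{F}_{t-1}$-measurability of $\xb_t$, that $\EE[M_t^{z}\mid\mathcal{F}_{t-1}]\leq M_{t-1}^{z}$, so $\{M_t^{z}\}$ is a nonnegative supermartingale with $\EE[M_t^{z}]\leq 1$. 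I would then mix over $z$ against a Gaussian prior $z\sim N(0,\lambda^{-1}\Ib)$, setting $M_t = \int M_t^{z}\,dh(z)$; Tonelli's theorem guarantees $M_t$ is again a nonnegative supermartingale with $\EE[M_t]\leq 1$. The key computation is the Gaussian integral, in which the $-\tfrac12 z^\top(\bSigma_t-\lambda\Ib)z$ from $M_t^z$ combines with the $-\tfrac{\lambda}{2}\|z\|_2^2$ from the prior into $-\tfrac12 z^\top\bSigma_t z$; completing the square in $z$ then yields the closed form
\begin{align}
    M_t = \Big(\tfrac{\det(\lambda\Ib)}{\det\bSigma_t}\Big)^{1/2}\exp\Big(\tfrac{\|S_t\|_{\bSigma_t^{-1}}^2}{2R^2}\Big). \notag
\end{align}

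To obtain the \emph{uniform-in-$t$} guarantee I would apply Ville's maximal inequality to the nonnegative supermartingale $M_t$ (via a stopped version to ensure $\EE[M_\tau]\leq 1$), giving $\PP(\exists t\geq 0:\ M_t\geq 1/\delta)\leq\delta$. On the complementary event, solving $M_t<1/\delta$ for $\|S_t\|_{\bSigma_t^{-1}}$ yields $\|S_t\|_{\bSigma_t^{-1}}^2 \leq 2R^2\log\big(\det(\bSigma_t)^{1/2}\det(\lambda\Ib)^{-1/2}/\delta\big)$ simultaneously for all $t$. Finally I would bound the determinant ratio by the trace/AM-GM estimate $\det\bSigma_t \leq (\mathrm{tr}\,\bSigma_t/d)^d \leq (\lambda+tL^2/d)^d$, using $\|\xb_i\|_2\leq L$, so that $\det(\bSigma_t)/\lambda^d \leq (1+tL^2/\lambda)^d$; plugging this in and recombining with the bias term via the triangle inequality gives the claimed bound. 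The main obstacle is the second paragraph: rigorously establishing that the \emph{mixed} process $M_t$ is a supermartingale — justifying the interchange of conditional expectation and integration, and handling the stopping-time technicality underlying Ville's inequality — is precisely what buys the uniform-in-$t$ statement and avoids the spurious $\log t$ factor that a naive union bound over $t$ would introduce.
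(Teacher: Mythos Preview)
The paper does not supply its own proof of this lemma: it is quoted verbatim as Theorem~2 of \cite{abbasi2011improved} and used as a black box in the proof of Lemma~\ref{lemma:uniform-ucb-bandit}. Your proposal is the standard method-of-mixtures argument from that reference (bias/variance split, the exponential supermartingale $M_t^z$, Gaussian mixing to get the closed form, Ville's maximal inequality for the uniform-in-$t$ statement, and the AM--GM determinant bound), and it is correct. In short, there is nothing to compare against in the present paper; you have reproduced the original proof that the paper merely cites.
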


\begin{proof}[Proof of Lemma \ref{lemma:uniform-ucb-bandit}]
In this proof, we first focus on a fixed level $l \in \NN$ and then turn back to all level $l$. For a fixed level $l\in[N]$, we denote $k_0=0$, and for $i\in \NN$, we denote $k_i$ as the minimum index of the round where the action is added to the set $\cC^l$:
\begin{align}
    k_i&=\min \big\{k: k>k_{i-1}, l_{k}=l\big\}.\label{eq:filtration-bandit}
\end{align}
Under this notation, for all round $k (k_i<k\leq k_{i+1})$, we have 
\begin{align}
  \bSigma_{k}^l=\lambda \Ib+\sum_{j=1}^{i} \xb_{k_j}\xb_{k_j}^{\top}, \bbb_k^l=\sum_{j=1}^{i} \xb_{k_j}\reward_{k_j}, \wb_{k}^l=(\bSigma_{k}^l)^{-1} \bbb_k^l.\label{eq:4}
\end{align}
Now, we consider the $\sigma$-algebra filtration $\mathcal{F}_i=\sigma(\xb_1,..,\xb_{k_{i+1}},\reward_1,..,\reward_{k_{i+1}-1})$ that contains all randomness before receiving the reward $\reward_{k_{i+1}}$ at round $k_{i+1}$. By the definition of $\mathcal{F}_{i-1}$,  vector $\xb_{k_{i}}$ is $\mathcal{F}_{i-1}$-measurable and the noise $\epsilon_{k_{i}}=\reward_{k_{i}}-\la \xb_{k_{i}},\bmu^*\ra$ is $\mathcal{F}_{i}$-measurable.
 Since we choose the level $l_k$ and add element $k$ to the corresponding set $\cC^{l_k}$ before receiving the reward $\reward_k$ at round $k$, the noise $\epsilon_{k_{i}}$ is conditionally 1-Sub-Gaussian. According to Lemma \ref{lemma:uniconvege}, with probability at least $1-\delta\big(l(l+1)\big)$, for all $i\ge0$, we have
\begin{align}
    \|\bmu^*-\wb_{k_{i+1}}^l\|_{\bSigma_{k_{i+1}}^l}\leq  \sqrt{d\log\bigg(\frac{i+1}{\delta/\big(l(l+1)\big)}\bigg)}+1.\label{eq:5}
\end{align}
Combining \eqref{eq:4} and \eqref{eq:5}, for all round $k (k_i<k\leq k_{i+1})$, we have 
\begin{align}
    \|\bmu^*-\wb_k^l\|_{\bSigma_{k}^l}\leq \sqrt{d\log\bigg(\frac{i+1}{\delta/\big(l(l+1)\big)}\bigg)}+1.\label{eq:6}
\end{align}
Furthermore, Lemma \ref{lemma:bandit-levelsize} suggests that the size of set $|\cC^l|$ is upper bounded by $|\cC^l|\leq 17dl4^l$, which implies that $k_{17dl4^l+1}=+\infty$. Thus, \eqref{eq:6} implies that with probability at least $1-\delta/\big(l(l+1)\big)$, for all round $k\in \NN$, we have
\begin{align}
      \|\bmu^*-\wb_k^l\|_{\bSigma_{k}^l}\leq \sqrt{d\log\bigg(\frac{17dl4^l+1}{\delta/\big(l(l+1)\big)}\bigg)}+1\leq \beta_l.\label{eq:7}
\end{align}
Finally, taking a union bound for \eqref{eq:7} over all level $l\in \NN$, with probability at least $1-\sum_{l=1}^{\infty}\Big(\delta/\big(l(l+1)\big)\Big)=1-\delta$,  for all level $l\in[N]$ and all round $k\in \NN$, we have
        \begin{align}
        \|\wb_k^l-\bmu^*\|_{\bSigma_k^l}\leq \beta_{l}.\notag
    \end{align}
Thus, we finish the proof of Lemma \ref{lemma:uniform-ucb-bandit}
\end{proof}

\section{Proof of Lemma in Section \ref{sec:mdp-lemma}}
\subsection{Proof of Lemma \ref{lemma:mdp-levelsize}}\label{proof:lemma:mdp-levelsize}

\begin{proof}[Proof of Lemma \ref{lemma:mdp-levelsize}]
Similar to the proof of Lemma \ref{lemma:bandit-levelsize}, 
we focus on episode $k$ and we suppose set $\cC_{k,h}^l=\{k_1,..,k_m\}$ at that time, where $1\leq k_1<k_2<..<k_m< k$. For simplicity, we further define the auxiliary sets $\cB^l_{k,h}$ as
\begin{align}
    \cB^l_{k,h}=\big\{ \,i | \,1\leq i<k; l_h^i=l; (h=1 \text{ or } l_h^i<l_{h-1}^{i})\big\}.\notag
\end{align}
Notice that for each stage $h\ge 2$ and episode $i\in [k]$, there are two stopping rules for the while loop in Algorithm \ref{algorithm-mdp} (line \ref{algorithm2-line20} to line \ref{algorithm2-line21}) and $\cB^l_{k,h}$ consists of all episode $i\in [k]$ that stop with the first rule. Furthermore, for all element $k_i\in \cC_{k,h}^l$ with the second rule stopping rule, we have $l_{h-1}^k=l_{h}^k=l$ and it implies that $k_i \in \cC_{k,h-1}^l$. Combining these two cases, we have $\cC_h^l\subseteq \cB_{k,h}^l\cup \cC_{k,h-1}^l$ and it implies that
\begin{align}
    m=|\cC_{k,h}^l|\leq \sum_{j=1}^h |\cB_{k,j}^l|,\label{eq:16}
\end{align}
where the inequality holds due to $|\cC \cup \cB|\leq |\cC|+|\cB|$ and the fact that $\cC_{k,1}^l=\cB_{k,1}^l$.

Now, we only need to control the size of $\cB_{k,h}^l$ for each episode $k\in \NN$. For simplicity, we suppose set $\cB_{k,h}^l=\{k_1,..,k_n\}$, where $1\leq k_1\leq k_2\leq ... \leq k_n <k$. According to the definition of level $l_h^k$ in Algorithm \ref{algorithm-mdp} (line \ref{algorithm2-line20} to line \ref{algorithm2-line21}), for  $2\leq i\leq n$, we have
\begin{align}
   \bphi(s_h^{k_i},a_h^{k_i})^{\top}(\bSigma_{k_i,h}^l)^{-1}\bphi(s_h^{k_i},a_h^{k_i})\ge 4^{-l}.\notag
\end{align}
Since $\cB_{k,h}^l\subseteq \cC_{k,h}^l$ holds for all stage $h\in[H]$, all level $l\in \NN$ and all episode $k\in \NN$, we have $\bSigma_{k_i,h}^l\succeq \lambda \Ib+\sum_{j=1}^{i-1} \bphi(s_h^{k_i},a_h^{k_i})\bphi(s_h^{k_i},a_h^{k_i})^\top $ and it implies that 
\begin{align}
     \bphi(s_h^{k_i},a_h^{k_i})^{\top}(\bGamma_{k_i,h}^l)^{-1}\bphi(s_h^{k_i},a_h^{k_i})\ge \bphi(s_h^{k_i},a_h^{k_i})^{\top}(\bSigma_{k_i,h}^l)^{-1}\bphi(s_h^{k_i},a_h^{k_i})\ge4^{-l}.\label{eq:12}
\end{align}
where $\bGamma_{k_i,h}^l=\lambda \Ib+\sum_{j=1}^{i-1} \bphi(s_h^{k_i},a_h^{k_i})\bphi(s_h^{k_i},a_h^{k_i})^\top$ . Thus, taking a summation for \eqref{eq:12} over all $2\leq i\leq n$, we have
\begin{align}
    \sum_{i=1}^n\bphi(s_h^{k_i},a_h^{k_i})^{\top}(\bGamma_{k_i,h}^l)^{-1}\bphi(s_h^{k_i},a_h^{k_i})\ge \sum_{i=2}^n\bphi(s_h^{k_i},a_h^{k_i})^{\top}(\bGamma_{k_i,h}^l)^{-1}\bphi(s_h^{k_i},a_h^{k_i})\ge(n-1)4^{-l},\label{eq:13}
\end{align}
where the first inequality holds due to $\bphi(s_h^{k_1},a_h^{k_1})^{\top}(\bGamma_{k_1,h}^l)^{-1}\bphi(s_h^{k_1},a_h^{k_1})\ge 0$ and the second inequality holds due to \eqref{eq:12}.
On the other hand, according to Lemma \ref{lemma:sum-bonus}, this summation is upper bounded by
\begin{align}
    \sum_{i=1}^n\bphi(s_h^{k_i},a_h^{k_i})^{\top}(\bGamma_{k_i,h}^l)^{-1}\bphi(s_h^{k_i},a_h^{k_i})\leq 2d\log\frac{d\lambda+n}{d \lambda}\leq 2d \log (1+n/d),\label{eq:14}
\end{align}
where the first inequality holds due to Lemma \ref{lemma:sum-bonus} with the fact that $\|\bphi(s,a)\|_2\leq 1$ and the second inequality holds due to $\lambda \ge 1$.
Combining  \eqref{eq:13} and \eqref{eq:14}, we have
\begin{align}
   (n-1)4^{-l} \leq 2d \log (1+n/d),\label{eq:15}
\end{align}
which implies that $|\cB_{k,h}^l|=n\leq 17dl4^{l}$. Finally, substituting \eqref{eq:15} into \eqref{eq:16}, we have.
\begin{align}
    m=|\cC_{k,h}^l|\leq \sum_{j=1}^h |\cB_{k,j}^l|\leq 17dhl4^{l}.
\end{align}
Therefore, we finish the proof of Lemma \ref{lemma:mdp-levelsize}.
\end{proof}
\subsection{Proof of Lemma \ref{lemma:vector-range}}\label{proof:lemma:vector-range}
\begin{proof}[Proof of Lemma \ref{lemma:vector-range}]
In this proof, we only need to show that the norm of vector $\wb_{k,h}^l$ is bounded for each fixed episode $k\in \NN$ and fixed level $l\in \NN$. For simplicity, let $\cC_{k,h}^l=\{k_1,..,k_m\}$ denote the index set $\cC_h^l$ at the beginning of episode $k$, where $1\leq k_1<k_2<..<k_m< k$. According to the definition of weight vector $\wb_{k,h}^l$ in Algorithm \ref{algorithm-mdp} (line \ref{algorithm2-line 7} to line \ref{algorithm2-line9}), we have \begin{align}
  \bSigma_{k,h}^l&=\lambda \Ib+\sum_{i=1}^{m} \bphi(s_h^{k_i},a_h^{k_i})\bphi(s_h^{k_i},a_h^{k_i})^\top,\notag\\
  \bbb_{k,h}^l&=\sum_{i=1}^{m} \bphi(s_h^{k_i},a_h^{k_i})\Big[\reward_h(s_h^{k_i},a_h^{k_i})+\vvalue_{k,h+1}^l(s_{h+1}^{k_i})\Big],\notag\\ \wb_{k,h}^l&=(\bSigma_{k,h}^l)^{-1} \bbb_{k,h}^l.\notag
\end{align}
For simplicity, we omit the subscript $h$ and denote $\reward_{k_i}=\reward_h(s_h^{k_i},a_h^{k_i})+\vvalue_{k,h+1}^l(s_{h+1}^{k_i})$. Then for the norm $\|\wb_{k}^l\|_2$, we have the following inequality
\begin{align}
    \|\wb_{k}^l\|^2_2&=\Big\|\notag (\bSigma_{k}^l)^{-1}\sum_{i=1}^{m}  \bphi(s^{k_i},a^{k_i})\reward_{k_i}\Big\|^2_2\\
    &\leq m\sum_{i=1}^{m} \big\|\notag (\bSigma_{k}^l)^{-1} \bphi(s^{k_i},a^{k_i})\reward_{k_i}\big\|^2_2\notag\\
    &\leq  4mH^2\sum_{i=1}^{m} \big\|\notag (\bSigma_{k}^l)^{-1} \bphi(s^{k_i},a^{k_i})\big\|^2_2\notag\\
    &\leq \frac{4mH^2}{\lambda}\sum_{i=1}^{m}\bphi(s^{k_i},a^{k_i})^{\top}(\bSigma_{k}^l)^{-1} \bphi(s^{k_i},a^{k_i})\notag\\
    &= \frac{4mH^2}{\lambda}\text{tr}\Big((\bSigma_{k}^l)^{-1}\sum_{i=1}^{m}\bphi(s^{k_i},a^{k_i})^{\top} \bphi(s^{k_i},a^{k_i})\Big),\label{eq:616}
\end{align}
where the first inequality holds due to Cauchy-Schwarz inequality, the second inequality holds due to $ |\reward_{k_i}|\leq 2H$ and the last inequality holds due to $\bSigma_{k}^l\succeq \lambda I$. Now, we assume the eigen-decomposition of matrix $\sum_{i=1}^{m}\bphi(s^{k_i},a^{k_i})^{\top} \bphi(s^{k_i},a^{k_i})$ is $Q^{\top}\Lambda Q$ and we have
\begin{align}
    \text{tr}\Big((\bSigma_{k}^l)^{-1}\sum_{i=1}^{m}\bphi(s^{k_i},a^{k_i})^{\top} \bphi(s^{k_i},a^{k_i})\Big)&=\text{tr}\big((Q^{\top}\Lambda Q+\lambda I)^{-1}Q^{\top}\Lambda Q\big)\notag\\
    &=\text{tr}\big((\Lambda+\lambda I)^{-1}\Lambda\big)\notag\\
    &=\sum_{i=1}^d\frac{\Lambda_i}{\Lambda_i+\lambda}\notag\\
    &\leq d.\label{eq:17}
\end{align}
Substituting \eqref{eq:17} into \eqref{eq:616}, we have
\begin{align}
    \|\wb_{k}^l\|^2_2&\leq \frac{4mH^2}{\lambda}\text{tr}\Big((\bSigma_{k}^l)^{-1}\sum_{i=1}^{m}\bphi(s^{k_i},a^{k_i})^{\top} \bphi(s^{k_i},a^{k_i})\Big)\notag\\
    &\leq \frac{4mH^2d}{\lambda}\notag\\
    &\leq \frac{68d^2H^3l4^l}{\lambda},\notag
\end{align}
where the first inequality holds due to \eqref{eq:616}, the second inequality holds due to \eqref{eq:17} and the last inequality holds due to Lemma \ref{lemma:mdp-levelsize}.
Thus, we finish the proof of Lemma \ref{lemma:vector-range}

\end{proof}

\subsection{Proof of Lemma \ref{lemma:mdp-concentration}}\label{proof:lemma:mdp-concentration}

In this section, we provide the proof of Lemma \ref{lemma:mdp-concentration}. For each level $l \in \NN$, we first denote the function class $\mathcal{V}_l$ as
\begin{align}
    \mathcal{V}_l=\Bigg\{V\bigg|V(\cdot)&=\max_{a}\min_{1\leq i\leq l}\min\bigg(H,\wb_{i}^\top\bphi(\cdot,a)+\beta_l\sqrt{\bphi(\cdot,a)^\top \bSigma_{i}^{-1}\bphi(\cdot,a)}\bigg),\notag\\
    &\|\wb_i\|_2\leq 9d2^l \sqrt{H^3l},\bSigma_i \succeq I\Bigg\}.\label{eq:definition-function-set}
\end{align}
Therefore, for all episode $k\in K$ and stage $h\in[H]$, according to Lemma \ref{lemma:vector-range}, we have $\|\wb_{k,h}^l\|\leq 9d2^l \sqrt{H^3l}$ and it implies that the estimated value function $\vvalue_{k,h}^l \in \mathcal{V}_l$. For any function $V \in \mathcal{V}_l$, we have the following concentration property.
\begin{lemma}(Lemma D.4, \citep{jin2019provably})\label{lemma:uni-coverge}
Let $\{x_k\}_{k=1}^{\infty}$ be a real-valued stochastic process on state space $\cS$ with corresponding filtration $ \{\mathcal{F}_k\}_{k=1}^{\infty}$. Let $\{\bphi_k\}_{k=1}^{\infty}$ be an $\RR^d$-valued stochastic process where $\bphi_k \in \mathcal{F}_{k-1}$ and $\|\bphi_k\|_2\leq 1$. For any $k\ge 0$, we define $\bSigma_k= I+\sum_{i=1}^k\bphi_i\bphi_i^{\top}$.
Then with probability at least $1-\delta$, for all $k \in \NN$ and all function $V\in \mathcal{V}$ with $\max_{s}|V(x)|\leq H$, we have
\begin{align}
    \bigg\| \sum_{i=1}^k \bphi_i \Big\{ V(x_i)-\EE\big[V(x_i)|\mathcal{F}_{i-1}\big]\Big\}\bigg\|_{\bSigma_k^{-1}}^2\leq 4H^2\bigg[\frac d2\log(k+1) +\log \frac{\mathcal{N}_\epsilon}{\delta}\bigg]+{8k^2\epsilon^2},\notag
\end{align}
where $\mathcal{N}_\epsilon$ is the $\epsilon$-covering number of the function class $\mathcal{V}$ with respect to the distance function $\text{dist}(V_1,V_2)=\max_{s}|V_1(s)-V_2(s)|$. 
\end{lemma}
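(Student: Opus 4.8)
The plan is to prove this uniform bound by marrying a single-function self-normalized martingale inequality with a covering argument over $\mathcal{V}$. First I would fix one function $V'\in\mathcal{V}$ and set $\eta_i=V'(x_i)-\EE[V'(x_i)\mid\mathcal{F}_{i-1}]$. Since $\bphi_i$ is $\mathcal{F}_{i-1}$-measurable while $\eta_i$ is a zero-mean $\mathcal{F}_i$-measurable increment with $|\eta_i|\le H$ (as $0\le V'\le H$), the sequence $\{\eta_i\}$ is a martingale difference sequence that is conditionally $H$-sub-Gaussian. Applying the self-normalized concentration inequality of \citet{abbasi2011improved} (the bound underlying Lemma~\ref{lemma:uniconvege}) with regularizer $\lambda=1$, I obtain that with probability at least $1-\delta'$, simultaneously for all $k\in\NN$,
\[
\Big\|\textstyle\sum_{i=1}^k\bphi_i\eta_i\Big\|_{\bSigma_k^{-1}}^2\le 2H^2\log\frac{\det(\bSigma_k)^{1/2}}{\delta'}.
\]
Since $\|\bphi_i\|_2\le1$, the trace estimate $\mathrm{tr}(\bSigma_k)\le d+k$ combined with AM--GM yields $\det(\bSigma_k)\le(1+k/d)^d$, so the right-hand side is at most $2H^2\big[\tfrac d2\log(k+1)+\log(1/\delta')\big]$.

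Next I would discretize. Let $\mathcal{C}_\epsilon$ be a minimal $\epsilon$-net of $\mathcal{V}$ in the sup-norm distance $\mathrm{dist}$, so $|\mathcal{C}_\epsilon|=\mathcal{N}_\epsilon$. Applying the previous display to every $V'\in\mathcal{C}_\epsilon$ with $\delta'=\delta/\mathcal{N}_\epsilon$ and taking a union bound, I get that with probability at least $1-\delta$, for all $k\in\NN$ and all $V'\in\mathcal{C}_\epsilon$,
\[
\Big\|\textstyle\sum_{i=1}^k\bphi_i\big(V'(x_i)-\EE[V'(x_i)\mid\mathcal{F}_{i-1}]\big)\Big\|_{\bSigma_k^{-1}}^2\le 2H^2\Big[\tfrac d2\log(k+1)+\log\tfrac{\mathcal{N}_\epsilon}{\delta}\Big].
\]

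It then remains to transfer this from the net to an arbitrary $V\in\mathcal{V}$. Given such a $V$, choose $V'\in\mathcal{C}_\epsilon$ with $\mathrm{dist}(V,V')\le\epsilon$, and write $\xi_i=(V-V')(x_i)-\EE[(V-V')(x_i)\mid\mathcal{F}_{i-1}]$, which obeys $|\xi_i|\le 2\epsilon$ \emph{surely}. Because $\bSigma_k\succeq I$ gives $\bSigma_k^{-1}\preceq I$, together with $\|\bphi_i\|_2\le1$ the residual satisfies the deterministic estimate $\big\|\sum_{i=1}^k\bphi_i\xi_i\big\|_{\bSigma_k^{-1}}^2\le\big\|\sum_{i=1}^k\bphi_i\xi_i\big\|_2^2\le\big(\sum_{i=1}^k\|\bphi_i\|_2|\xi_i|\big)^2\le 4k^2\epsilon^2$, requiring no concentration at all. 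Applying the triangle inequality for the $\bSigma_k^{-1}$-norm to split $\sum_i\bphi_i(V(x_i)-\EE[\cdots])$ into its net term and the residual, followed by $(a+b)^2\le 2a^2+2b^2$, produces exactly the claimed bound: the factor $2$ converts $2H^2[\cdots]$ into $4H^2[\cdots]$ and $4k^2\epsilon^2$ into $8k^2\epsilon^2$.

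The only genuinely delicate point is the interplay between the union bound and the approximation step: the discretization error $\xi_i$ must be controlled \emph{without} a second probabilistic statement, since otherwise $\mathcal{N}_\epsilon$ would reappear inside another failure event and degrade the bound. The observation that makes this work is that $\bSigma_k^{-1}\preceq I$ is a purely deterministic spectral fact, so the residual is bounded almost surely and merely contributes the additive slack $8k^2\epsilon^2$; the free parameter $\epsilon$ is then chosen downstream (as in Lemma~\ref{lemma:mdp-concentration}, e.g. $\epsilon\asymp 1/k$ up to factors of $d,H,\beta_l$) to render this term lower-order relative to the leading $\log(k+1)$ and $\log(\mathcal{N}_\epsilon/\delta)$ contributions.
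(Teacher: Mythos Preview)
The paper does not supply its own proof of this lemma; it is quoted verbatim as Lemma~D.4 of \citet{jin2019provably} and used as a black box in the proof of Lemma~\ref{lemma:mdp-concentration}. Your argument is precisely the standard proof of that cited result: apply the self-normalized martingale bound of \citet{abbasi2011improved} to each element of a minimal $\epsilon$-net of $\mathcal{V}$, union-bound over the $\mathcal{N}_\epsilon$ net points, and control the approximation residual deterministically via $\bSigma_k^{-1}\preceq I$, combining the two pieces with $(a+b)^2\le 2a^2+2b^2$. The constants line up exactly with the stated bound, and your observation that the residual must be handled without a further probabilistic statement (so that a data-dependent $V$, as in Lemma~\ref{lemma:mdp-concentration}, can be accommodated) is the correct reading of the argument.

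One cosmetic point: you write ``$|\eta_i|\le H$ (as $0\le V'\le H$)'', but the lemma only assumes $|V|\le H$. This does not affect the proof, since what you actually need is that $\eta_i$ is conditionally $H$-sub-Gaussian, and that follows from Hoeffding's lemma for any $V'$ with $|V'|\le H$ (the range of $V'(x_i)$ is at most $2H$, giving sub-Gaussian parameter $H$). The bound you derive is unchanged.
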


Furthermore, for each function class $\mathcal{V}_l$, the covering number $\mathcal{N}_\epsilon$ of $\mathcal{V}_l$ can be upper bounded by following Lemma.
\begin{lemma}\label{lemma:covering-number}
For each function class $\mathcal{V}_l$, we define the distance between two function $V_1$ and $V_2$ as $V_1,V_2\in \mathcal{V}_l$ as $dist(V_1,V_2)=\max_{s}|V_1(s)-V_2(s)|$. With respect to this distance function, the $\epsilon$-covering number $\mathcal{N}_\epsilon$ of the function class $\mathcal{V}_l$ can be upper bounded by
\begin{align}
   \log \mathcal{N}_\epsilon \leq  dl \log (1+36d2^l\sqrt{H^3l}/\epsilon) + d^2l \log (1+8\sqrt{d}\beta_l^2/\epsilon^2).\notag
\end{align}
\end{lemma}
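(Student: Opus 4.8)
The plan is to follow the standard three-step covering-number argument (as in Lemma~D.6 of \citet{jin2019provably}), adapted to the nested max--min predictor that defines $\mathcal V_l$. First I would reparametrize: for each level index $i\le l$, set $\Ab_i:=\beta_l^2(\bSigma_i)^{-1}$. Since membership in $\mathcal V_l$ requires $\bSigma_i\succeq\Ib$, each $\Ab_i$ is symmetric positive semidefinite with $\Ab_i\preceq\beta_l^2\Ib$; hence all its eigenvalues lie in $[0,\beta_l^2]$ and $\|\Ab_i\|_F\le\sqrt d\,\beta_l^2$. In this parametrization a function in $\mathcal V_l$ is determined by the $2l$ parameters $(\wb_1,\dots,\wb_l,\Ab_1,\dots,\Ab_l)$, with each $\wb_i$ in the Euclidean ball of radius $9d2^l\sqrt{H^3l}$ in $\RR^d$ and each $\Ab_i$ in the positive-semidefinite Frobenius ball of radius $\sqrt d\,\beta_l^2$ in $\RR^{d\times d}$, and it evaluates to $\max_a\min_{1\le i\le l}\min\!\big(H,\ \wb_i^\top\bphi(\cdot,a)+\sqrt{\bphi(\cdot,a)^\top\Ab_i\bphi(\cdot,a)}\big)$.

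Second, I would bound the sup-distance between two such functions by their worst per-level parameter error. Fix $V,V'\in\mathcal V_l$ with parameters $\{(\wb_i,\Ab_i)\}_{i\le l}$ and $\{(\wb_i',\Ab_i')\}_{i\le l}$. Since $\max_a(\cdot)$, $\min_{i\le l}(\cdot)$ and $\min(H,\cdot)$ are each non-expansive with respect to the sup-norm, for every state $s$,
\begin{align}
|V(s)-V'(s)|\le\max_{a}\max_{i\le l}\Big|(\wb_i-\wb_i')^\top\bphi(s,a)+\sqrt{\bphi(s,a)^\top\Ab_i\bphi(s,a)}-\sqrt{\bphi(s,a)^\top\Ab_i'\bphi(s,a)}\Big|.\notag
\end{align}
Using Cauchy--Schwarz together with $\|\bphi\|_2\le1$ for the linear term, and $|\sqrt x-\sqrt y|\le\sqrt{|x-y|}$ together with $|\bphi^\top(\Ab_i-\Ab_i')\bphi|\le\|\Ab_i-\Ab_i'\|_2\le\|\Ab_i-\Ab_i'\|_F$ for the bonus term, the right-hand side is at most $\max_{i\le l}\|\wb_i-\wb_i'\|_2+\max_{i\le l}\sqrt{\|\Ab_i-\Ab_i'\|_F}$. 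Consequently $\mathrm{dist}(V,V')\le\epsilon$ whenever $\|\wb_i-\wb_i'\|_2\le\epsilon/2$ and $\|\Ab_i-\Ab_i'\|_F\le\epsilon^2/4$ for every $i\in[l]$.

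Third, I would assemble the cover. Let $\mathcal C_w$ be a minimal $(\epsilon/2)$-cover of the ball $\{\wb:\|\wb\|_2\le9d2^l\sqrt{H^3l}\}$, so that $|\mathcal C_w|\le(1+36d2^l\sqrt{H^3l}/\epsilon)^d$, and let $\mathcal C_A$ be a minimal $(\epsilon^2/4)$-cover (in Frobenius norm) of the positive-semidefinite ball of radius $\sqrt d\,\beta_l^2$, so that $|\mathcal C_A|\le(1+8\sqrt d\,\beta_l^2/\epsilon^2)^{d^2}$ --- the positive-semidefinite restriction costs nothing here because projecting an ambient Frobenius-ball cover onto the (convex) PSD cone is non-expansive. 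Replacing each $\wb_i$ by its nearest element of $\mathcal C_w$ and each $\Ab_i$ by its nearest element of $\mathcal C_A$ yields, by the second step, a function within sup-distance $\epsilon$; hence $\mathcal N_\epsilon\le|\mathcal C_w|^{\,l}\,|\mathcal C_A|^{\,l}$, and taking logarithms gives exactly the stated bound. I expect the metric-contraction step to be the only real obstacle: one must carefully check that the nested $\max$/$\min$/truncation is genuinely $1$-Lipschitz in sup-norm and that the square-root bonus linearizes in the parameters via $|\sqrt x-\sqrt y|\le\sqrt{|x-y|}$. The reparametrization and the volumetric ball counts are routine.
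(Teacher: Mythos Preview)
Your proposal is correct and follows essentially the same argument as the paper. The only cosmetic difference is that you absorb $\beta_l$ into the matrix parameter by setting $\Ab_i=\beta_l^2\bSigma_i^{-1}$ (as in Jin et al.), whereas the paper keeps $\beta_l$ outside and covers $\{\bGamma:\|\bGamma\|_F\le\sqrt d\}$ at resolution $\epsilon^2/(4\beta_l^2)$; the two parametrizations produce identical volumetric bounds, and your PSD-projection remark is a slightly more careful way of ensuring the cover functions are well defined.
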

\begin{proof}
See Appendix~\ref{proof:lemma:covering-number}.
\end{proof}

\begin{proof}[Proof of Lemma 
\ref{lemma:mdp-concentration}]
Similar to the proof of Lemma \ref{lemma:uniform-ucb-bandit}, we first focus on a fixed level $l \in \NN$ and a fixed stage $h\in [H]$. Now, we denote $k_0=0$, and for $i\in \NN$, we denote $k_i$ as the minimum index of the episode where the action is added to the set $\cC_h^l$:
\begin{align}
    k_i&=\min \big\{k: k>k_{i-1}, l_h^{k}=l\big\}.\label{eq:mdp-filtration-bandit}
\end{align}
Now, we consider the $\sigma$-algebra filtration $\mathcal{F}_i=\sigma(s_{1}^1,..,s_H^1,s_1^2,..,s_H^2,..,s_1^{k_{i+1}},..,s_{h}^{k_{i+1}})$ that contain all randomness before receive the reward $\reward_h(s_h^{k_{i+1}},a_h^{k_{i+1}})$ and next state $s_{h+1}^{k_{i+1}}$ at episode $k_{i+1}$. By this definition, $\bphi(s_h^{k_{i}},a_h^{k_{i}})$ is $\mathcal{F}_{i-1}$-measurable and the next state $s_{h+1}^{k_{i}}$ is $\mathcal{F}_{i}$-measurable. Since the randomness in this filtration only comes from the stochastic state transition process $s_{h+1}\sim \PP_h(\cdot|s_h,a_h)$ and we determine the level $l_h^k$ before receive the reward $\reward_h(s_h^{k},a_h^{k})$ and next state $s_{h+1}^k$ at episode $k$, for any fixed value function $\vvalue \in \mathcal{V}_l$, we have
\begin{align}
  \EE\big[\vvalue(s_{h+1}^{k_{i}})|\mathcal{F}_{i-1}\big]= [\PP_h\vvalue](s_{h}^{k_{i}},a_{h}^{k_{i}}) .\label{eq:29}
\end{align}
According to Lemma \ref{lemma:uni-coverge} with probability at least $1-\delta/\big(H2l(l+1)\big)$, for all number $i \in \NN$ and all function $\vvalue \in \mathcal{V}_l$, we have
\begin{align}
    &\bigg\|\sum_{j=1}^{i}\bphi(s_h^{k_j},a_h^{k_j})\big[\vvalue(s_{h+1}^{k_j})-[\PP_h\vvalue](s_h^{k_j},a_h^{k_j}) \big]\bigg\|_{(\bSigma_{k_{i+1},h}^l)^{-1}}\notag\\
    &= \bigg\|\sum_{j=1}^{i}\bphi(s_h^{k_j},a_h^{k_j})\big[\vvalue(s_{h+1}^{k_j})- \EE\big[\vvalue(s_{h+1}^{k_{i+1}})|\mathcal{F}_{i-1}\big]\bigg\|_{(\bSigma_{k_{i+1},h}^l)^{-1}}\notag\\\
    &\leq 4H^2\bigg[\frac d2\log(i+1) +\log \frac{\mathcal{N}_\epsilon}{\delta/\big(H2l(l+1)\big)}\bigg]+{8i^2\epsilon^2}\notag\\
    &\leq 4H^2\bigg[\frac d2\log(i+1) + dl \log (1+36d2^l\sqrt{H^3l}/\epsilon) + d^2l \log (1+8\sqrt{d}\beta_l^2/\epsilon^2) \notag\\
    &\qquad+\log(2Hl(l+1)/\delta)\bigg] +{8i^2\epsilon^2},\label{eq:30}
\end{align}
where the first inequality holds due to Lemma \ref{lemma:uni-coverge} and the second inequality holds due to Lemma \ref{lemma:covering-number}. Furthermore, Lemma \ref{lemma:mdp-levelsize} show that the size of set $|\cC_h^l|$ is upper bounded by $|\cC_h^l|\leq 17dlH4^{l}$. This reuslt implies that $k_{17dlH4^{l}+1}=\infty$ and we only need to consider episode $k_i$ for $i\leq 17dlH4^{l}+1$. Now, we choose $\epsilon=1/(17l4^{l})$, then for all episode $k_{i}< k\leq k_{i+1}$ and function $\vvalue=\vvalue_{k,{h+1}}^l \in \mathcal{V}_l$, we have
\begin{align}
    &\bigg\|\sum_{j=1}^{i}\bphi(s_h^{k_j},a_h^{k_j})\big[\vvalue_{k,h+1}^l(s_{h+1}^{k_j})-[\PP_h\vvalue_{k,h+1}^l](s_h^{k_j},a_h^{k_j}) \big]\bigg\|^2_{(\bSigma_{k,h}^l)^{-1}}\notag\\
    &\leq 4H^2\bigg[\frac {dl}2\log(69dlH) +2dl^2 \log(1+2448d\sqrt{H^3l^3}) \notag \\
    &\qquad + d^2l^2 \log (1+36992\sqrt{d}l^2\beta_l^2)+2\log(4lH/\delta)\bigg]+{8d^2H^2},\label{eq:31}
\end{align}
where the first inequality holds due to \eqref{eq:30} with the fact that $\bSigma_{k,h}^l$ does not change for $k_{i}< k\leq k_{i+1}$ and $i\leq 17dlH4^{l}+1$. Finally, taking an union bound for all level $l\in \NN$ and all stage $h\in[H]$, with probability at $1-\delta/2$, for all level $l\in \NN$, all stage $h\in[H]$ and all episode $k\in \NN$, we have
\begin{align}
   \bigg\|\sum_{i\in \cC_{k,h}^l}\bphi(s_h^i,a_h^i)\big[\vvalue_{k,h+1}^l(s_{h+1}^i)-[\PP_h\vvalue_{k,h+1}^l](s_h^i,a_h^i) \big]\bigg\|^2_{(\bSigma_{k,h}^l)^{-1}}\leq C d^2H^2l^2 \log(dlH\beta_l^2/\delta)\notag,
\end{align}
where $C$ is a large absolute constant. Thus, we finish the proof of Lemma \ref{lemma:mdp-concentration}.
\end{proof}

\subsection{Proof of Lemma \ref{lemma:mdp-transition}}\label{proof:lemma:mdp-transition}

\begin{lemma}\label{lemma:qvalue-linear}[Lemma B.1, \citep{jin2019provably}]
Under Assumption \ref{assumption-linear}, for any fixed policy $\pi$, there exists a series of vectors $\{\wb_h^{\pi}\}_{h=1}^H$, such that for all state-action pair $(s,a)\in \cS\times \cA$ and all stage $h\in[H]$, we have
\begin{align}
    \qvalue_h^{\pi}(s,a)=(\wb_h^{\pi})^{\top}\bphi(s,a),\|\wb_h^\pi\|\leq 2H\sqrt{d}.\notag
\end{align}
\end{lemma}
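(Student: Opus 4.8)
The plan is to construct each $\wb_h^\pi$ \emph{explicitly} from the Bellman equation for the policy $\pi$ together with the linear structure in Assumption \ref{assumption-linear}, and then bound its norm term by term. No induction on the linear representation is needed: since the value function $\vvalue_{h+1}^\pi$ is a well-defined function bounded in $[0,H]$ by the problem setup (independently of whether it admits a linear form), I can define $\wb_h^\pi$ directly in terms of $\vvalue_{h+1}^\pi$ for each stage $h$ in isolation. This is exactly the norm-quantified version of Proposition \ref{prop:linearq}.

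First I would invoke the Bellman equation \eqref{eq:bellman}, $\qvalue_h^\pi(s,a) = \reward_h(s,a) + [\PP_h \vvalue_{h+1}^\pi](s,a)$. Substituting the linear forms $\reward_h(s,a) = \la \bphi(s,a), \bmu_h\ra$ and $\PP_h(s'|s,a) = \la \bphi(s,a), \btheta_h(s')\ra$ from Assumption \ref{assumption-linear}, and writing $[\PP_h \vvalue_{h+1}^\pi](s,a) = \int_{\cS} \vvalue_{h+1}^\pi(s')\, \PP_h(ds'|s,a)$, the feature $\bphi(s,a)$ factors out of the integral by bilinearity of the inner product, yielding
\begin{align}
\qvalue_h^\pi(s,a) = \Big\la \bphi(s,a),\ \bmu_h + \int_{\cS} \vvalue_{h+1}^\pi(s')\, \btheta_h(ds')\Big\ra. \notag
\end{align}
This motivates setting $\wb_h^\pi := \bmu_h + \int_{\cS} \vvalue_{h+1}^\pi(s')\, \btheta_h(ds')$, where the integral denotes the $\RR^d$ vector whose $j$-th coordinate is $\int_{\cS} \vvalue_{h+1}^\pi(s')\, d\btheta_h^{(j)}(s')$. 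By construction $\qvalue_h^\pi(s,a) = (\wb_h^\pi)^\top \bphi(s,a)$ for every $(s,a)$, establishing linearity. (At $h=H$ one has $\vvalue_{H+1}^\pi = 0$, so $\wb_H^\pi = \bmu_H$, consistent with $\qvalue_H^\pi = \reward_H$.)

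For the norm bound I would apply the triangle inequality, $\|\wb_h^\pi\|_2 \le \|\bmu_h\|_2 + \big\|\int_{\cS} \vvalue_{h+1}^\pi(s')\, \btheta_h(ds')\big\|_2$. The first term is at most $\sqrt{d}$ by Assumption \ref{assumption-linear}. For the second, since $\vvalue_{h+1}^\pi(s') \in [0,H]$, each coordinate $|\int_{\cS} \vvalue_{h+1}^\pi\, d\btheta_h^{(j)}|$ is bounded by $H$ times the total variation of $\btheta_h^{(j)}$, and summing the squares gives $\big\|\int_{\cS} \vvalue_{h+1}^\pi(s')\, \btheta_h(ds')\big\|_2 \le H\|\btheta_h(\cS)\|_2 \le H\sqrt{d}$. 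Hence $\|\wb_h^\pi\|_2 \le (1+H)\sqrt{d} \le 2H\sqrt{d}$, where the last step uses $H \ge 1$.

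The argument is essentially routine; I expect no substantive obstacle. The only point requiring care is the measure-theoretic bookkeeping: interpreting $\int_{\cS} \vvalue_{h+1}^\pi(s')\, \btheta_h(ds')$ as integration of a bounded function against the vector measure $\btheta_h$, and reading the assumption $\|\btheta_h(\cS)\|_2 \le \sqrt{d}$ as the Euclidean norm of the vector of total-variation norms of the component measures. Once these conventions are fixed, the coordinatewise bound $|\int V\, d\btheta_h^{(j)}| \le H\, \mathrm{TV}(\btheta_h^{(j)})$ for $V \in [0,H]$ is immediate, and the rest is a short computation.
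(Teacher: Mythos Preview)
Your proof is correct and follows the standard construction from \cite{jin2019provably}: define $\wb_h^\pi = \bmu_h + \int_{\cS} \vvalue_{h+1}^\pi(s')\,\btheta_h(ds')$ via the Bellman equation and the linear structure, then bound each piece by the boundedness assumptions on $\bmu_h$ and $\btheta_h(\cS)$. Note that the present paper does not supply its own proof of this lemma; it is quoted verbatim as Lemma~B.1 of \cite{jin2019provably}, so there is nothing further to compare against here.
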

\begin{proof}[Proof of Lemma \ref{lemma:mdp-transition}]
For simplicity, let $\cC_{k,h}^l=\{k_1,..,k_m\}$ denote the index set $\cC_h^l$ at the beginning of episode $k$, where $1\leq k_1<k_2<..<k_m< k$. According to Lemma \ref{lemma:qvalue-linear}, for each fixed policy $\pi$, there exists a vector $\wb_h^{\pi}$ such that
\begin{align}
 (\wb_h^{\pi})^{\top}\bphi(s,a)=\qvalue_h^{\pi}(s,a)=\reward_h(s,a)+\big[\PP_h \vvalue_{h+1}^{\pi}\big](s,a).\label{eq:18}
\end{align}
 According to the definition of vector $\wb_{k,h}$ in Algorithm \ref{algorithm-mdp} (line \ref{algorithm2-line 7} to line \ref{algorithm2-line9}), we have \begin{align}
  \bSigma_{k,h}^l&=\lambda \Ib+\sum_{i=1}^{m} \bphi(s_h^{k_i},a_h^{k_i})\bphi(s_h^{k_i},a_h^{k_i})^\top,\notag\\
  \bbb_{k,h}^l&=\sum_{i=1}^{m} \bphi(s_h^{k_i},a_h^{k_i})\Big[\reward_h(s_h^{k_i},a_h^{k_i})+\vvalue_{k,h+1}^l(s_{h+1}^{k_i})\Big],\notag\\ \wb_{k,h}^l&=(\bSigma_{k,h}^l)^{-1} \bbb_{k,h}^l.\label{eq:19}
\end{align}
For simplicity, we omit the subscript $l$ and combining \eqref{eq:18} and \eqref{eq:19}, we have

\begin{align}
    \wb_{k,h} - \wb_{h}^{\pi}& = \bSigma_{k,h}^{-1}\sum_{i=1}^m\bphi(s_h^{k_i}, a_h^{k_i})\big[\reward_h(s_h^{k_i}, a_h^{k_i}) +   \vvalue_{k,h+1}(s_{h+1}^{k_i})\big]-  \wb_h^{\pi}\notag \\
    &=\bSigma_{k,h}^{-1}\bigg[-\lambda  \wb_h^{\pi}-\sum_{i=1}^m\bphi(s_h^{k_i}, a_h^{k_i})\bphi(s_h^{k_i}, a_h^{k_i})^{\top} \wb_h^{\pi} \notag \\
    &\qquad +\sum_{i=1}^m\bphi(s_h^{k_i}, a_h^{k_i})\big[\reward_h(s_h^{k_i}, a_h^{k_i}) +   \vvalue_{k,h+1}(s_{h+1}^{k_i})\big]\bigg]\notag\\
    &=\bSigma_{k,h}^{-1}\bigg[-\lambda  \wb_h^{\pi} + \sum_{i=1}^m\bphi(s_h^{k_i}, a_h^{k_i})\Big( \vvalue_{k,h+1}(s_{h+1}^{k_i})- [\PP_h\vvalue_{h+1}^{\pi}](s_h^{k_i}, a_h^{k_i})\Big) \bigg]\notag \\
    & = \underbrace{-\lambda \bSigma_{k,h}^{-1}  \wb_h^{\pi}}_{I_1} + \underbrace{ \bSigma_{k,h}^{-1}\sum_{i=1}^m\bphi(s_h^{k_i}, a_h^{k_i})\Big( \vvalue_{k,h+1}(s_{h+1}^{k_i})- [\PP_h\vvalue_{k,h+1}](s_h^{k_i}, a_h^{k_i})\Big)}_{I_2}\notag \\
    &\qquad + \underbrace{ \bSigma_{k,h}^{-1}\sum_{i=1}^m\bphi(s_h^{k_i}, a_h^{k_i})\big[\PP_h(\vvalue_{k,h+1} - \vvalue_{h+1}^{\pi})\big](s_h^{k_i}, a_h^{k_i})}_{I_3}\notag,
\end{align}
where the third equality holds due to \eqref{eq:18}.
For the term $I_1$ and any state-action pair $(s,a) \in \cS\times \cA$, we have
\begin{align}
    \Big|\big\la I_1, \bphi(s,a) \big\ra\Big| &=\big|\lambda \bphi(s,a)^{\top} \bSigma_{k,h}^{-1}  \wb_h^{\pi}\big| \notag\\
    &\leq \lambda\big\|\bphi(s,a)^{\top} \bSigma_{k,h}^{-1}\big\|_2\| \wb_h^{\pi}\|_2\notag\\ 
    &\leq \sqrt{\lambda}\| \wb_h^{\pi}\|_2\sqrt{\bphi(s,a)^\top\bSigma_{k,h}^{-1} \bphi(s,a)}\notag\\
    &\leq 2{H\sqrt{d\lambda}}\sqrt{\bphi(s,a)^\top\bSigma_{k,h}^{-1} \bphi(s,a)},\label{eq:i1}
\end{align}
where the first inequality holds due to Cauchy-Schwarz inequality, the second inequality holds due to $\bSigma_{k,h}\succeq \lambda I$ and the third inequality holds due to Lemma \ref{lemma:qvalue-linear}. For the term $I_2$ and any state-action pair $(s,a) \in \cS\times \cA$, according to Lemma \ref{lemma:mdp-concentration}, we have
\begin{align}
    \big|\la I_2, \bphi(s,a) \ra \big|&\leq  \sqrt{\bphi(s,a)^\top\bSigma_{k,h}^{-1} \bphi(s,a)} \notag\\
    &\qquad \cdot
    \bigg\|\sum_{i=1}^m\bphi(s_h^{k_i}, a_h^{k_i})\big[\vvalue_{k,h+1}(s_{h+1}^{k_i})-[\PP_h\vvalue_{k,h+1}](s_h^{k_i}, a_h^{k_i}) \big]\bigg\|_{\bSigma_{k,h}^{-1}}\notag\\
    &\leq C dHl\sqrt{ \log(dlH\beta_l^2/\delta)}\sqrt{\bphi(s,a)^\top\bSigma_{k,h}^{-1} \bphi(s,a)},\label{eq:i2}
\end{align}
where the first inequality holds due to Cauchy-Schwarz inequality and the second inequality holds due to Lemma \ref{lemma:mdp-concentration}. For the term $I_3$ and any state-action pair $(s,a) \in \cS\times \cA$, we have
\begin{align}
    \la \bphi(s,a), I_3\ra &=  \bigg\la \bphi(s,a), \bSigma_{k,h}^{-1}\sum_{i=1}^m\bphi(s_h^{k_i}, a_h^{k_i})\big[\PP_h(\vvalue_{k,h+1} - \vvalue_{h+1}^{\pi})\big](s_h^{k_i}, a_h^{k_i})\bigg\ra\notag \\
    & =  \bigg\la \bphi(s,a), \bSigma_{k,h}^{-1}\sum_{i=1}^m\bphi(s_h^{k_i}, a_h^{k_i})\bphi(s_h^{k_i}, a_h^{k_i})^\top \int (\vvalue_{k,h+1} - \vvalue_{h+1}^{\pi}) (s') d\bmu_h(s')\bigg\ra\notag\\ 
    & = \underbrace{ \bigg\la \bphi(s,a), \int (\vvalue_{k,h+1} - \vvalue_{h+1}^{\pi}) (s') d\bmu_h(s')\bigg \ra}_{J_1}\notag\\
    &\qquad-   \underbrace{\lambda    \bigg\la \bphi(s,a), \bSigma_{k,h}^{-1}\int (\vvalue_{k,h+1}-\vvalue_{h+1}^{\pi}) (s') d\bmu_h(s')\bigg \ra}_{J_2},\notag
\end{align}
For term $J_1$ , we have
\begin{align}
    J_1&=  \bigg\la \bphi(s,a), \int (\vvalue_{k,h+1} - \vvalue_{h+1}^{\pi}) (s') d\bmu_h(s')\bigg \ra\notag\\
    &=  \int\bigg\la \bphi(s,a), (\vvalue_{k,h+1} - \vvalue_{h+1}^{\pi}) (s') \bigg \ra d\bmu_h(s')\notag\\
    &=  \int \PP_h(s'|s,a)(\vvalue_{k,h+1} - \vvalue_{h+1}^{\pi}) (s') \ra d s'\notag\\\
    &= \big[\PP_h(\vvalue_{k,h+1} - \vvalue_{h+1}^{\pi})\big](s,a).\label{eq:j1}
\end{align} 
For term $J_2$, we have
\begin{align}
    \big|J_2\big|&=\lambda   \Bigg| \bigg\la \bphi(s,a), \bSigma_{k,h}^{-1}\int (\vvalue_{k,h+1} - \vvalue_{h+1}^{\pi}) (s') d\bmu_h(s')\bigg \ra\Bigg|\notag\\
    &\leq \lambda   \big\|\bphi(s,a)^{\top}\bSigma_{k,h}^{-1}\big\|_2 \bigg\|\int (\vvalue_{k,h+1} - \vvalue_{h+1}^{\pi}) (s') d\bmu_h(s')\bigg\|_2\notag\\
    &\leq \sqrt{d}\lambda  \big\|\bphi(s,a)^{\top}\bSigma_{k,h}^{-1}\big\|_2\max_{s'}\big|(\vvalue_{k,h+1} - \vvalue_{h+1}^{\pi}) (s')\big|\notag\\
    &\leq 2H\sqrt{d}\lambda \big\|\bphi(s,a)^{\top}\bSigma_{k,h}^{-1}\big\|_2\notag\\
    &\leq 2H\sqrt{d\lambda}  \sqrt{\bphi(s,a)^\top\bSigma_{k,h}^{-1} \bphi(s,a)},\label{eq:j2}
\end{align}
where the first inequality holds due to Cauchy-Schwarz inequality, the second inequality holds due to Assumption \ref{assumption-linear}, the third inequality holds because of  $\big|(\vvalue_{k,h+1} - \vvalue_{h+1}^{\pi}) (s')\big|\leq 2H$ and the last inequality holds due to $\bSigma_{k,h}\succeq \lambda I$.
Combining  \eqref{eq:i1},\eqref{eq:i2},\eqref{eq:j1},\eqref{eq:j2} with the fact that $\lambda=1$, we have
\begin{align}
    &\Big|\la \bphi(s,a), \wb_{k,h} \ra - \qvalue_h^{\pi}(s,a) -  \big[\PP_h(\vvalue_{k,h+1} - \vvalue_{h+1}^{\pi})\big](s,a)\Big|\notag\\
    &= |J_2+\la I_1,\bphi(s,a)\ra+\la I_2,\bphi(s,a)\ra|\notag\\
    &\leq \Big(C dHl\sqrt{ \log(dlH\beta_l^2/\delta)}+4H\sqrt{d}\Big) \sqrt{\bphi(s,a)^{\top}(\bSigma_{k,h})^{-1}\bphi(s,a)}.\notag
\end{align}
Notice that there exists a large constant $C'$ such that for all level $l\in \NN$ with parameter $\beta_l=C' dHl \sqrt{ \log(dlH/\delta)}$, the following inequality hods:
\begin{align}
    C dHl\sqrt{ \log(dlH\beta_l^2/\delta)}+4H\sqrt{d} \leq  C' dHl \sqrt{ \log(dlH/\delta)}.\label{eq:100}
\end{align}
When \eqref{eq:100} holds, we further have
\begin{align}
    &\big|\la \bphi(s,a), \wb_{k,t} \ra - \qvalue_h^{\pi}(s,a) -  \big[\PP_h(\vvalue_{k,h+1} - \vvalue_{h+1}^{\pi})\big](s,a)\big|\notag\\
    &\leq  \Big(C dHl\sqrt{ \log(dlH\beta_l^2/\delta)}+3H\sqrt{d}\Big) \sqrt{\bphi(s,a)^{\top}(\bSigma_{k,h})^{-1}\bphi(s,a)}\notag\\
    &\leq C' dHl \sqrt{ \log(dlH/\delta)}\sqrt{\bphi(s,a)^{\top}(\bSigma_{k,h})^{-1}\bphi(s,a)}\notag\\
    &= \beta_l\sqrt{\bphi(s,a)^{\top}(\bSigma_{k,h})^{-1}\bphi(s,a)}.\notag
\end{align}
Thus, we finish the proof of Lemma \ref{lemma:mdp-transition}.
\end{proof}

\subsection{Proof of Lemma \ref{lemma:mdp-upper-confidence}}\label{proof:lemma:mdp-upper-confidence}

\begin{proof}[Proof of Lemma \ref{lemma:mdp-upper-confidence}]
Now, we use induction to prove this lemma. First, we prove the base case. For all state $s\in \cS$ and level $l\in \NN$, we have $\vvalue_{k,H+1}^l(s)=0=\vvalue_{H+1}^*(s)$. Second, if $\vvalue_{k,h+1}^l(s)\ge \vvalue_{h+1}^*(s)$ holds for all state $s\in \cS$ and level $l\in \NN$ at stage $h+1$, then for any state $s\in \cS$ and level $l\in \NN$  at stage $h$, we have
\begin{align}
    (\wb_{k,h}^l)^{\top}\bphi(s,a)+\beta_l \sqrt{\bphi(s,a)^{\top}(\Sigma_{k,h}^l)^{-1}\bphi(s,a)}- \qvalue_{h}^{*}(s,a) \ge \big[\PP_h(\vvalue_{k,h+1}^l - \vvalue_{h+1}^{*})\big](s,a)\ge 0,\notag
\end{align}
where the first inequality holds due to Lemma \ref{lemma:mdp-transition} and the second inequality holds due to the induction assumption. Furthermore, the optimal value function is upper bounded by $\qvalue_{h}^{*}(s,a)\leq H$ and it implies that
\begin{align}
   \qvalue_{h}^{*}(s,a)\leq \min\Big((\wb_{k,h}^l)^{\top}\bphi(s,a)+\beta_l \sqrt{\bphi(s,a)^{\top}(\Sigma_{k,h}^l)^{-1}\bphi(s,a)},H \Big)=\qvalue_{k,h}^{l}(s,a). \label{eq:20}
\end{align}
Thus, for each level $l\in \NN$ and state $s\in \cS$, we have
\begin{align}
\vvalue_{k,h}^{l}(s)&=\max_{a} \min_{1\leq i\leq l} \qvalue_{k,h}^{i}(s,a)\notag\\
&\ge \max_{a} \min_{1\leq i\leq l} \qvalue_{h}^{*}(s,a)\notag\\
&=\max_{a} \qvalue_{h}^{*}(s,a)\notag\\
&=\vvalue_h^*(s),\notag
\end{align}
where the inequality holds due to \eqref{eq:20}. Finally, by induction, we finish the proof of Lemma \ref{lemma:mdp-upper-confidence}.
\end{proof}

\section{Auxiliary Lemmas}
\begin{lemma}[Azuma–Hoeffding inequality, \citep{cesa2006prediction}]\label{lemma:azuma}
Let $\{x_i\}_{i=1}^n$ be a martingale difference sequence with respect to a filtration $\{\cG_{i}\}$ satisfying $|x_i| \leq M$ for some constant $M$, $x_i$ is $\cG_{i+1}$-measurable, $\EE[x_i|\cG_i] = 0$. Then for any $0<\delta<1$, with probability at least $1-\delta$, we have 
\begin{align}
    \sum_{i=1}^n x_i\leq M\sqrt{2n \log (1/\delta)}.\notag
\end{align} 
\end{lemma}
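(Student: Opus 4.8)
The plan is to prove this tail bound by the standard exponential (Chernoff) method for martingales, reducing a probability estimate to the control of a moment generating function. Concretely, for any fixed $\lambda>0$ I would start from Markov's inequality applied to the exponentiated sum: for any threshold $t>0$,
\begin{align}
    \PP\bigg(\sum_{i=1}^n x_i \ge t\bigg) = \PP\Big(e^{\lambda \sum_{i=1}^n x_i}\ge e^{\lambda t}\Big)\le e^{-\lambda t}\,\EE\Big[e^{\lambda \sum_{i=1}^n x_i}\Big].\notag
\end{align}
This turns the whole problem into bounding the moment generating function $\EE[e^{\lambda\sum_i x_i}]$, after which I optimize over $\lambda$ and finally over $t$ to match the target confidence $\delta$.

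The second step is to peel off the terms of the sum one at a time using the tower property and the martingale-difference structure. The key observation is a measurability bookkeeping: since each $x_i$ is $\cG_{i+1}$-measurable, the partial sum $\sum_{i=1}^{j-1} x_i$ is $\cG_j$-measurable, so it can be pulled outside a conditional expectation given $\cG_j$. Writing $M_j = \EE[\exp(\lambda\sum_{i=1}^{j} x_i)]$ and conditioning on $\cG_j$, I would obtain
\begin{align}
    \EE\Big[e^{\lambda \sum_{i=1}^j x_i}\Big] = \EE\Big[e^{\lambda \sum_{i=1}^{j-1} x_i}\,\EE\big[e^{\lambda x_j}\,\big|\,\cG_j\big]\Big].\notag
\end{align}
Thus the whole moment generating function factorizes through the conditional one-step terms $\EE[e^{\lambda x_j}\mid\cG_j]$.

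The main obstacle, and the only genuinely technical step, is the conditional Hoeffding lemma: because $\EE[x_j\mid\cG_j]=0$ and $|x_j|\le M$ (so $x_j$ lives in the interval $[-M,M]$ of width $2M$), I would show the pointwise bound $\EE[e^{\lambda x_j}\mid\cG_j]\le \exp(\lambda^2 M^2/2)$. This follows from the convexity of $u\mapsto e^{\lambda u}$: bounding the exponential on $[-M,M]$ by the chord through the endpoints, taking conditional expectations, and then controlling the resulting function of $\lambda$ by a second-order Taylor/log-convexity argument (the worst case over the interval of width $2M$ contributes the factor $(2M)^2/8=M^2/2$). Substituting this conditional bound and iterating the peeling identity over $j=1,\dots,n$ yields $\EE[e^{\lambda\sum_i x_i}]\le \exp(n\lambda^2 M^2/2)$.

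Finally I would combine the pieces and optimize. Plugging the moment-generating-function bound into the Chernoff estimate gives $\PP(\sum_i x_i\ge t)\le \exp(-\lambda t + n\lambda^2 M^2/2)$; minimizing the exponent over $\lambda>0$ at $\lambda = t/(nM^2)$ produces the sub-Gaussian tail $\PP(\sum_i x_i\ge t)\le \exp(-t^2/(2nM^2))$. Setting the right-hand side equal to $\delta$ and solving for $t$ gives $t = M\sqrt{2n\log(1/\delta)}$, so with probability at least $1-\delta$ we have $\sum_{i=1}^n x_i\le M\sqrt{2n\log(1/\delta)}$, which is exactly the claim. I expect the exponential-method skeleton (first and last paragraphs) and the telescoping factorization (second paragraph) to be entirely routine, with the conditional Hoeffding bound being the one place where a careful convexity argument is required.
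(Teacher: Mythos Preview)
Your proof is correct and is the standard textbook argument for Azuma--Hoeffding via the Chernoff method and the conditional Hoeffding lemma. Note, however, that the paper does not actually prove this lemma: it is stated as an auxiliary result and simply cited from \cite{cesa2006prediction}, so there is no ``paper's own proof'' to compare against. Your write-up supplies exactly the classical derivation one finds in that reference.
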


\subsection{Proof of Lemma \ref{lemma:covering-number}}\label{proof:lemma:covering-number}
We need the following Lemma:
\begin{lemma}[Lemma D.5, \citep{jin2019provably}]\label{lemma:ball-cover}
For an Euclidean ball with radius $R$ in $\RR^d$, the  $\epsilon$-covering number of this ball is upper bounded by $(1+2R/\epsilon)^d$.
\end{lemma}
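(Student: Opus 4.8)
The plan is to prove this via the standard volume-comparison (packing) argument, which yields the bound without ever computing the dimension-dependent constant in the volume formula of a Euclidean ball. Throughout, write $B(\xb,r)$ for the closed Euclidean ball of radius $r$ centered at $\xb$, and let $B_R=B(\mathbf{0},R)$ be the ball we wish to cover; denote by $\mathcal{N}$ its $\epsilon$-covering number. First I would select a \emph{maximal} $\epsilon$-separated subset $\cC=\{\xb_1,\dots,\xb_N\}\subseteq B_R$, that is, a set whose pairwise distances are all at least $\epsilon$ and to which no further point of $B_R$ can be adjoined without violating this separation. Such a maximal set exists because the separation condition together with the boundedness of $B_R$ forces $N$ to be finite, so a largest such configuration can be chosen.

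The next step rests on two elementary observations. \emph{Covering:} by maximality, every point of $B_R$ lies within distance $\epsilon$ of some $\xb_i$, since otherwise that point would itself be $\epsilon$-separated from all of $\cC$ and could be added, contradicting maximality. Hence $\cC$ is an $\epsilon$-cover of $B_R$, giving $\mathcal{N}\le N$. \emph{Packing:} since the centers are pairwise at distance at least $\epsilon$, the open balls $B(\xb_i,\epsilon/2)$ are pairwise disjoint; and since $\|\xb_i\|_2\le R$, each such ball is contained in the enlarged ball $B(\mathbf{0},R+\epsilon/2)$. I would then compare volumes: writing $\mathrm{vol}\big(B(\cdot,r)\big)=\kappa_d\, r^d$ for the (unspecified) volume constant $\kappa_d$, disjointness and containment give
\[
N\,\kappa_d\,(\epsilon/2)^d \;\le\; \kappa_d\,(R+\epsilon/2)^d .
\]
The constant $\kappa_d$ cancels, so $N\le\big((R+\epsilon/2)/(\epsilon/2)\big)^d=(1+2R/\epsilon)^d$, and combining with $\mathcal{N}\le N$ completes the argument.

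There is essentially no hard step here; the proof is a clean duality between packing and covering. The only points deserving care are, first, the packing–covering principle that a \emph{maximal} $\epsilon$-separated set is automatically an $\epsilon$-cover, and second, the bookkeeping that the unknown volume constant $\kappa_d$ appears on both sides and cancels, so that the final bound is free of any dimensional prefactor. A minor technicality is the open-versus-closed distinction: with separation $\ge\epsilon$ the \emph{open} half-radius balls are disjoint, which is exactly what the volume inequality requires, while the centers themselves still form a valid $\epsilon$-net for $B_R$.
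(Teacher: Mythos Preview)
Your proof is correct and is precisely the classical packing/volume argument for covering numbers of Euclidean balls. The paper itself does not give a proof of this lemma; it simply cites it as Lemma~D.5 of \citet{jin2019provably} and uses it as a black box in the proof of Lemma~\ref{lemma:covering-number}. Your argument is the standard one that underlies the cited result, so there is nothing to compare against and no gap to report.
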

\begin{proof}[Proof of Lemma \ref{lemma:covering-number}]
For any two function $\vvalue_1, \vvalue_2 \in \mathcal{V}_l$, according to the definition of function class $\mathcal{V}_l$, we have
\begin{align}
  \vvalue_1(\cdot)=\max_{a}\min_{1\leq i\leq l}\min\bigg(H,\wb_{1,i}^\top\bphi(\cdot,a)+\beta_l\sqrt{\bphi(\cdot,a)^\top \bGamma_{1,i}\bphi(\cdot,a)}\bigg) ,\notag\\
    \vvalue_2(\cdot)=\max_{a}\min_{1\leq i\leq l}\min\bigg(H,\wb_{2,i}^\top\bphi(\cdot,a)+\beta_l\sqrt{\bphi(\cdot,a)^\top \bGamma_{2,i}\bphi(\cdot,a)}\bigg),\notag 
\end{align}
where $\|\wb_{1,i}\|_2,\|\wb_{2,i}\|_2\leq 9d2^l \sqrt{H^3l}$ and $\bGamma_{1,i}, \bGamma_{2,i} \preceq  \Ib$. Since all of the functions $\max_a$, $\min_{1\leq i\leq l}$ and $\min(H,\cdot)$ are contraction functions, we have
\begin{align}
    \text{dist}(\vvalue_1,\vvalue_2)&=\max_{s\in \cS} \big|\vvalue_1(s)-\vvalue_2(s)\big|\notag\\
    &\leq \max_{1\leq i\leq l, s\in \cS, a\in \cA}  
    \Big|  \wb_{1,i}^\top\bphi(s,a)+\beta_l\sqrt{\bphi(s,a)^\top \bGamma_{1,i}\bphi(s,a)}\notag\\
    &\qquad - \wb_{2,i}^\top\bphi(s,a)-\beta_l\sqrt{\bphi(s,a)^\top \bGamma_{2,i}\bphi(s,a)}\Big|\notag\\
    &\leq \beta_l \max_{1\leq i\leq l, s\in \cS, a\in \cA} \Big|\sqrt{\bphi(s,a)^\top \bGamma_{1,i}\bphi(s,a)}- \sqrt{\bphi(s,a)^\top \bGamma_{2,i}\bphi(s,a)} \Big|\notag\\
    &\qquad + \max_{1\leq i\leq l, s\in \cS, a\in \cA} \big |(\wb_{1,i}-\wb_{2,i})^{\top}\bphi(s,a)\big|\notag\\
    &\leq \beta_l \max_{1\leq i\leq l, s\in \cS, a\in \cA} \Big|\sqrt{\bphi(s,a)^\top (\bGamma_{1,i}-\bGamma_{2,i})\bphi(s,a)}\Big|\notag\\
    &\qquad +\max_{1\leq i\leq l, s\in \cS, a\in \cA} \big |(\wb_{1,i}-\wb_{2,i})^{\top}\bphi(s,a)\big|\notag \notag\\
    &\leq \beta_l \max_{1\leq i\leq l}\sqrt{\|\bGamma_{1,i}-\bGamma_{2,i}\|_{F}}+ \max_{1\leq i\leq l}\|\wb_{1,i}-\wb_{2,i}\|_2,\label{eq:27}
\end{align}
where the first inequality holds due to the contraction property, the second inequality holds due to the fact that $\max_{x}|f(x)+g(x)|\leq \max_{x}|f(x)|+\max_{x}|g(x)|$, the third inequality holds due to $|\sqrt{x}-\sqrt{y}|\ge |\sqrt{x}-\sqrt{y}|$ and the last inequality holds due to the fact that $\|\bphi(s,a)\|_2\leq 1$.
Now, we denote $\mathcal{C}_{\wb}$ as a $\epsilon/2$-cover of the set $\big\{\wb \in \RR^d\big| \|\wb\|_2\leq 9d2^l\sqrt{H^3l}\big\}$ and $\mathcal{C}_{\bGamma}$ as a $\epsilon^2/(4\beta_l^2)$-cover of the set $\{\bGamma\in \RR^{d\times d}\big|\|\bGamma\|_{F}\leq \sqrt{d} \}$ with respect to the Frobenius norms. Thus, according to Lemma \ref{lemma:ball-cover}, we have following property:
\begin{align}
    |\mathcal{C}_{\wb}|\leq \big(1+36d2^l\sqrt{H^3l}/\epsilon\big)^d, |\mathcal{C}_{\bGamma}|\leq \big(1+8\sqrt{d}\beta_l^2/\epsilon^2\big)^{d^2}.\label{eq:28}
\end{align}
By the definition of covering number, for any function $V_1\in \mathcal{V}_l$ with parameters $\wb_{1,i}, \bGamma_{1,i}(1\leq i\leq l)$, there exists other parameters $\wb_{2,i}, \bGamma_{2,i}(1\leq i\leq l)$ such that $\wb_{2,i} \in \mathcal{C}_{\wb}, \bGamma_{2,i} \in \mathcal{C}_{\bGamma}$ and $\|\wb_{2,i}-\wb_{1,i}\|_2\leq \epsilon/2, \|\bGamma_{2,i}-\bGamma_{1,i}\|_{F}\leq \epsilon^2/(4\beta_l^2)$. Thus, we have
\begin{align}
     \text{dist}(\vvalue_1,\vvalue_2)\leq \beta_l \max_{1\leq i\leq l}\sqrt{\|\bGamma_{1,i}-\bGamma_{2,i}\|_{F}}+ \max_{1\leq i\leq l}\|\wb_{1,i}-\wb_{2,i}\|_2 \leq \epsilon,\notag
\end{align}
where the inequality holds due to \eqref{eq:27}.
Therefore, the $\epsilon$-covering number of function class $\mathcal{V}_l$ is bounded by $\mathcal{N}_\epsilon\leq |\mathcal{C}_{\wb}|^l\cdot |\mathcal{C}_{\bGamma}|^l$ and it implies
\begin{align}
    \log \mathcal{N}_\epsilon &\leq dl \log (1+36d2^l\sqrt{H^3l}/\epsilon) + d^2l \log (1+8\sqrt{d}\beta_l^2/\epsilon^2),\notag
\end{align}
where the first inequality holds due to \eqref{eq:28}. Thus, we finish the proof of Lemma \ref{lemma:covering-number}.
\end{proof}

\end{document}